\newcommand\ip[2]{\langle #1, #2\rangle}
\newcommand\absip[2]{\left|\langle #1, #2\rangle\right|}
\newcommand\round[1]{\lfloor #1 \rceil}
\newcommand\floor[1]{\lfloor #1 \rfloor}
\newcommand\natoms{K}
\newcommand\sparsity{S}
\newcommand\eps{\varepsilon}
\newcommand\dico{\Phi}
\newcommand\atom{\phi}
\newcommand\pdico{\Psi}
\newcommand\ppdico{\bar\Psi}
\newcommand\patom{\psi}
\newcommand\ppatom{\bar\psi}
\newcommand\ratom{\gamma}
\newcommand\noise{r}
\newcommand\nsigma{\rho}
\newcommand\Sset{\mathbb{S}}
\newcommand\gap{{\gamma_{gap}}}
\newcommand\dynr{{\gamma_{dyn}}}
\newcommand\apperr{{\gamma_{app}}}
\newcommand\ncr{\gamma_{\nsigma}}
\newcommand\SNR{\operatorname{SNR}}
\newcommand\signop{\operatorname{sign}}
\newcommand\tr{\operatorname{tr}}
\newcommand{\R}{{\mathbb{R}}}
\newcommand{\E}{{\mathbb{E}}}
\newcommand{\I}{{\mathbb{I}}}
\renewcommand{\P}{{\mathbb{P}}}
\newcommand\toolboxlink{\url{https://www.uibk.ac.at/mathematik/personal/schnass/code/adl.zip}}
\begin{document}

\title{Dictionary learning - from local towards global and adaptive}

\author{\name Marie Christine Pali \email marie-christine.pali@uibk.ac.at\\
\name Karin Schnass \email karin.schnass@uibk.ac.at\\
\addr  Department of Mathematics\\
University of Innsbruck\\
Technikerstra\ss e 13\\
6020 Innsbruck, Austria}

\editor{}

\maketitle

\noindent
\begin{abstract}
This paper studies the convergence behaviour of dictionary learning via the Iterative Thresholding and K-residual Means (ITKrM) algorithm. On one hand it is proved that ITKrM is a contraction under much more relaxed conditions than previously necessary. On the other hand it is shown 
that there seem to exist stable fixed points that do not correspond to the generating dictionary, which can be characterised as very coherent. Based on an analysis of the residuals using these bad dictionaries, replacing coherent atoms with carefully designed replacement candidates is proposed. In experiments on synthetic data this outperforms random or no replacement and always leads to full dictionary recovery. Finally the question how to learn dictionaries without knowledge of the correct dictionary size and sparsity level is addressed. Decoupling the replacement strategy of coherent or unused atoms into pruning and adding, and slowly carefully increasing the sparsity level, leads to an adaptive version of ITKrM. In several experiments this adaptive dictionary learning algorithm is shown to recover a generating dictionary from randomly initialised dictionaries of various sizes on synthetic data and to learn meaningful dictionaries on image data.
\end{abstract}

\begin{keywords}
\noindent dictionary learning, sparse coding, sparse component analysis, Iterative Thresholding and K-residual Means (ITKrM), replacement, adaptive dictionary learning, parameter estimation.
\end{keywords}

\section{Introduction}\label{sec:intro}

\noindent
The goal of dictionary learning is to decompose a data matrix $Y = (y_1,\ldots, y_N)$, where $y_n \in \mathbb{R}^d$, into
a dictionary matrix $\Phi = (\varphi_1,\ldots,\varphi_K)$, where each column also referred to as atom is normalised, $\|\varphi_k\|_2=1$
and a sparse coefficient matrix $X= (x_1,\ldots,x_N)$,
\begin{align}\label{abstractdl}
	\quad Y \approx \dico X \quad \mbox{and}\quad X \mbox{ sparse}.
\end{align}
The compact data representation provided by a dictionary can be used for data restoration, such as denoising or reconstruction from incomplete information, \cite{doelte06,masazi08,mabapo12} and data analysis, such as blind source separation, \cite{olsfield96, lese00,krra00, kreutz03}. Due to these applications dictionary learning is of interest to both the signal processing community, where it is also known as sparse coding, and the independent component analysis (ICA) and the blind source separation (BSS) community, where it is also known as sparse component analysis. 
It also means that there are not only many algorithms to choose from, \cite{olsfield96, ahelbr06, enaahu99, kreutz03, lese00, mabaposa10, sken10, mabapo12, rudu12, nayeoz16}, but also that theoretical results have started to accumulate, \cite{grsc10, spwawr12, argemo13, aganjaneta13, sc14, sc14b, bagrje14, bakest14, argemamo15, sc15, suquwr17a, suquwr17b, bach17,quzhli19}. As our reference list grows more incomplete every day, we point to the surveys \cite{rubrel10, sc15imn} as starting points for digging into algorithms and theory, respectively.
\\
One way to concretise the abstract formulation of the dictionary learning problem in \eqref{abstractdl} is to formulate it as optimisation programme.
For instance, choosing a sparsity level~$S$ and a dictionary size~$K$, we define $\mathcal X_S$ to be the set of all columnwise $S$-sparse coefficient matrices, $\mathcal D_K$ to be the set of all dictionaries with $K$ atoms and for some $p\geq 1$ try to find
\begin{align}
	\underset{\Psi \in \mathcal D_K, X\in \mathcal X_S}{\operatorname{argmin}} \sum_n \| y_n - \pdico x_n \|_2^p. 
\end{align}
Unfortunately this problem is highly non-convex and as such difficult to solve even in the simplest and most commonly used case $p=2$. However, randomly initialised alternating projection algorithms, which 
alternate between (trying to) find the best dictionary $\Psi$, based on coefficients $X$, and the best coefficients $X$,
based on a dictionary $\Psi$, such as K-SVD (K Singular Value Decompositions) for $p=2$, \cite{ahelbr06}, and ITKrM (Iterative Thresholding and K residual Means) related to $p=1$, \cite{sc15}, tend to be very successful on synthetic data - usually recovering 90 to 100\% of all atoms - and to provide useful dictionaries on image data.\\
Apart from needing both the sparsity level and the dictionary size as input, the main drawback of these algorithms is that - assuming that the data $Y$ is synthesized from a generating dictionary~$\Phi$ and randomly drawn $S$-sparse coefficients $X$ - they have almost no (K-SVD) or comparatively weak (ITKrM) theoretical dictionary recovery guarantees. This is in sharp contrast to more involved algorithms, which - given the correct $S,K$ - have gobal recovery guarantees but due to their computational complexity can at best be used in small toy examples, \cite{argemo13, aganne13, bakest14}.\\
There are two interesting exceptions. In \cite{suquwr17a, suquwr17b}, Sun, Qu and Wright study an algorithm based on gradient descent with a Newton trust region method to escape saddle points and prove recovery if the generating dictionary is a basis. In \cite{quzhli19}, Qu et. al. study an $\ell^4$-norm opimisation programme with spherical constraints for overcomplete dictionary learning. They show that every local minimizer is close to an atom of the target dictionary and that around every saddle point is a large region with negative curvature. These results together with several results in machine learning which prove that non-convex problems can be well behaved, meaning all local minima are global minima, give rise to hope that a similar result can be proved for learning overcomplete dictionaries via alternating projection. \\
{\bf Contribution:} In this paper we first study the contractive areas of ITKrM and show that the algorithm contracts towards the generating dictionary under much relaxed conditions compared to those from \cite{sc15}. We then have a closer a look at experiments where the learned dictionaries do not coincide with the generating dictionary. These spurious dictionaries, which at least experimentally are fixed points, have a very special structure that violates the theoretical conditions for contractivity, that is, they contain two nearly identical atoms. Unfortunately, these experimental findings indicate that for alternating projection methods not all fixed points correspond to the generating dictionary. \\
However, based on an analysis of the residuals at dictionaries with the discovered structure, we develop a strategy for finding good candidates to replace coherent atoms.
With the help of these replacement candidates, we then tackle one of the most challenging problems in dictionary learning - the automatic choice of the sparsity level $S$ and the dictionary size $K$. This leads to a version of ITKrM that adapts both the sparsity level and the dictionary size in each iteration. Synthetic experiments show that the resulting algorithm is able recover a generating dictionary without prescribing its size or the sparsity level even in the presence of noise and outliers. Complementary experiments on image data further show that the algorithm learns sensible dictionaries even in practice, where several synthetic assumptions such as homogeneous use of all atoms are unlikely to hold.
\\
{\bf Organisation:}
In the next section we summarise our notational conventions and introduce the sparse signal model on which all our theoretical results are based.
In Section~\ref{sec:behaviour} we familiarise the reader with the ITKrM algorithm and existing convergence results. We analyse the limitations of existing proofs, develop strategies to overcome them and prove that ITKrM is a contraction towards the generating dictionary on an area much larger than indicated by the convergence radius in \cite{sc15}. 
To see whether the non-contractive areas are only an artefact of our proof strategy, we conduct several small experiments. These show that indeed there are fixed points of ITKrM, which are not equivalent to the generating dictionary through reordering and sign flips and violate our conditions for contractivity; in particular, they are very coherent. 
In Section~\ref{sec:replace} we analyse the residuals at such bad dictionaries and use those insights to develop a strategy for learning good replacement candidates for coherent atoms. The resulting algorithm is then tested and compared to random replacement on synthetic data.\\
In Section~\ref{sec:adaptive} we then address the big problem how to learn dictionaries without being given the generating sparsity level and dictionary size. This is done by slowly increasing the sparsity level and by decoupling the replacement strategy into separate pruning of the dictionary and adding of promising replacement candidates. Numerical experiments show that the resulting algorithm can indeed recover the generating dictionary from initialisations with various sizes on synthetic data and learn meaningful dictionaries on image data. \\
In the last section we will sketch how the concepts leading to adaptive ITKrM can be extended to other algorithms such as K-SVD or MOD. Finally, based on a discussion of our results, we will map out future directions of research.

\section{Notations and Sparse Signal Model}\label{sec:notations}

Before we hit the strings, we will fine tune our notation and introduce some definitions. Usually subscripted letters will denote vectors with the exception of $\eps, \alpha, \omega$, where they are numbers, for instance $x_n \in \R^K$ vs. $\eps_k \in \R$, however, it should always be clear from the context what we are dealing with. \\
For a matrix $M$ we denote its (conjugate) transpose by $M^\star$ and its Moore-Penrose pseudo-inverse by $M^\dagger$. We denote its operator norm by $\|M\|_{2,2}=\max_{\|x\|_2=1}\|Mx\|_2$ and its Frobenius norm by $\|M\|_F= \tr(M^\star M)^{1/2}$, remember that we have $\|M\|_{2,2}\leq \|M\|_F$.\\
We consider a {\bf dictionary} $\dico$ a collection of $K$ unit norm vectors $\atom_k\in \R^d$, $\|\atom_k\|_2=1$. By abuse of notation we will also refer to the $d \times K$ matrix collecting the atoms as its columns as the dictionary, that is, $\dico=(\atom_1, \ldots \atom_K)$. The maximal absolute inner product between two different atoms is called the {\bf coherence} $\mu(\dico)$ of a dictionary, $\mu(\dico)=\max_{k \neq j}|\ip{\atom_k}{\atom_j}|$.\\
By $\dico_I$ we denote the restriction of the dictionary to the atoms indexed by $I$, that is, $\dico_I=(\atom_{i_1},\ldots, \atom_{i_\sparsity} )$, $i_j\in I$, and by $P(\dico_I)$ the orthogonal projection onto the span of the atoms indexed by $I$, that is, $P(\dico_I)=\dico_I \dico_I^\dagger$. Note that in case the atoms indexed by $I$ are linearly independent we have $\dico_I^\dagger = (\dico_I^\star \dico_I)^{-1} \dico_I^\star$. We also define $Q(\dico_I)$ to be the orthogonal projection onto the orthogonal complement of the span of $\dico_I$, that is, $Q(\dico_I) = \I_d - P(\dico_I)$, where $\I_d$ is the identity operator (matrix) in $\R^d$.\\
(Ab)using the language of compressed sensing we define $\delta_I(\dico)$ as the smallest number such that all eigenvalues of $\dico^\star_I\dico_I$ are included in $[1-\delta_I(\dico), 1+\delta_I(\dico)]$ and the {\bf isometry constant} $\delta_S(\dico)$ of the dictionary as $\delta_S(\dico):=\max_{|I|\leq S} \delta_I(\dico)$. When clear from the context we will usually omit the reference to the dictionary. For more details on isometry constants see for instance \citep{carota06}.\\
For a (sparse) signal $y = \sum_k \atom_k x_k$ we will refer to the indices of the $S$ coefficients with largest absolute magnitude as the $S$-support of $y$. Again, we will omit the reference to the sparsity level $S$ if clear from the context.\\
To keep the sub(sub)scripts under control we denote the {\bf indicator function of a set} $\mathcal V$ by $\chi(\mathcal V,\cdot)$, that is $\chi(\mathcal V, v)$ is one if $v \in \mathcal V$ and zero else. The set of the first $S$ integers we abbreviate by $\mathbb{S} = \{1,\ldots, S\}$.\\
We define the {\bf distance} of a dictionary $\pdico$ to a dictionary $\dico$ as
\begin{align}
	d(\dico,\pdico):=\max_k \min_\ell \|\atom_k \pm \patom_\ell\|_2 = \max_k \min_\ell \sqrt{2-2|\ip{\atom_k}{\patom_\ell}|}.
\end{align}
Note that this distance is not a metric since it is not symmetric. For example, if $\dico$ is the canonical basis and $\pdico$ is defined by $\patom_i=\atom_i$ for $i\geq 3$, $\patom_1=(e_1 + e_2)/\sqrt{2}$, and $\patom_2=\sum_i \atom_1/\sqrt{d}$ then we have $d(\dico,\pdico)= 1/\sqrt{2}$ while $d(\pdico,\dico)=\sqrt{2-2/\sqrt{d}}$. The advantage is that this distance is well defined also for dictionaries of different sizes.
A {\bf symmetric distance} between two dictionaries $\dico,\pdico$ of the same size could be defined as the maximal distance between two corresponding atoms, that is,
\begin{align}
	d_s(\dico,\pdico):=\min_{p \in \mathcal P} \max_k \|\atom_k\pm \patom_{p(k)}\|_2,
\end{align}
where $\mathcal P$ is the set of permutations of $\{1,\ldots, K\}$. The distances are equivalent whenever there exists a permutation $p$ such that after rearrangement, the cross-Gram matrix $\dico^\star \pdico$ is diagonally dominant, that is, $\min_ k \absip{\atom_k}{\patom_k} > \max_{k \neq j} \absip{\atom_k}{\patom_j}$. Since the main assumption for our results will be such a diagonal dominance we will state them in terms of the easier to calculate asymmetric distance and assume that $\pdico$ is already signed and rearranged in a way that $d(\dico,\pdico)=\max_k \|\atom_k-\patom_k\|_2$. We then use the abbreviations $\alpha_{\min}  = \min_ k \absip{\atom_k}{\patom_k}$ and $\alpha_{\max} = \max_ k \absip{\atom_k}{\patom_k}$.
The maximal absolute inner product between two non-corresponding atoms will be called the {\bf cross-coherence} $\mu(\dico,\pdico)$ of the two dictionaries, $\mu(\dico,\pdico)=\max_{k \neq j}|\ip{\atom_k}{\patom_j}|$.\\
We will also use the following decomposition of a dictionary $\pdico$ into a given dictionary $\dico$ and a perturbation dictionary $Z$. If $d(\pdico,\dico)=\eps$ we set $\|\patom_k - \atom_k\|_2=\eps_k$, where by definition $\max_k \eps_k = \eps$. We can then find unit vectors $z_k$ with $\langle \atom_k,z_k\rangle = 0$ such that 
\begin{align}\label{atomdecomp}
	\patom_k = \alpha_k \atom_k + \omega_k z_k, \quad \mbox{for}, \quad\alpha_k:= 1-\eps^2_k/2  \quad\mbox {and} \quad\omega_k := (\eps_k^2 - \eps_k^4/4)^{\frac{1}{2}}.
\end{align}
Note that if the cross-Gram matrix $\dico^\star \pdico$ is diagonally dominant we have $\alpha_{\min}= \min_k \alpha_k$, $\alpha_{\max}= \max_k \alpha_k$ and $d(\pdico,\dico) = \sqrt{2-2\alpha_{\min}}$.

\subsection{Sparse signal model}\label{sec:signalmodel}

As basis for our results we use the following signal model, already used in \citep{sc14, sc14b, sc15}. 
Given a $d\times K$ dictionary $\dico$, we assume that the signals are generated as
\begin{align}\label{noisymodel1}
y=\frac{ \dico x +\noise}{\sqrt{1+\|\noise \|_2^2}},
\end{align}
where $x\in \R^K$ is a sparse coefficient sequence and $\noise\in \R^d$ is some noise. We assume that $\noise$ is a centered subgaussian vector with parameter $\nsigma$, that is, $\E(\noise) = 0$ and for all vectors $v$ the marginals $\ip{v}{\noise}$ are subgaussian with parameter $\nsigma$, meaning they satisfy $\E (e^{t \ip{v}{\noise}}) \leq e^{t^2 \nsigma^2/2}$ for all $t>0$.\\
To model the coefficient sequences $x$ we first assume that there is a measure $\nu_c$ on a subset $\mathcal C$ of the positive, non increasing sequences with unit norm, meaning for $c \in\mathcal C$ we have $c(1)\geq c(2) \ldots \geq c(K) > S$ and $\|c\|_2=1$. A coefficient sequence $x$ is created by drawing a sequence $c$ according to $\nu_c$, and both a permutation $p$ and a sign sequence $\sigma$ uniformly at random and setting $x =x_{c, p, \sigma}$, where $x_{c, p, \sigma}(k)= \sigma(k) c(p(k))$. The signal model then takes the form
\begin{align}\label{noisymodel2}
	y=\frac{ \dico x_{c, p,\sigma} +\noise}{\sqrt{1+\|\noise \|_2^2}}.
\end{align}
Using this model it is quite simple to incorporate sparsity via the measure $\nu_c$. To model approximately $S$-sparse signals we require
that the $S$ largest absolute coefficients, meaning those inside the support $I = p^{-1}(\Sset)$, are well balanced and much larger than the remaining ones outside the support. Further, we need that the expected energy of the coefficients outside the support is relatively small and that the sparse coefficients are well separated from the noise. Concretely we require that almost $\nu_c$-surely we have
\begin{align}
	\frac{c(1)}{c(S)} \leq \dynr, \qquad \frac{c(S+1)}{c(S)} \leq \gap, \qquad \frac{\| c(\Sset^c)\|_2}{c(1)}  \leq \apperr \qquad \mbox{and} \qquad \frac{\rho}{c(S)} \leq \ncr.
\end{align}
We will refer to the worst case ratio between coefficients inside the support, $\dynr$, as dynamic (sparse) range and to the worst case ratio between coefficients outside the support to those inside the support, $\gap$, as the (sparse) gap. Since for a noise free signal the expected squared sparse approximation error is
$$\E(\| \sum_{k \notin I } \sigma(k) c(p(k))\atom_k \|^2_2 )= \| c(\Sset^c)\|^2_2,$$ 
we will call $\apperr$ the relative (sparse) approximation error. Finally, $\ncr$ is called the noise to (sparse) coefficient ratio. \\
Apart from these worst case bounds we will also use three other signal statistics,
\begin{align}
	\gamma_{1,S} :=  \E_c\left(\| c(\Sset)\|_1)\right), \qquad \gamma_{2,S} :=  \E_c\left(\| c(\Sset)\|_2^2\right), \qquad C_r := \E_r\left( \frac{1}{\sqrt{1+\|r\|_2^2}}\right).
\end{align}
The constant $\gamma_{1,S}$ helps to characterise the average size of the sparse coefficients, $\gamma_{1,S} = \E( |x_i| : i\in I) \cdot S \leq \sqrt{S}$, while $\gamma_{2,S}$ characterises the average sparse approximation quality, $\gamma_{2,S}=\E( \|\dico_I x_I\|_2^2 )\leq 1$. The noise constant can be bounded by 
\begin{align}\label{Cr_bound}
	C_r \geq \frac{1- e^{-d}}{\sqrt{1+5d\nsigma^2}},
\end{align}
and for large $\nsigma$ approaches the signal-to-noise ratio, $C^2_r \approx \frac{1}{d\nsigma^2} \approx \frac{\E(\| \dico x \|_2^2 ) }{\E(\|r\|_2^2) }$, see \citep{sc14b} for details.\\
To get a better feeling for all the involved constants, we will calculate them for the case of perfectly sparse signals where $c(i)=1/\sqrt{S}$ for $i\leq S$ and $c(i)=0$ else. We then have $\dynr =1$, $\gap =0$ and $\apperr=0$ as well as $\gamma_{1,S}=\sqrt{S}$ and $ \gamma_{2,S}=1$. In the case of noiseless signals we have $C_r =1$ and $\ncr = 0$. In the case of Gaussian noise the noise-to-coefficient ratio is related to the signal-to-noise ratio via $\SNR =  S/(\ncr^{2} d)$.

\section{Global behaviour patterns of ITKrM}\label{sec:behaviour}

The iterative thresholding and K residual means algorithm (ITKrM) was introduced in \citep{sc15} as modification of its much simpler predecessor ITKsM, which uses signal means instead of residual means. As can be seen from the summary in Algorithm~\ref{algo:itkrm} the signals can be processed sequentially, thus making the algorithm suitable for an online version and parallelisation. 
\begin{algorithm}[bt]
	\small
	\BlankLine
	\caption{ITKrM (one iteration)} \label{algo:itkrm}
	\BlankLine
	\KwIn{$\pdico, Y, S$ \tcp*{dictionary, signals, sparsity}}  
	\BlankLine
	Initialise: $\ppdico = 0$ \;
	\BlankLine
	\ForEach{$ n $}{
		\BlankLine
		$I_{n}^t= \arg\max_{I: | I |=S} \| \pdico_I^\star y_n\|_1$ \tcp*{thresholding}
		$ a_n =y_n - P(\pdico_{I_n^t}) y_n$ \tcp*{residual}
		\ForEach{$k \in I_{n}^t$}{
			$\ppatom_{k} \leftarrow \ppatom_{k} + \big[a_n + P(\patom_k) y_n\big] \cdot \signop(\ip{\patom_k}{y_n})$ \tcp*{atom update}
		}
	}
	\BlankLine
	$\pdico \leftarrow \left( \ppatom_{1}/\| \ppatom_{1} \|_2, \dots, \ppatom_{\natoms}/\| \ppatom_{\natoms} \|_2 \right)$ \tcp*{atom normalisation}
	\BlankLine
	\KwOut{$\pdico$ }
	
\end{algorithm}
The determining factors for the computational complexity are the matrix vector products $\pdico^\star y_n$ between the current estimate of the dictionary $\pdico$ and the signals, $O(dKN)$, and the projections $P(\pdico_{I_{n}^t}) y_n$. If computed with maximal numerical stability these would have an overall cost $O(S^2 dN)$, corresponding to the QR decompositions of $\pdico_{I^t_n}$. However, since usually the achievable precision in the learning is limited by the number of available training signals rather than the numerical precision, it is computationally more efficient to precompute the Gram matrix $\pdico^\star \pdico$ and calculate the projections less stably via the eigenvalue decompositions of $\pdico_{I^t_n}^\star \pdico_{I^t_n}$, corresponding to an overall cost $O(S^3N)$. Another good property of the ITKrM algorithm is that it is proven to converge locally to a generating dictionary. This means that if the data is homogeneously $S$-sparse in a dictionary $\dico$, where $S\lesssim \mu^{-2}$, and we initialise with a dictionary $\pdico$ within radius $O(1/\sqrt{S})$, $d(\pdico,\dico)\lesssim 1/\sqrt{S}$, then ITKrM using $N = O(K\log K)$ samples in each iteration will converge to the generating dictionary, \citep{sc15}. In simulations on synthetic data ITKrM shows even better convergence behaviour. Concretely, if the atoms of the generating dictionary are perturbed with vectors $z_k$ chosen uniformly at random from the sphere, $\patom_k = \alpha_k \atom_k + \omega_k z_k$, ITKrM converges also for ratios $\alpha_k: \omega_k = 1: 4$. For completely random initialisations, $\patom_k=z_k$, it finds between 90\% and 100\% of the atoms - depending on the noise and sparsity level.\\
Last but not least, ITKrM is not just a pretty toy with theoretical guarantees but on image data produces dictionaries of the same quality as K-SVD in a fraction of the time, \citep{nasc18}.\\
Considering the good practical performance of ITKrM, it is especially frustrating that we only get a convergence radius of size $O(1/\sqrt{S})$, while for its simpler cousin ITKsM, which when initialised randomly performs much worse both on synthetic and image data, we can prove a convergence radius of size $O(1/\sqrt{\log K})$. Therefore, in the next section we will take a closer look at the two algorithms and the differences in the convergence proofs. This will allow us to show that ITKrM behaves well on a much larger area.

\subsection{Contractive areas of ITKrM}

To better understand the idea behind the convergence proofs we first rewrite the atom update formula before normalisation, which for one iteration of ITKrM becomes
\begin{align*}
\ppatom_k = \sum_{n:k\in I_n^t} \big[\I_d - P(\pdico_{I_{n}^t})+ P(\patom_k)\big]y_n \cdot \signop(\ip{\patom_k}{y_n}) ,
\end{align*}
while for ITKsM we can take the formula above and simply ignore the operators in the square brackets. Adding and replacing some terms we expand the sum as
\begin{align*}
&\left.
\begin{array}{l}\displaystyle
\ppatom_k\quad = \quad \sum_{n:k\in I_n^t}\big[\I_d - P(\pdico_{I_{n}^t})+ P(\patom_k)\big]y_n \cdot \signop(\ip{\patom_k}{y_n}) \\
\displaystyle \hspace{4.2cm}  - \sum_{n: k\in I_n}\big[\I_d - P(\pdico_{I_{n}})+ P(\patom_k)\big]y_n \cdot \sigma_n(k)
\end{array}
\quad \right\} \text{$S_1$}
\\
&\left.
\begin{array}{l}\displaystyle
\phantom{\ppatom_k}\quad \quad + \sum_{n:k\in I_n}\big[\I_d - P(\pdico_{I_{n}})+ P(\patom_k)\big]y_n \cdot \sigma_n(k)\quad
\\
\displaystyle
\hspace{4.2cm} - \sum_{n:k\in I_n}\big[\I_d - P(\dico_{I_{n}})+ P(\atom_k)\big]y_n \cdot \sigma_n(k)
\end{array}
\quad\right\} \text{$S_2$}
\\
&\left.
\begin{array}{l}\displaystyle
\phantom{\ppatom_k}\quad  \quad+ \sum_{n: k\in I_n} \big[y_n - P(\dico_{I_{n}}) y_n + P(\atom_k) y_n\big] \cdot \sigma_n(k).\hspace{2.4cm}
\end{array} 
\quad \:\right\} \text{$S_3$}
\end{align*}
The term $S_1$ captures the errors thresholding makes in estimating the supports $I_n$ and signs $\sigma_n(k)$ when using the current estimate $\pdico$. It is (sufficiently) small as long as $d(\dico,\pdico) \lesssim 1/\sqrt{\log K}$. The second term $S_2$ captures the difference between the residual using the current estimate and the true dictionary, which is small as long as $d(\dico,\pdico) \lesssim 1/\sqrt{S}$. In expectation the last term is simply a multiple of the true atom $\atom_k$, so as long as the number of signals~$N$ is large enough, the last term will concentrate arbitrarily close to $\atom_k$.\\
As we can see, the main constraint on the convergence radius for ITKrM stems from the second term $S_2$, which simply vanishes in case of ITKsM. The problem is that we need to invert the $S\times S$ matrix $\pdico_{I_{n}}^\star \pdico_{I_n}$, which is a perturbed version of the matrix $\dico_{I_{n}}^\star \dico_{I_n}$. If the difference between the dictionaries scales as $d(\dico,\pdico) \approx 1/\sqrt{S}$, there exist perturbations such that $\pdico_{I_{n}}^\star \pdico_{I_n}$ is ill conditioned even if $\dico_{I_{n}}^\star \dico_{I_n}$ is not. \\
However, if the current dictionary estimate $\pdico$ is itself a well-conditioned and incoherent matrix, results on the conditioning of random subdictionaries, \cite{tr08, chda12}, tell us that for most possible supports $I_n$, $\pdico_{I_{n}}^\star \pdico_{I_n}$ will be close to the identity as long as $S\lesssim d/\log{K}$.
This means that the term $S_2$ should be small as long as the current estimate $\pdico$ is well-conditioned and incoherent, a property that we can check after each iteration.\\
Therefore, the next question is if also the first term $S_1$ can be controlled for a larger class of dictionaries $\pdico$. In our previous estimates we bounded the error per atom by the probability of thresholding failing multiplied with the norm bound on the difference of the projections. While simple, this strategy is quite crude as it assigns any error of thresholding to all atoms. However, an atom $\bar\patom_k$ is only affected by a thresholding error if either $k$ was in the original support or if $k$ is not in the original support but is included in the thresholded support. Further, we can take into account that by perturbing an atom $\atom_k$, meaning $\patom_k = \alpha_k \atom_k+  \omega_k z_k$, its coherence to one other atom $\atom_\ell$ may increase dramatically - to the point of it being a better approximant than $\patom_\ell$, that is, if $z_k \approx \atom_\ell$ we get $\ip{\atom_k}{\atom_\ell} \ll \ip{\patom_k}{\atom_\ell} \approx \ip{\patom_\ell}{\atom_\ell}$. However, if the original $\dico$ itself is well-conditioned, $\patom_k$ cannot become coherent to all (many) other atoms.\\
Indeed using both of these ideas we get a refined result characterising the contractive areas of ITKrM. To keep the flow of the paper we will state it in an informal version and refer the reader to Appendix~\ref{appendix} for the exact statement and its proof.

\begin{theorem}\label{maintheorem}
	Assume that the sparsity level of the training signals scales as $S \lesssim \mu(\dico)^{-2} /\log K$ and that the number of training signals scales as $N \approx SK\log K $. Further, assume that the coherence and operator norm of the current dictionary estimate $\pdico$ satisfy,
	\begin{align}\label{pdico_mu_B}
	\mu(\pdico) \lesssim \frac{1}{\log K} \quad \mbox{ and } \quad \| \pdico \|_{2,2}^2 \lesssim \frac{K}{ S \log K} .
	\end{align}
	If the distance of $\pdico$ to the generating dictionary $\dico$ satisfies either \\
	\begin{enumerate}
		\item[\it a)] $\frac{1}{\sqrt{S}}\lesssim d(\pdico,\dico) \lesssim \frac{1}{\sqrt{ \log K}}$ or\\
		\item[\it b)] $d(\pdico,\dico) \gtrsim \frac{1}{\sqrt{ \log K}}$ but the cross-Gram matrix $\dico^\star \pdico$ is diagonally dominant in the sense that
		\begin{align}\label{pdico_cross}
		\min_k|\ip{\atom_k}{\patom_k}|& \gtrsim \log K\cdot  \max\left \{ \mu(\dico,\pdico), \| \dico \|_{2,2}\sqrt{S/(K\log K)}\right\},
		\end{align}
	\end{enumerate}
	then one iteration of ITKrM will reduce the distance by at least a factor $\kappa < 1$, meaning
	\begin{align}
	d(\ppdico,\dico) < \kappa \cdot  d(\pdico,\dico).\notag
	\end{align}
\end{theorem}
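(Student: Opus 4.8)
The plan is to follow the ``expectation plus concentration'' template of the original ITKrM analysis in \citep{sc15}, sharpening every crude bound there so that the contraction survives on the much larger region described in (a) and (b). Fix an atom index $k$ and write the unnormalised update as an average of \iid summands,
\begin{align}
\ppatom_k \;=\; \frac{1}{N}\sum_{n:\, k\in I_n^t}\big[\,y_n - P(\pdico_{I_n^t})y_n + P(\patom_k)y_n\,\big]\,\signop(\ip{\patom_k}{y_n}) \;=:\; \frac{1}{N}\sum_n Z_n^{(k)}.
\end{align}
I would split each $Z_n^{(k)}$ according to the behaviour of thresholding: the ``good'' event $\mathcal G_k$ on which $I_n^t$ contains $k$ and otherwise agrees with (a subset of) the generating support $I_n=p_n^{-1}(\Sset)$, and its complement. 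The proof then rests on three blocks — identifying $\E(\ppatom_k)$, bounding the contribution of the complement of $\mathcal G_k$, and concentrating $\ppatom_k$ around its mean — which are combined at the very end via a normalisation estimate.

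For the first block I would establish, uniformly in $k$, a conditional-expectation estimate of the shape
\begin{align}
\E\big(Z^{(k)}\,\big|\,\mathcal G_k\big) \;=\; \amp_k\,\atom_k + f_k, \qquad \amp_k>0,
\end{align}
with $\amp_k$ bounded below by a positive multiple of the natural scale $S/K$ and $f_k$ controlled: its component along $\atom_k$ bounded so that $\amp_k+\ip{f_k}{\atom_k}$ is still a positive multiple of $\amp_k$, and its component orthogonal to $\atom_k$ at most a fixed $\kappa$-fraction of $\amp_k\eps_k$. On $\mathcal G_k$ the bracketed vector equals, up to terms governed by the conditioning of $\dico$ and $\pdico$ on $S$-supports (\ie by $\delta_S(\dico)$, $\mu(\pdico)$, $\|\pdico\|_{2,2}$) and by the subgaussian noise, $\ip{\patom_k}{y_n}$ times a vector whose dominant component is $\atom_k$, so after the sign correction these contributions are essentially positive multiples of $\atom_k$ and add coherently; averaging over the signs $\sigma_n$ of the remaining support atoms kills the off-$\atom_k$ part of $f_k$, and the diagonal-dominance hypothesis \eqref{pdico_cross} is exactly what keeps those cross terms — and the bias introduced by $P(\patom_k)y_n$ when $\patom_k$ is far from $\atom_k$ — under control in regime (b).

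For the remaining two blocks I would first bound the probability of the ``bad'' thresholding events ($k\notin I_n^t$ while $k\in I_n$, or $I_n^t$ reaching outside $I_n$) and show their total contribution to $\ppatom_k$ is of lower order. This is where the sparsity bound $S\lesssim \mu(\dico)^{-2}/\log K$, the hypothesis $\mu(\pdico)\lesssim 1/\log K$ and, in case (b), the lower bound on $\min_k\absip{\atom_k}{\patom_k}$ are spent: symmetrising over the signs $\sigma_n$ and invoking Hoeffding's inequality together with the subgaussian tail of $\noise_n$, the probability that thresholding with $\pdico$ misranks a generating atom against a competitor decays like a negative power of $K$, which beats the union bound over the $K$ atoms. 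Then, for the concentration, I would record an almost-sure bound and a variance bound for the $Z_n^{(k)}$ — here the operator-norm hypothesis $\|\pdico\|_{2,2}^2\lesssim K/(S\log K)$ is what keeps the variance of $\tfrac1N\sum_n P(\pdico_{I_n^t})y_n\,\signop(\ip{\patom_k}{y_n})$ small — and apply a vector Bernstein inequality to get that $\|\ppatom_k-\E(\ppatom_k)\|_2$ is a small fixed fraction of $\amp_k\eps_k$ outside an event of probability at most $K^{-\beta_0}$ for some $\beta_0>0$, provided $N\gtrsim SK\log K$; a union bound over $k$ makes this simultaneous.

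Assembling the blocks gives $\ppatom_k = \amp_k\atom_k + g_k$ with $\amp_k>0$, $\amp_k+\ip{g_k}{\atom_k}$ a positive multiple of $\amp_k$, and the part of $g_k$ orthogonal to $\atom_k$ a $\kappa$-fraction of $\amp_k\eps_k$, so normalising yields $\|\atom_k-\ppatom_k/\|\ppatom_k\|_2\|_2\le \kappa\,\eps_k$ for an explicit $\kappa<1$, uniformly in $k$; since each $\ppatom_k$ is then closer to $\atom_k$ than $\patom_k$ was, the cross-Gram matrix $\dico^\star\ppdico$ stays diagonally dominant, so $d(\ppdico,\dico)=\max_k\|\atom_k-\ppatom_k/\|\ppatom_k\|_2\|_2$ and taking the maximum over $k$ gives $d(\ppdico,\dico)<\kappa\,d(\pdico,\dico)$. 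The main obstacle is regime (b), where $\eps_k$ may be of order one and the $\eps_k$-linearisations that suffice for (a) are unavailable: one has to show directly that thresholding with a badly perturbed $\pdico$ still recovers essentially the correct supports (this is what the $(\log K)^{3/2}$ factor in \eqref{pdico_cross} pays for, so that the failure probability stays polynomially small in $K$ after the union bound) and that the sign-corrected residual plus $P(\patom_k)y_n$ still has its dominant component along $\atom_k$ rather than along $\patom_k$ or the other support atoms — that is, controlling the bias of the update uniformly over the whole perturbation region while keeping the failure probability negligible.
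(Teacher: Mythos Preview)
Your template --- split the update into an oracle part and a thresholding-failure part, bound the expectation, concentrate via vector Bernstein, then normalise --- is indeed the paper's template, and your treatment of the thresholding-failure probability via Hoeffding over the random signs together with the subgaussian noise tail is in the right spirit. But there is a concrete gap in the expectation block that is exactly the heart of regime~(b), and your proposal does not name the tool that closes it.

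After averaging over the signs you are left, up to constants, with
\begin{align}
\E\big(Z^{(k)}\mid \mathcal G_k\big)\;\propto\;\sum_{I\ni k}\bigl[P(\patom_k)-P(\pdico_I)\bigr]\atom_k\;=\;-\sum_{I\ni k} P(\pdico_I)\,Q(\patom_k)\atom_k,
\end{align}
and you need the right-hand average to have norm at most $c\eps$ with $c$ strictly below~$1$. You write that this is ``governed by the conditioning of $\pdico$ on $S$-supports'', but that is precisely the missing step: nothing in the hypotheses gives a \emph{worst-case} bound on $\delta_S(\pdico)$ once $\eps$ is of order one, so you cannot uniformly replace $P(\pdico_I)$ by $\pdico_I\pdico_I^\star$ and then average. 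The paper's resolution is to invoke the \emph{average-case} conditioning results for random subdictionaries due to Tropp and Chr\'etien--Darses: under the hypotheses $\mu(\pdico)\lesssim 1/\log K$ and $\|\pdico\|_{2,2}^2\lesssim K/(S\log K)$, a uniformly random $S$-support $I$ satisfies $\delta(\pdico_I)\leq \delta_0$ except with probability polynomially small in~$K$. On the good supports one replaces $P(\pdico_I)$ by $\pdico_I\pdico_I^\star$ (losing only $\delta_0$) and then averages explicitly to get $\tfrac{S-1}{K-1}\pdico\pdico^\star Q(\patom_k)\atom_k$, which is where the operator-norm hypothesis is actually spent; the bad supports are handled by the crude bound $\|P(\pdico_I)\|\leq 1$ times their small probability. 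The same device is used a second time inside the thresholding-failure block: the paper bounds $\|\dico_{I\setminus\{i\}}^\star\patom_i\|_2$ by the isometry constant of the dictionary $\dico^{\{i\}}$ obtained from $\dico$ by swapping $\atom_i$ for $\patom_i$, again on a \emph{random} support, and this is how the cross-coherence $\mu(\dico,\pdico)$ and the $\|\dico\|_{2,2}\sqrt{S/K}$ terms in \eqref{pdico_cross} enter. Your sketch does not invoke this average-case tool, and without it both the expectation bound and the Hoeffding denominator in the thresholding step are uncontrolled as soon as $\eps\gtrsim 1/\sqrt{S}$.

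A smaller misattribution: the operator-norm hypothesis on $\pdico$ is spent in the \emph{expectation} bound (through the Chr\'etien--Darses exponent), not in the variance. The paper's variance estimate is deliberately crude --- it uses $\|P(\dico_I)-P(\pdico_I)-P(\atom_k)+P(\patom_k)\|_F^2\leq 2(S-1)$ and accepts the resulting extra factor of $\sqrt{S}$ in the sample complexity, which is harmless since the statement only concerns $\eps\gtrsim 1/\sqrt{S}$.
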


The first part of the theorem simply says that, excluding dictionaries $\pdico$ that are coherent or have large operator norm, ITKrM is a contraction on a ball of radius $1/\sqrt{\log K}$ around the generating dictionary $\dico$. To better understand the second part of the theorem, we have a closer look at the conditions on the cross-Gram matrix $\dico^\star \pdico$ in \eqref{pdico_cross}. The fact that the diagonal entries have to be larger than $\| \dico \|_{2,2}\sqrt{(S\log K)/K}$ puts a constraint on the admissible distance $d(\dico,\pdico)$ via the relation $d(\dico,\pdico)^2 = 2 - 2\min_k|\ip{\atom_k}{\patom_k}|$. For a well-conditioned dictionary, satisfying $\| \dico \|^2_{2,2}\approx K/d$, this means that 
\begin{align}
	d(\dico, \pdico) \lesssim \left(2 - 2\sqrt{\frac{S\log K}{d}}\right)^{1/2}.
\end{align} 
Considering that the maximal distance between two dictionaries is $\sqrt{2}$, this is a large improvement over the admissible distance $1/\sqrt{\log K}$ in a). However, the additional price to pay is that also the intrinsic condition on the cross-Gram matrix needs to be satisfied, 
\begin{align}
	\min_k|\ip{\atom_k}{\patom_k}|& \gtrsim  \log K\cdot  \max_{j\neq k}|\ip{\atom_k}{\patom_j}|.
\end{align}
This condition captures our intuition that two estimated atoms should not point to the same generating atom and provides a bound for sufficient separation.\\
One thing that has to be noted about the result above is that it does not guarantee convergence of ITKrM since it is only valid for one iteration. To prove convergence of ITKrM, we need to additionally prove that $\ppdico$ inherits from $\pdico$ the properties that are required for being a contraction, which is part of our future goals. Still, the result goes a long way towards explaining the good convergence behaviour of ITKrM.\\
For example, it allows us to briefly sketch why the algorithm always converges in experiments where the initial dictionary is a large but random perturbation of a well-behaved generating dictionary $\dico$ with coherence $\mu(\dico) \approx 1/\sqrt{d}$ and operatornorm $\|\dico\|^2_{2,2} \approx K/d$.  If $\patom_k = \alpha_k \atom_k + \omega_k z_k$, where the perturbation vectors $z_k$ are drawn uniformly at random from the unit sphere orthogonal to $\atom_k$, then with high probability for all $j\neq k$ we have 
\begin{align}
	|\ip{\atom_k}{z_j}| \lesssim \sqrt{\log K/d}\quad \mbox{and}\quad |\ip{z_k}{z_j}| \lesssim \sqrt{\log K/d}
\end{align} and consequently for all possible $\alpha_k$
\begin{align}
	\mu(\pdico) \lesssim \sqrt{4\log K/d} \quad \mbox{and}\quad \mu(\dico,\pdico) \lesssim \sqrt{2\log K/d}.
\end{align} 
Also with high probability the operator norm of the matrix $Z = (z_1, \ldots z_K)$ is bounded by $\|Z\|_{2,2} \lesssim \sqrt{\log K}$, \citep{tr12}, so that for $\pdico$ we get $\|\pdico\|_{2,2} \lesssim \sqrt{K/d} + \sqrt{\log K}$, again independent of $\alpha_k$. Comparing these estimates with the requirements of the theorem we see that for moderate sparsity levels, $S\geq \log K$, we get a contraction whenever
\begin{align}
	\alpha_{\min} \gtrsim \sqrt{\frac{S (\log K)^2}{d}} \qquad \Leftrightarrow \qquad d(\dico, \pdico) \lesssim \left(2 - 2\sqrt{\frac{S (\log K)^2}{d}}\right)^{1/2}.
\end{align}
A fully random initialisation will have small coherence and operator norm with high probability. While it is also exponentially more likely to satisfy the cross-coherence property than to be within distance $1/\sqrt{S}$ or $1/\sqrt{\log K}$ to the generating dictionary, the absolute probability of having the cross-coherence property is still very small. This leads to the question whether in practice the cross-coherence property is actually necessary for convergence.  
Experiments in the next section will provide evidence that it is practically relevant, in the sense, that whenever ITKrM does not recover the generating dictionary, it produces a dictionary not satisfying the cross-coherence property.

\subsection{Bad dictionaries}

From \citep{sc15} we know that ITKrM is most likely to not recover the full dictionary from a random initialisation when the signals are very sparse ($S$ small) and the noiselevel is small. Since we want to closely inspect the resulting dictionaries, we only run a small experiment in $\R^{32}$, where we try to recover a very incoherent dictionary from 2-sparse vectors\footnote{All experiments and resulting figures can be reproduced using the matlab toolbox available at \toolboxlink}. The dictionary, containing 48 atoms, consists of the Dirac basis and the first half of the vectors from the Hadamard basis, and as such has coherence $\mu =1/\sqrt{32}\approx0.18$. The signals follow the model in \eqref{noisymodel2}, where the coefficient sequences $c$ are constructed by chosing $b \in [0.9,1]$ uniformly at random and setting $c_1 = 1/\sqrt{1+b^2}; c_2= b c_1$ and $c_j=0$ for $j\geq3$. The noise is chosen to be Gaussian with variance $\nsigma^2 =1/ (16d)$, corresponding to $\SNR=16$. Running ITKrM with 20000 new signals per iteration for 25 iterations and 10 different random initialisations we recover 4 times 46 atoms and 6 times 44 atoms. 
\begin{figure}[bt]\label{fig:missingatoms}
	\centering
	\begin{tabular}{cc}
		\includegraphics[width=0.4\textwidth]{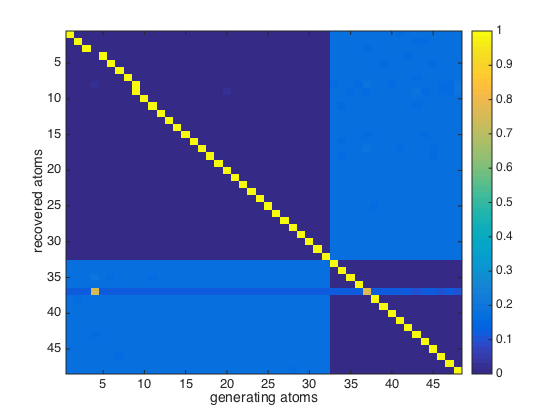}&\includegraphics[width=0.4\textwidth]{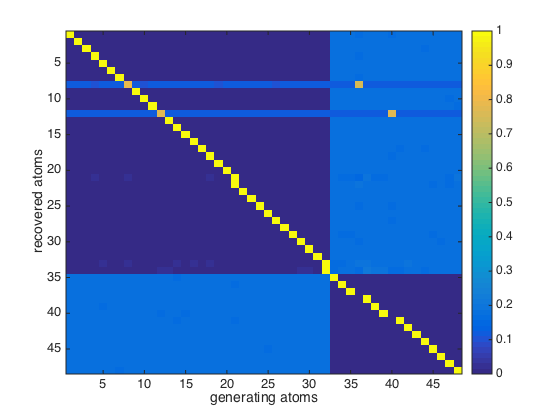}
	\end{tabular}
	\caption{Cross-Gram matrices $\pdico^\star \dico$ for recovered dictionaries with 2 (left) and 4 (right) missing atoms.}
\end{figure}
An immediate observation is that we always miss an even number of atoms. Taking a look at the recovered dictionaries - examples for recovery of 44 and 46 atoms are shown in Figure~\ref{fig:missingatoms} - we see that this is due to their special structure; in case of $2n$ missing atoms, we always observe that $n$ atoms of the generating dictionary are recovered twice and that $n$ atoms in the learned dictionary are a 1:1 linear combinations of 2 missing atoms from the generating dictionary, respectively. \\
So in the most simple case of 2 missing atoms (after rearranging and sign flipping the atoms in $\dico$) the recovered (and rearranged) dictionary $\pdico$ has the form
$$\pdico = (\atom_1, \atom_1, \atom_3 , \ldots , \atom_{K-1}, \patom_K) \quad \mbox{with} \quad \patom_K = \frac{ \atom_2 + \atom_K}{\sqrt{2 + 2\ip{\atom_2}{\atom_K} }}.$$
Looking back to our characterisation of the contractive areas in the last section, we see that such a dictionary or a slightly perturbed version of it clearly cannot have the necessary cross-coherence property with any reasonably incoherent dictionary. 
A complete proof showing that $\pdico$ is indeed a stable spurious fixed point is unfortunately too long to be included here but in preparation. In the meantime we refer the interested reader to the preprint version where a sketch of the proof can be found, \cite{sc18}. We also provide some intuition why dictionaries of the above form are stable in the next section.\\
Here we just want to add, that while a dictionary with coherence $\mu \approx 1$ clearly does not satisfy the conditions for contractivity, the reverse is not true. On the contrary two estimated atoms pointing to the same generating atom $\atom_j$ can be very incoherent even if they are both already quite close to $\atom_j$. For instance, if $\patom_{j_\pm} \approx \alpha_j \atom_j \pm \omega_j z_j$ where $ z_j $ is a balanced sum of the other atoms $z_j \approx \sum_{i\neq j} \atom_i \sigma(i)$, we have $|\ip{\patom_{j_+}}{\patom_{j_-}}| = \alpha_j^2 - \omega_j^2$, meaning approximate orthogonality
at $\alpha_j = 1/\sqrt{2}$. Using these ideas we can construct well-conditioned and incoherent initial dictionaries $\pdico$, with abritrary distances $d(\pdico, \dico) \gtrsim 1/\sqrt{2}$ to the generating dictionary, so that things will go maximally wrong, meaning we end up with a lot of double and 1:1 atoms. 
\begin{figure}[bt]\label{fig:badinit}
	\centering
	\begin{tabular}{ccc}
		\includegraphics[width=0.3\textwidth]{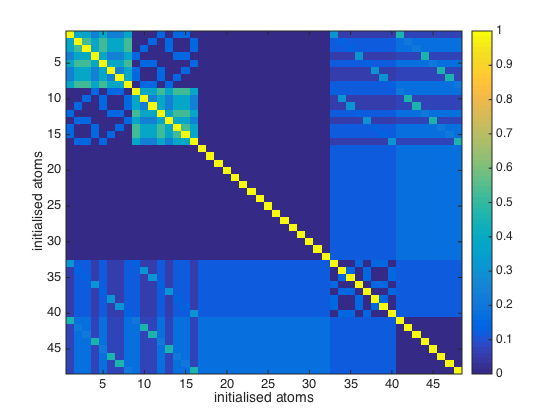}&
		\includegraphics[width=0.3\textwidth]{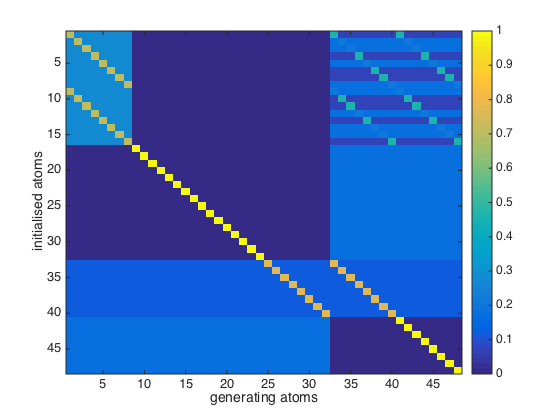}&
		\includegraphics[width=0.3\textwidth]{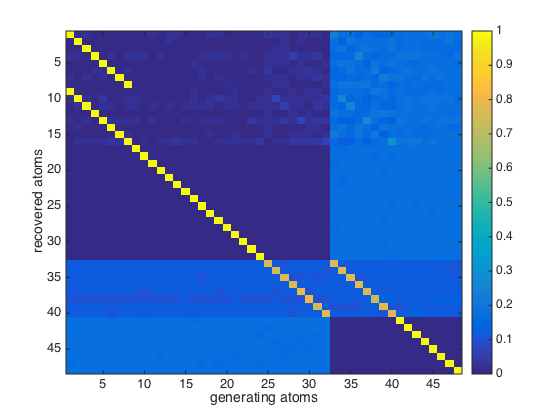}
	\end{tabular}
	\caption{Absolute values of the Gram matrix of a bad initial dictionary $\pdico^\star \pdico$ (left), its cross-Gram matrix with the generating dictionary $\pdico^\star \dico$ (middle) and the cross-Gram matrix of the recovered dictionary after 25 iterations of ITKrM $\bar\pdico^\star\dico$ (right).}
\end{figure}
Figure~\ref{fig:badinit} shows an example of a bad initialisation with coherence $\mu = 0.52$, leading to 16 missing atoms. The accompanying matlab toolbox provides more examples of these bad initialisations to observe convergence, to play around with and to inspire more evil constructions.
\\
Summarising the two last subsections we see that ITKrM may not be a contraction if the current dictionary estimate is too coherent, has large operator norm or if two atoms are close to one generating atom. Both coherence and operator norm of the estimate could be calculated after each iteration to check whether ITKrM is going in a good direction. Unfortunately, the diagonal dominance of the cross-Gram matrix, which prevents two estimated atoms to be close to the same generating atom, cannot be verified immediately. However, the most likely outcome of this situation is that both these estimated atoms converge to the same generating atom, meaning that eventually the estimated dictionary will be coherent. This suggests that in order to improve the global convergence behaviour of ITKrM, we should control the coherence of the estimated dictionaries. One strategy to incorporate incoherence into ITKrM could be adding a penalty for coherent dictionaries. The main disadvantages of this strategy, apart from the fact that ITKrM is not straightforwardly associated to an optimisation programme, are the computational cost and the fact that penalties tend to complicate the high-dimensional landscape of basins of attractions which further complicates convergence. Therefore, we will use a different strategy which allows us to keep the high percentage of correctly recovered atoms and even use the information they provide for identifying the missing ones: replacement.

\section{Replacement}\label{sec:replace}

Replacement of coherent atoms with new, randomly drawn atoms is a simple clean-up step that most dictionary learning algorithms based on alternating minimisation, e.g. K-SVD \citep{ahelbr06}, employ additionally in each iteration. While randomly drawing a replacement is cost-efficient and democratic, the drawback is that the new atom converges only very slowly or not at all to the missing generating atom. \\
To see why a randomly drawn replacement atom is not the best idea and what to do instead, we first have a look at the shape of the signal residuals
at one of the bad dictionaries, identified in the last section.

\subsection{Learning from bad dictionaries} 

We start with an analysis of thresholding in case the current dictionary $\pdico$ contains one double atom, $\patom_1=\patom_2 = \atom_1$, and one 1:1 atom, $\patom_K \propto \atom_2 + \atom_K$; for the other atoms we have $\patom_k = \atom_k$. We will also keep track of what would happen if we replaced one of the double atoms with a vector drawn uniformly at random from the unit sphere, which we label $\patom_0$. For simplicity we assume that the signals follow the sparse model in \eqref{noisymodel2} with constant coefficients $c_i = 1$ for $i\leq S$ and $c_i = 0$ for $i\geq 0$ and no noise,
and that $\ip{\atom_2}{\atom_K} \geq 0$. We also adopt the notation $I_{\ell \leftrightarrow k}: = ( I\setminus \{\ell\} )\cup \{k\}$.
Note that we have
\begin{align*} 
	&\absip{\patom_k}{\atom_k} = 1  \quad \text{for}  \quad k \notin \{2,K\}\\
	\text{and} \quad &\absip{\patom_k}{\atom_i} \leq \mu \quad \text{for} \quad k\neq K, \: i\notin\{1,2,k\}.
\end{align*}
We also have $\absip{\patom_2}{\atom_1} = 1$ as well as
\begin{align*} 
	&\absip{\patom_K}{\atom_2} = \absip{\patom_K}{\atom_K} = \frac{\absip{\atom_2 + \atom_K}{\atom_K} }{\sqrt{2 + 2 \ip{\atom_2}{\atom_K} }} = \sqrt{\frac{1 + \ip{\atom_2}{\atom_K} }{2}} \geq \frac{1}{\sqrt{2}}\\
	\text{and} \quad &\absip{\patom_K}{\atom_i}= \frac{\absip{\atom_2 + \atom_K}{\atom_i}}{\sqrt{2 + 2 \ip{\atom_2}{\atom_K}} } \leq \frac{2\mu}{\sqrt{2}} \leq \sqrt{2}\mu  \quad \text{for}  \quad i \notin \{2,K\} .
\end{align*}
Since $\patom_0$ is drawn uniformly at random from the $d$-dimensional unit sphere, we have for any fixed vector $v$ that
\begin{align*}
	\P(\absip{\patom_0}{v} \geq t ) \leq 2\exp\left(-\frac{t^2 d}{2}\right).
\end{align*}
This means that with very high probability $\absip{\patom_0}{\atom_k} \lesssim \sqrt{\log{K}/d}$ for all $k$.

\noindent If we draw a random support $I$ of size $S$, (a random permutation), then with probability $\binom{K-3}{S}/\binom{K}{S} \approx \left(1-\frac{S}{K}\right)^3$ it does not contain $1,2,K$, meaning $I\cap \{1,2,K\} = \emptyset$. We then have
\begin{align*}
k \in I:\quad &  \absip{\patom_k}{y} = | \ip{\patom_k}{\atom_k} \sigma_k c_k + \sum_{i\in I, i \neq k} \ip{\patom_k}{\atom_i} \sigma_i c_i | \geq 1 - (S-1)\mu,\\
k \in I^c\setminus \{0,K\}:\quad& \absip{\patom_k}{y} = | \sum_{i\in I}  \ip{\patom_{k}}{\atom_i}\sigma_i c_i | \leq S\mu, \\
k = K: \quad & \absip{\patom_k}{y} = | \sum_{i\in I} \ip{\patom_{K}}{\atom_i}\sigma_i c_i | \leq \sqrt{2}S\mu\\
k = 0: \quad & \absip{\patom_k}{y} = | \sum_{i\in I} \ip{\patom_{0}}{\atom_i}\sigma_i c_i | \leq S\sqrt{\log{K}/d}.
\end{align*}
So no matter whether the dictionary contains a double atom or a random replacement atom, thresholding will correctly identify the support, $I^t = I$, and the residual will be zero $$a= y - P(\pdico_{I^t})y = \dico_I x_I - P(\dico_{I})\dico_I x_I =0.$$
Next we have a look at supports containing $1$ but not $2$ or $K$, meaning $I\cap \{1,2,K\} = \{1\}$. Such a support is drawn with probability $\binom{K-3}{S-1}/\binom{K}{S} \approx\frac{S}{K}\left( 1-\frac{S}{K}\right)^2$. As before we get the following bounds
\begin{align*}
	k \in I \cup \{2\}: \quad &  \absip{\patom_k}{y} \geq 1 - (S-1)\mu,\\
	k \in I^c\setminus \{0,2,K\}:\quad& \absip{\patom_k}{y} \leq S\mu,
\end{align*}
as well as $\absip{\patom_K}{y} \leq \sqrt{2}S\mu$ and $\absip{\patom_0}{y}  \leq S\sqrt{\log{K}/d}$, which ensures that $I^t \subseteq I \cup \{2\}$. If we are lucky and $\absip{\patom_2}{y} = \absip{\patom_1}{y} = \min_{i\in I}  \absip{\patom_i}{y}$ or in the case of the dictionary with the random replacement atom, thresholding recovers $I^t=I$ or $I^t = I_{1 \leftrightarrow 2}\equiv I$ and the residual is again zero. If we are less lucky, we miss one of the relevant atoms indexed by $i \in I$, meaning $I^t = I_{i \leftrightarrow 2}$. In this case the residual will be close to $\atom_i$,
$$a= y - P(\pdico_{I^t})y = \dico_I x_I - P(\dico_{I\setminus\{i\}})\dico_I x_I = x_i (\atom_i - P(\dico_{I\setminus\{i\}}) \atom_i) \approx \pm \atom_i,$$
since $\| P(\dico_{I\setminus\{i\}}) \atom_i\|_2 \leq\mu\sqrt{S/ (1-S\mu)}$.
Note that for any fixed $i \notin \{1,2,K\}$, the probability that both $\{1,i \}\subseteq I$ is bounded by $\binom{K-3}{S-2}/\binom{K}{S}\approx \frac{S^2}{K^2}\left( 1-\frac{S}{K}\right) $, so a residual close to $\atom_i$ appears with probability less than $\frac{S^2}{K^2}$.
\\
Finally, we analyse what happens when the support contains $K$ but not $1$ or $2$, meaning $I\cap \{1,2,K\} = \{K\}$. Again this occurs with probability $\approx\frac{S}{K}\left( 1-\frac{S}{K}\right)^2$.
We then have
\begin{align*}
	k \in I \setminus \{K\}:\quad &  \absip{\patom_k}{y} = | \ip{\patom_k}{\atom_k} \sigma_k c_k + \sum_{i\in I, i \neq k} \ip{\patom_k}{\atom_i} \sigma_i c_i | \geq 1 - (S-1)\mu,\\
	k = K: \quad & \absip{\patom_k}{y} =  | \ip{\patom_K}{\atom_K} \sigma_K c_K + \sum_{i\in I, i \neq K} \ip{\patom_K}{\atom_i} \sigma_i c_i | \geq \frac{1-2(S-1)\mu}{\sqrt{2}},
\end{align*}
as well as $\absip{\patom_0}{y}  \leq S\sqrt{\log{K}/d}$ and $\absip{\patom_k}{y} \leq S\mu$ for all other atoms,
which shows that for both types of dictionary $\pdico$ (with double or random replacement atom) thresholding will recover $I^t =I$. For the residual we get
\begin{align*}
	a= y - P(\pdico_{I^t})y &= [\I_d - P(\pdico_{I})] \dico_I x_I = x_K [\I_d - P(\pdico_{I})]\atom_K\\
	&= x_K [\I_d - P(\pdico_{I})][P(\patom_K) + Q(\patom_K)]\atom_K\\
	&= x_K [\I_d - P(\pdico_{I})]Q(\patom_K)\atom_K \\
	& \approx \pm  \tfrac{1}{2} (\atom_K-\atom_2) ,  
\end{align*}
where we have used that $Q(\patom_K)\atom_K = \atom_K - \ip{\patom_K}{\atom_K} \patom_K = \tfrac{1}{2} (\atom_K-\atom_2) $ and
\begin{align*}
	\| P(\pdico_{I}) Q(\patom_K) \atom_K\|_2 &\leq \tfrac{1}{2} \cdot \| \pdico_{I}^\dagger \|_{2,2} \cdot \| \pdico_I^\star (\atom_K-\atom_2)\|_2 
	\leq \mu \cdot \sqrt{S/  (1-2S \mu)}.
\end{align*}
The analysis of the case $I\cap \{1,2,K\} = \{ 2\}$ is analogue to the one above and shows that the residual again is close to $\pm\tfrac{1}{2} (\atom_K-\atom_2)$. 
Summarising our analysis so far, we see that with probability $\left(1-\frac{S}{K}\right)^3$ the residual will be zero (or close to zero in the noisy case), with probability at most $\frac{S^2}{K^2}$ it will be close to $\atom_i$ for each $i\notin \{1,2,K\}$ and with probability $\frac{2S}{K}\left( 1-\frac{S}{K}\right)^2$ it will be close to a scaled version of 
$$\patom_{K'} = \frac{\atom_K-\atom_2}{\sqrt{2 - 2\ip{\atom_2}{\atom_K}}}.$$
Also after covering all supports except those where
$|I\cap \{1,2,K\}| \geq 2$, which together have probability $\approx\frac{3S^2}{K^2}$, we have not encountered a situation, where the randomly chosen atom $\patom_0$ would have been picked. Indeed a more detailed analysis to be found in \cite{pali21} shows that $\patom_0$ only has a chance to be picked if $I\cap \{1,2,K\} = \{2,K\}$. Moreover, for $\sigma_2 = \sigma_K$ we have $a = y - P(\pdico_{I\setminus\{2\}})y = 0$. So even if $\patom_0$ is picked, it will not be pulled in a useful direction. Indeed $\patom_0$ will only be picked and pulled in a good direction if $\sigma_2 = - \sigma_K$ and therefore 
$$a = y - P(\pdico_{I^t})y \approx y - P(\pdico_{I\setminus\{2,K\}})y \approx \alpha \cdot \patom_{K'} ,$$
for some scaling factor $\alpha = \pm \sqrt{2 - 2\ip{\atom_2}{\atom_K}}$.
This case is also the only case, where we have the chance of accidentally picking $\patom_1$ or $\patom_2$ and having them distorted in the direction of $\patom_{K'} $. Note that in case $I\cap \{1,2,K\} = \{1,2\}$ or $\{1,K\}$ we recover $I^t = I$ or $I^t = I_{K\leftrightarrow 2}$ and so $a \approx \pm \atom_2$ or $a \approx \pm \atom_K$, but due to the random signs the pull in these useful directions cancels out, and similarly for $\{1,2,K\}\subseteq I$. The intuition why configurations like $\pdico$ are stable is that this case is so rare that $\patom_1$ resp. $\patom_2$ cannot be sufficiently perturbed to change the behaviour of thresholding in the next iteration.\\
So rather than hoping for the at best unlikely distortion of $\patom_0$ or $\patom_{1/2}$ towards $\patom_{K'}$ we will use the fact that most non-zero residuals (or residuals not just consisting of noise) are close to scaled versions of $\patom_{K'}$ and recover $\patom_{K'}$ directly.
Indeed a more general analysis, to be found in \cite{pali21}, shows that for dictionaries containing several double atoms and 1:1 combinations, meaning after rearranging and resigning the dictionaries $\dico, \pdico$ we have $\patom_k = \atom_k$ for $k> 3L$ as well as
$$\patom_\ell = \patom_{L+\ell} = \atom_\ell \quad\mbox{and} \quad \patom_{2L + \ell} = \frac{\atom_{\ell} + \atom_{2L+ \ell}}{\sqrt{2 + 2\ip{\atom_\ell}{ \atom_{2L+\ell}}}} \quad \mbox{for} \quad \ell = 1\ldots L, $$
the residuals are $1$-sparse in the $L$ complementary 1:1 combinations.
$$\patom^c_{\ell} : = \frac{\atom_{\ell} - \atom_{2L+ \ell}}{\sqrt{2 - 2\ip{\atom_\ell}{ \atom_{2L+\ell}}}} \quad \mbox{for} \quad \ell = 1\ldots L. $$
This suggests to use ITKrM with sparsity level $1$, which reduces to ITKsM or line clustering, on the residuals to directly recover the 1:1 complements as replacement candidates.
Replacing the double atom $\patom_\ell$ by $\patom^c_\ell$, in the next iteration it will be serious competition for $\patom_{2L+ \ell}$ in the thresholding of all signals containing either $\atom_\ell$ or $\atom_{2L+ \ell}$. This iteration will then create a first imbalance of the ratio between $\atom_\ell$ and $\atom_{2L+ \ell}$ within one or both of the estimated atoms, making one the more likely choice for $\atom_\ell$ and the other the more likely choice for $\atom_{2L+ \ell}$ in the subsequent iteration. There the imbalance will be further increased until a few iterations later we finally have $\patom_{\ell} \approx \atom_\ell$ and $\patom_{2L+ \ell} \approx \atom_{2L+ \ell}$ or the other way around. \\
We can also immediately see the advantages this ITKsM/clustering approach provides over other residual based replacement strategies, such as   using the largest residual or using the largest principal components or \cite{rudu12, irofti16}. In the case of noise or outliers, the largest residuals are most likely to be outliers or pure noise, meaning that this strategy effectively corresponds to random replacement. The largest principal components of the residuals on the other hand, will most likely be a linear combination of several 1:1 complementary atoms and as such less serious competition for the original 1:1 combinations during thresholding. Additionally to lower chances of being picked, they will also need more iterations to determine which one will rotate into which place.\\
After learning enough from bad dictionaries to inspire a promising replacement strategy, the next subsection will deal with its practical implementation.

\subsection{Replacement in detail}

Now that we have laid out the basic strategy, it remains to deal with all the details. For instance, if we have used all replacement candidates after one iteration, after the next iteration the replacement candidates might not be mature yet, meaning they might not have converged yet. 
\smallskip

\noindent{\bf Efficient learning of replacement atoms}.\\
To solve this problem, observe that the number of replacement candidates, stored in $\Gamma = (\gamma_1,\ldots \gamma_L)$, will be much smaller than the dictionary size, $L\ll K$. Therefore, we need less training signals per iteration to learn the candidates or equivalently we can update $\Gamma$ more frequently, meaning we renormalise after each batch of $N_\Gamma < N$ signals
and set $\Gamma = \bar \Gamma$. Like this, every augmented iteration of ITKrM will produce $L$ replacement candidates.\smallskip

\noindent{\bf Combining coherent atoms.}\\
The next questions concern the actual replacement procedure. Assume we have fixed a threshold $\mu_{\max}$ for the maximal coherence. If our estimate $\pdico$ contains two atoms whose mutual coherence is above the threshold, $\absip{\patom_k}{\patom_{k'}} > \mu_{\max}$, which atom should we replace? One strategy that has been employed for instance in the context of analysis operator learning, \cite{dowadaplha16}, is to average the two atoms, that is to set $\patom_k^{new} = \patom_k + \signop(\ip{\patom_k}{\patom_{k'}})\patom_{k'}$. The reasoning is that if both atoms are good approximations to the generating atom $\atom_k$ then their average will be an even better approximation. However, if one atom $\patom_k$ is already a very good approximation to the generating atom $\patom_k \approx \atom_k$ while $\patom_{k'}$ is still as far away as indicated by $\mu_{\max}$, that is $\patom_{k'} \approx \mu_{\max} \atom_k + \sqrt{1 - \mu_{\max}^2}z_k$, then the averaged atom will be a worse approximation than $\patom_k$ and it would be preferable to simply keep $\patom_k$. To determine which of two coherent atoms is the better approximation, we note that the better approximation to $\atom_k$ should be more likely to be selected during thresholding. This means that we can simply count how often each atom is contained in the thresholded supports $I_n^t$, $v(k) = \sharp \{n: k \in I_n^t\} $ and in case of two coherent atoms keep the more frequently used one. Based on the value function $v$ we can also employ a weighted merging strategy and set $\patom_k^{new} =v(k) \patom_k + \signop(\ip{\patom_k}{\patom_{k'}}) v(k') \patom_{k'}$. If both atoms are equally good approximations, then their value functions should be similar and the balanced combination will be a better approximation. If one atom is a much better approximation it will be used much more often and the merged atom will correspond to this better atom. \smallskip

\noindent{\bf Selecting a candidate atom.}\\
Having chosen how to combine two coherent atoms, we next need to decide which of our $L$ replacement candidates we are going to use. To keep the dictionary incoherent, we first discard all candidates $\ratom_\ell$, whose maximal coherence with the remaining dictionary atoms is larger than our threshold, that is, $\max_k\absip{\ratom_\ell}{\atom_k}\geq \mu_{\max}$. Note that in a perfectly $S$-sparse setting this is not very likely since the residuals we are summing up contain mainly noise or missing 1:1 complements and therefore add up to noise or the desired 1:1 complements. However it might be a problem if we underestimate the sparsity level in the learning. If we use $\tilde S < S$, the residuals are still at least $S-\tilde S$ sparse in the dictionary, so some of our replacement candidates might be near copies of already recovered atoms in the dictionary.\\
To decide which remaining candidate is likely to be the most valuable, we use a counter similar to the one for the dictionary atoms. However, we have to be more careful here since every residual is added to one candidate. If the residual contains only noise, which happens in most cases, and the candidates are reasonably incoherent to each other, then each candidate is equally likely to have its counter increased. This means that the candidate atom that actually encodes the missing atom (or 1:1 complement) will only be slightly more often used than the other candidates.
So to better distinguish between good and bad candidates, we additionally employ a threshold $\tau$ and set $v_\Gamma(\ell) = \sharp \{n: \ell = i_n,  \absip{\ratom_\ell}{a_n}\geq \tau \|a_n\| \}$. To determine the size of the threshold, observe that for a residual consisting only of Gaussian noise, $a = r$, we have for any $\ratom_\ell$ the bound
\begin{align}
	\P(\absip{\ratom_\ell}{r}\geq \tau \|r\|_2 ) \leq 2\exp\left(-\frac{d\tau^2}{2}\right).
\end{align} 
which for $\tau = \sqrt{2 \log(2K)/d}$ becomes $1/K$. This means that the contribution to $v_\Gamma(\ell)$ from all the pure noise residuals is at best $N/K$. On the other hand, with probability $S/K$, the residual will encode the missing atom or 1:1 complement $a \approx ( \atom_i - \atom_j) \cdot |x_i|/2$. For reasonable sparsity levels, $S \lesssim \frac{d}{4 \log(2K)}$, and signal to noise ratios, the candidate $\ratom_\ell$ closest to the missing atom will be picked and should have inner product of the size $ \absip{\ratom_\ell}{a} \approx |x_i|/2 \approx \frac{1}{2\sqrt{S}} \gtrsim \tau \|a\|_2$. This means that for a good candidate the value function will be closer to $NS/K$.\\
The threshold should also help in the earlier mentioned case of underestimating the sparsity level. There one could imagine the candidates to be poolings of already recovered atoms, that is, $\ratom_\ell \approx \sum_{j\in J_\ell} \pm\atom_j/\sqrt{|J_\ell|} $, which are sufficiently incoherent to the dictionary atoms to pass the coherence test. If the residuals are homogenously $S-\tilde S$ sparse in the original dictionary, the candidate atom $\ratom_\ell$ will be picked if $\atom_j$ approximates the residual best for a $j \in J_\ell$. If additionally the sets $J_\ell$ are disjoint, atoms corresponding to a bigger atom pool are (up to a degree) more likely to be chosen than those corresponding to a smaller pool. The threshold helps favour candidates associated to small pools, which have bigger inner products, since $|\ip{\ratom_\ell}{a_n}| \approx 1/\sqrt{S |J_\ell|}$. This is desirable since the candidate closest to the missing atom will correspond to a smaller pool. After all, a candidate containing in its pool the missing atom (1:1 complement) will be soon distorted towards this atom since the sparse residual coefficient of the missing atom will be on average larger than those of the other atoms, thus reducing the effective size of the pool. \smallskip

\noindent{\bf Dealing with unused atoms.} \\
Before implementing our new replacement strategy, let us address another less frequently activated safeguard included in most dictionary learning algorithms: the handling of dictionary atoms that are never selected and therefore have a zero update. As in the case of coherent atoms the standard procedure is replacement of such an atom with a random redraw, which however comes with the problems discussed above. Fortunately our replacement candidates again provide an efficient alternative. If an atom has never been updated, or more generally, if the norm of the new estimator is too small, we simply do not update this atom but set the associated value function to zero. After replacing all coherent atoms we then proceed to replace these unused atoms.\\
The combination of all the above considerations leads to the augmented ITKrM algorithm, which is summarised in Algorithm~\ref{algo:itkrmplus} while the actual procedure for replacing coherent atoms is described in Algorithm~\ref{algo:replacement}, both to be found in Appendix~\ref{sec:app_code}. With these details fixed, the next step is to see how much the invested effort will improve dictionary recovery.

\subsection{Numerical Simulations}\label{sec:exp_replace}

In this subsection we will verify that replacing coherent atoms improves dictionary recovery and test whether our strategy improves over random replacement. Our main setup is the following:\\
{\bf Generating dictionary:} As generating dictionary $\dico$ we use a dictionary of size $K=192$ in $\R^{d}$ with $d=128$, where the atoms are drawn i.i.d. from the unit sphere.\\
{\bf (Sparse) training signals:} We generate $S$-sparse training signals according to our signal model in \eqref{noisymodel2} as 
\begin{align}
	y=\frac{ \dico x_{c, p,\sigma} +\noise}{\sqrt{1+\|\noise \|_2^2}}.
\end{align}
For every signal a new sequence $c$ is generated by drawing a decay factor $q$ uniformly at random in $[0.9,1]$ and setting $c_i = c_q q^{i-1}$ for $i\leq S$ and $0$ else, where $c_q:=\frac{1-q}{1-q^S}$ so that $\|c\|_2 =1$. The noise is centered Gaussian noise with variance $\rho^2 = (16d)^{-1}$, leading to a signal to noise ratio of $\SNR=16$. We will consider two types of training signals. The first type consists of 6-sparse signals with 5\% outliers, that is, we randomly select 5\% of the sparse signals and replace them with pure Gaussian noise of variance $1/d^2$. The second type
consists of 25\% 4-sparse signals, 50\% 6-sparse signals and 25\% 8-sparse signals, where again 5\% are replaced with pure Gaussian noise. In each iteration of ITKrM we use a fresh batch of $N=120 000$ training signals. Unless specified otherwise, the sparsity level given to the algorithm is $S_e = 6$.\\
{\bf Replacement candidates:} During every iteration of ITKrM we learn $L= \round{\log d} = 5$ replacement candidates using $m=\round{\log d} = 5$ iterations each with $N_\Gamma = \floor{N/m}$ signals. \\
{\bf Initialisations:} The dictionary $\pdico$ containing $K$ atoms as well as the replacement candidates are initialised by drawing vectors i.i.d. from the unit sphere. In case of random replacement we use the initialisations of the replacement candidates. All our results are averaged over 20 different initialisations.\\
{\bf Replacement thresholds:} We will compare the dictionary recovery for various coherence thresholds $\mu_{\max} \in \{0.5,0.7,0.9\}$, and all three combination strategies, adding, deleting and merging. We also employ an additional safeguard and replace atoms, which have not been used at all or which have energy smaller than 0.001 before normalisation, if after replacement of coherent atoms we have candidate atoms left. \\
{\bf Recovery threshold:} We use the convention that a dictionary atom $\atom_k$ is recovered if $\max_j\absip{\atom_k}{\patom_j}\geq 0.99$.\\
The results of our first experiment\footnote{As already mentioned, all experiments can be reproduced using the matlab toolbox available at \toolboxlink}, which explores the efficiency of replacement using our candidate strategy in comparison to random or no replacement on 6-sparse signals as described above, are depicted in Figure~\ref{fig:replace_mu}.
\begin{figure}
	\begin{tabular}{ccc}
		\includegraphics[width=0.35\textwidth]{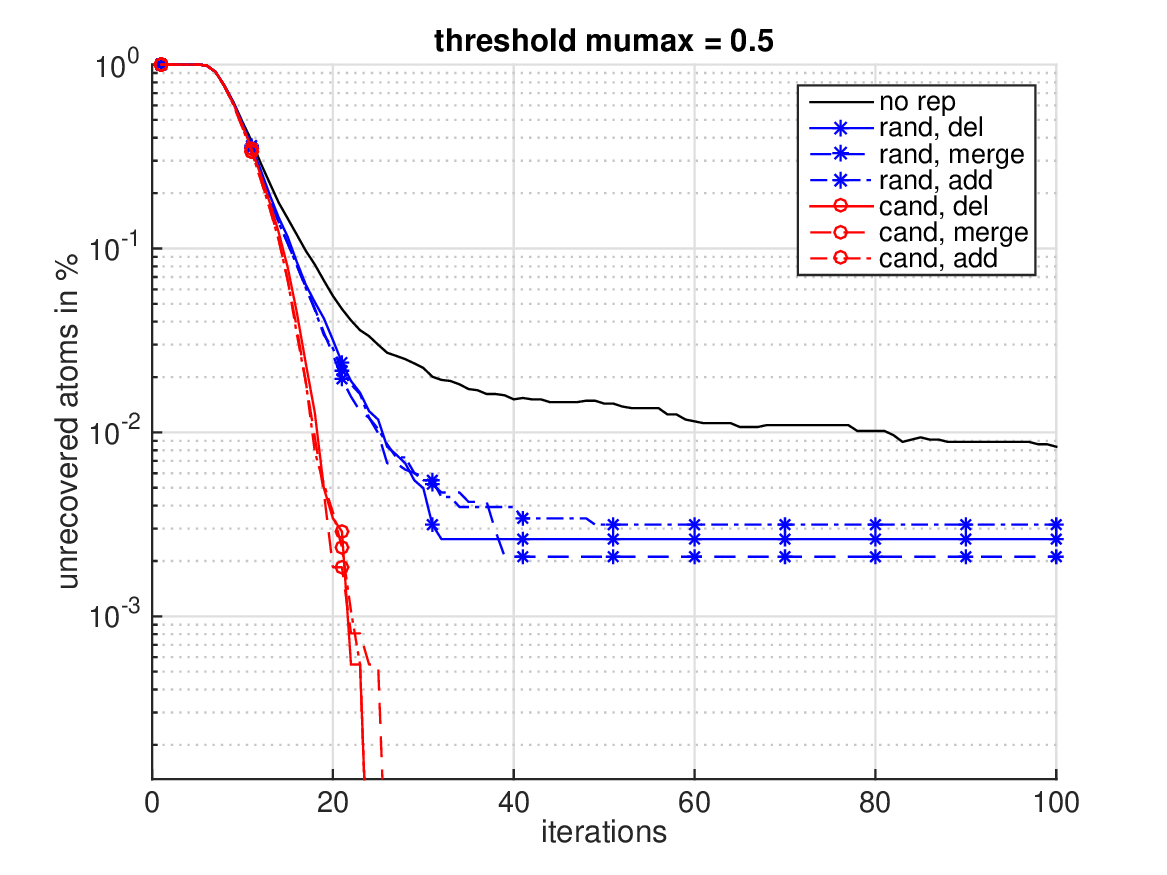} &\hspace{-8mm}\includegraphics[width=0.35\textwidth]{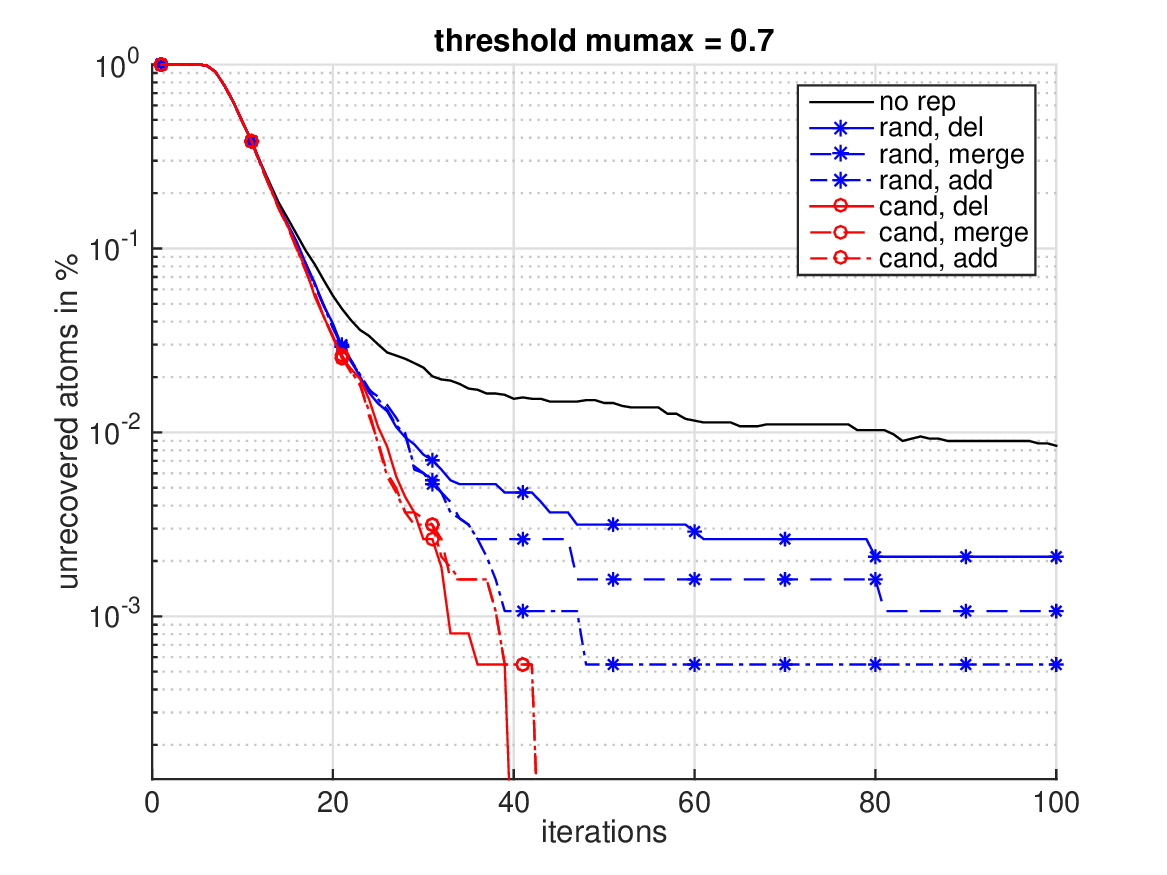}&\hspace{-8mm}\includegraphics[width=0.35\textwidth]{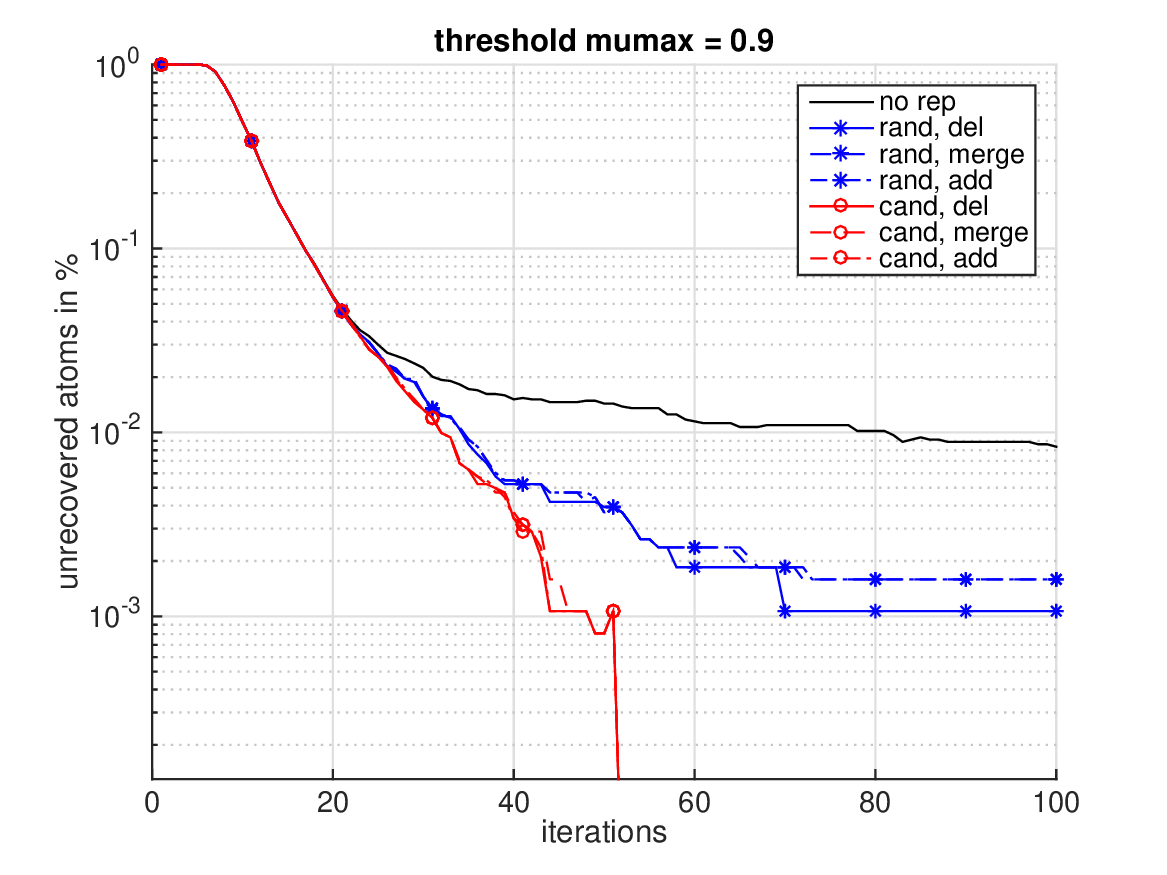}\\
	\end{tabular}
	\caption{Recovery rates of ITKrM without replacement, random and candidate replacement for various coherence thresholds $\mu_{\max}$ and atom combination strategies. \label{fig:replace_mu}}
\end{figure}
We can see that for all three considered coherence thresholds $\mu_{\max} \in \{0.5,0.7,0.9\}$, our replacement strategy improves over random or no replacement. So while after 100 iterations ITKrM without replacement misses about 1\% of the atoms and with random replacement about $0.1\%$, it always finds the full dictionary after at worst 55 iterations using the candidate atoms. Contrary to random replacement the candidate based strategy also does not seem sensitive to the combination method. Another observation is that candidate replacement leads to faster recovery the lower the coherence threshold is, while the average performance for random replacement is slightly better for the higher thresholds. This is connected to the average number of replaced atoms in each run, which is around $16$ for $\mu_{\max}= 0.5$, around $3.8$ for $\mu_{\max}= 0.7$ and around $0.8$ for $\mu_{\max} = 0.9$, since for the candidate replacement there is no risk of replacing a coherent atom that might still change course and converge to a missing generating atom with something useless. For the sake of completeness, we also mention that in none of the trials replacement of unused atoms is ever activated.\\
In our second experiment we explore the performance of candidate replacement for the more interesting (realistic) type of signals with varying sparsity levels. Since the signals can be considered $4$, $6$ or $8$ sparse we compare the performance of ITKrM using all three possibilities, $S_e \in \{4,6,8\}$ and a fixed replacement threshold $\mu_{\max}= 0.7$. 
\begin{figure}[b]
	\begin{tabular}{ccc}
		\includegraphics[width=0.35\textwidth]{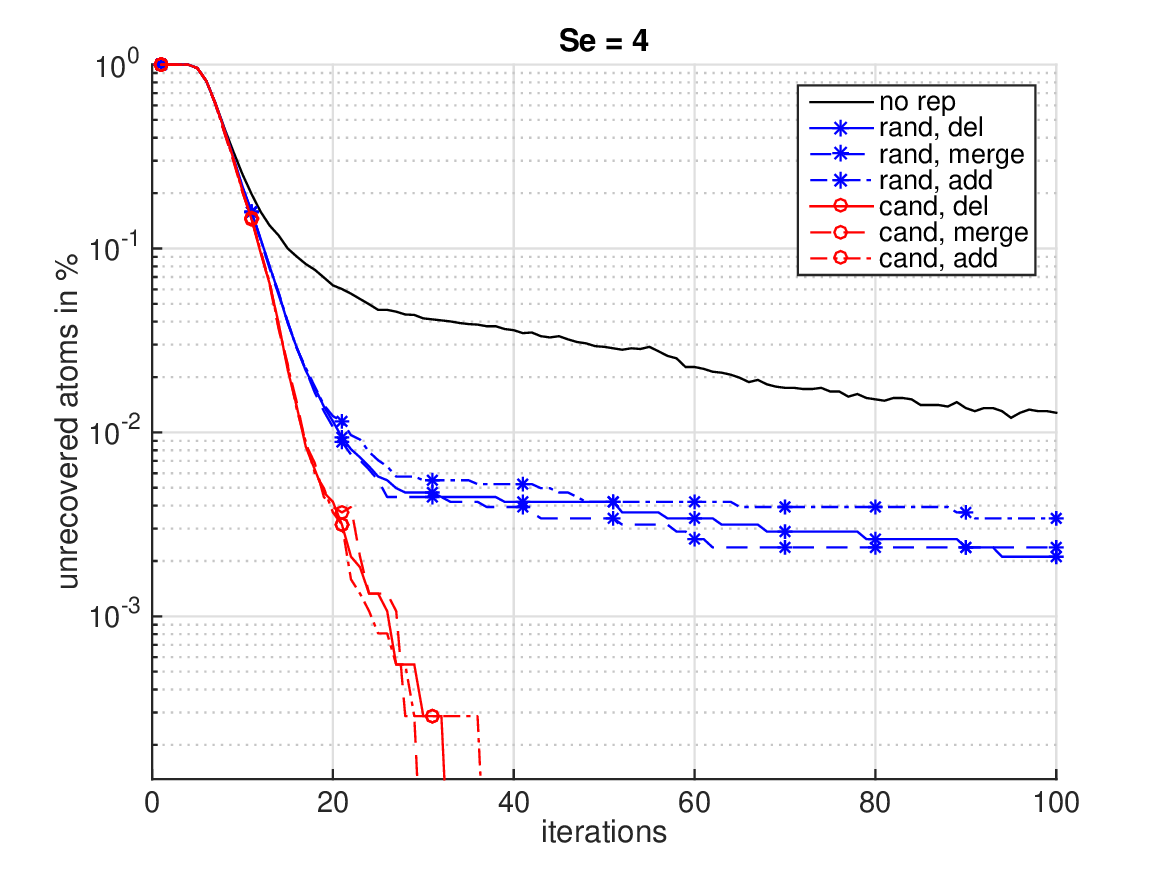} &\hspace{-8mm}\includegraphics[width=0.35\textwidth]{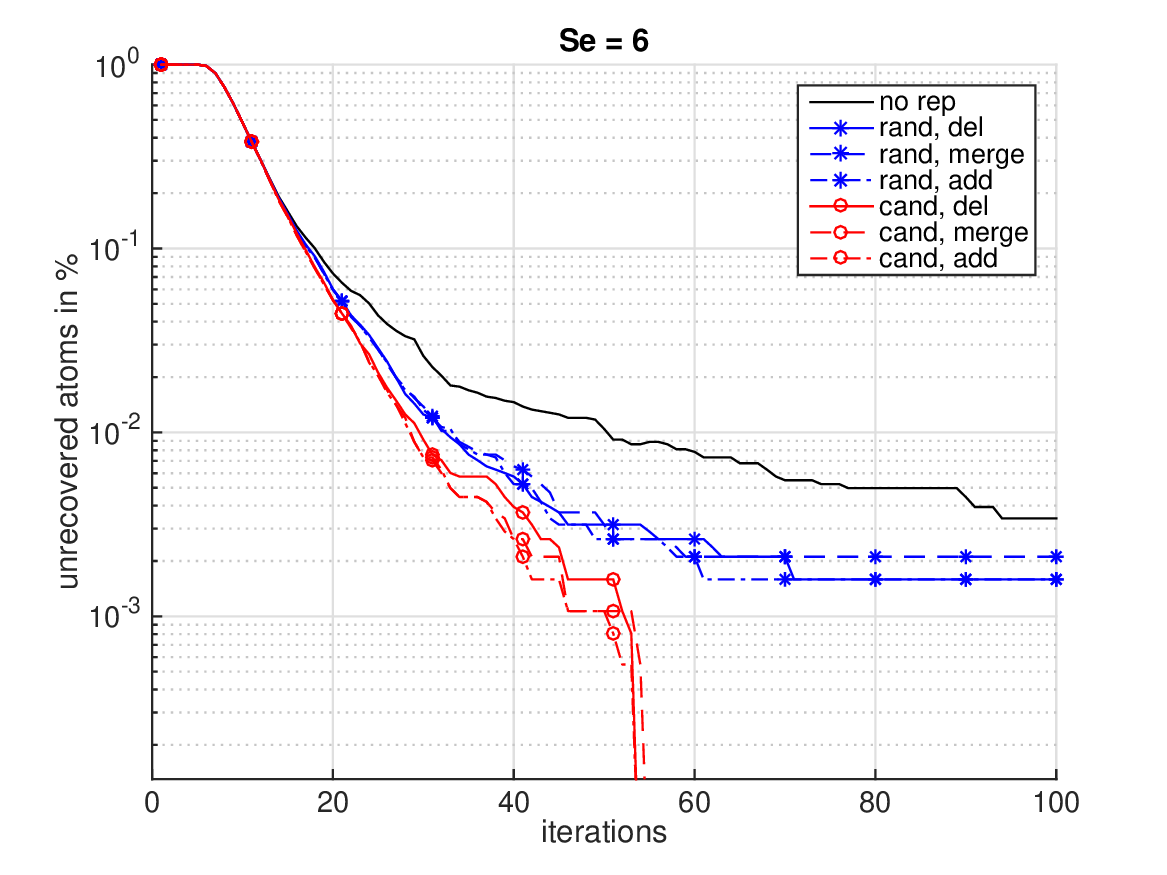}&\hspace{-8mm}\includegraphics[width=0.35\textwidth]{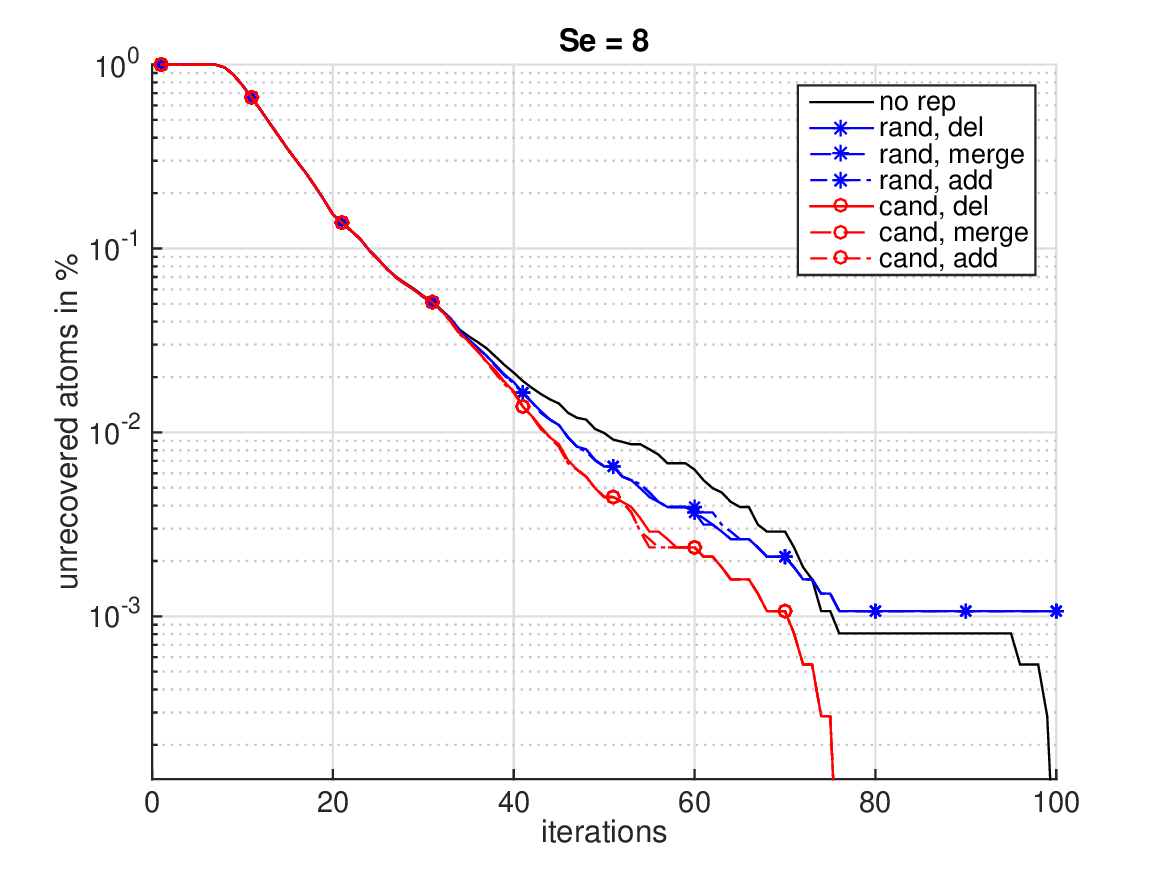}\\
	\end{tabular}
	\caption{Recovery rates of ITKrM without replacement, random and candidate replacement for various input sparsity levels $S_e$ and atom combination strategies, with coherence threshold $\mu_{\max}=0.7$.\label{fig:replace_Se}}
\end{figure}
The results are shown in Figure~\ref{fig:replace_Se}. As before, candidate replacement outperforms random or no replacement and leads to 100\% recovery in all cases. Comparing the speed of convergence we see that it is higher the lower the sparsity level is, so for $S_e =4$ we get 100\% recovery after about 30 iterations, for $S_e = 6$ after about 65 iterations and for $S_e = 8$ after 75 iterations. This would suggest that for the best performance we should always pick a lower than average sparsity level. However, the speed of convergence for $S_e = 4$ comes at the price of precision, as can be seen in the small table below, which lists both the average distance $d(\pdico,\dico)$ of the recovered dictionaries from the generating dictionary after 100 iterations as well as the mean atom distances $d_1(\pdico,\dico): = \frac{1}{K}\sum_k\| \atom_k-\patom_k\|_2$.
\begin{table}[h]
	\centering
	\begin{tabular}{r|ccc}
		&  $S_e=4$  &  $S_e=6$ &    $S_e=8$   \\
		\hline
		$d(\pdico,\dico)$  & 0.0392 & 0.0312 &  0.0304\\
		$d_1(\pdico,\dico)$ & 0.0322  & 0.0256 & 0.0250
	\end{tabular}
\end{table}
For both distances there is an increase in precision, going from $S_e =4$ to $S_e = 6$, but hardly any improvement by going from $S_e = 6$ to $S_e = 8$. This suggests to choose the average or a slightly higher than average sparsity level. Alternatively, to get the best of both worlds, one should start with a smaller sparsity level and then slowly increase to the average sparsity level. Unfortunately this approach relies on the knowledge of the average sparsity level, which in practice is unknown. Considering that also the size of the dictionary is unknown this can be considered a minor problem. After all, if we underestimate the dictionary size, this will limit the final precision more severely. Assume for instance that we set $K-1$ instead of $K$. In this case recovering a dictionary $\pdico$ with $K-2$ generating atoms plus one 1:1 combination of two generating atoms leading to $d(\pdico,\dico)\gtrsim \frac{1}{2}$ is actually the best we can hope for. \\
Therefore, in the next section we will use our candidate atoms to make the big step towards adaptive selection of both sparsity level and dictionary size.

\section{Adaptive dictionary learning}\label{sec:adaptive}

We first investigate how to adaptively choose the sparsity level for a dictionary of fixed size.

\subsection{Adapting the sparsity level}\label{sec:adaptS}

In the numerical simulations of the last section we have seen that the sparsity level $S$ given as parameter to the ITKrM algorithm influences both the convergence speed and the final precision of the learned dictionary. \\
When underestimating the sparsity level, meaning providing $S_e < S$ instead of $S$, the algorithm tends to recover the generating dictionary in less iterations than with the true sparsity level. Note also that the computational complexity of an iteration increases with $S_e$, so a smaller sparsity level leads to faster convergence not only in terms of iterations but also reduces the computation time per iteration. The advantage of overestimating the sparsity level, $S_e > S$ on the other hand, is the potentially higher precision, so the final error between the recovered and the generating dictionary (atoms), can be smaller than for the true sparsity level $S$.
Intuitively this is due to the fact that for $S_e > S$, thresholding with the generating dictionary is more likely to recover the correct support, in the sense that $I \subset I^t$. For a clean signal, $y = \dico_I x_I$ this means that the residual is zero, so that the estimate of every atom in $I^t$, even if not in $I$, is simply reinforced by itself $\ip{\atom_i}{y} \atom_i$. However, in a noisy situation, $y = \dico_I x_I + r$, where the residual has the shape $a=Q_{I^t} r$ the estimate of the additional atom $i \in I^t /I$ is not only reinforced but also disturbed by adding noise in form of the residual once more than necessary. Depending on the size of the noise and the inner product this might not always be beneficial to the final estimate. Indeed, we have seen that for large $S$, where the smallest coefficients in the support are already quite small, overestimating the support does not improve the final precision. \\
To further see that both under- and overestimating the sparsity level comes with risks, assume that we allow $S+1$ instead of the true sparsity level $S$ for perfectly sparse, clean signals. Then any dictionary, derived from the generating dictionary by replacing a pair of atoms $(\atom_i, \atom_j)$ by $(\tilde \atom_i, \tilde \atom_j) = A (\atom_i, \atom_j)$ for an invertible (well conditioned) matrix $A$, will provide perfectly $S+1$-sparse representations to the signals and be a fixed point of ITKrM. Providing $S-1$ instead of $S$ can have even more dire consequences since we can replace any generating atom with a random vector and again have a fixed point of ITKrM. If the original dictionary is an orthonormal basis and the sparse coefficients have equal size in absolute value any such disturbed estimator even gives the same approximation quality. However, in more realistic scenarios, where we have coherence, noise or imbalanced coefficients and therefore the missing atom has the same probability as the others to be among the $S-1$ atoms most contributing to a signal, the generating dictionary should still provide the smallest average approximation error.
Indeed, whenever we have coherence, noise or imbalanced coefficients the signals can be interpreted as being 1-sparse (with enormous error and miniscule gap $c(1)/c(2)$) in the generating dictionary, so learning with $S_e=1$ should lead to a reasonable first estimate of most atoms. Of course if the signals are not actually 1-sparse this estimate will be somewhere between rough, for small $S$, and unrecognisable, for larger $S$, and the question is how to decide whether we should increase $S_e$. If we already had the generating dictionary, the simplest way would be to look at the residuals and see how much we can decrease their energy by adding another atom to the support. A lower bound for the decrease of a residual $a$ can be simply estimated by calculating $\max_k (\ip{\atom_k}{a})^2$.
\\
If we have the correct sparsity level and thresholding recovers the correct support $I^t = I$, the residual consists only of noise, $a = Q(\dico_I)(\dico_I x_I + r) = Q(\dico_I) r \approx r $. 
For a Gaussian noise vector $r$ and a given threshold $\theta \cdot \|r\|_2$, we now estimate how many of the remaining $K-S$ atoms can be expected to have inner products larger than $\theta \cdot \|r\|_2$ as
\begin{align}
	\E\left(\sharp\{ k: |\ip{r}{\atom_k}|^2 > \theta^2 \cdot \|r\|^2_2 \}\right) = \sum_k \P\left( |\ip{r}{\atom_k}|^2 > \theta^2 \cdot \|r\|^2_2 \right)< 2(K-S) e^{-\frac{d \theta^2}{2}}.
\end{align}
In particular, setting $\theta = \theta_K: = \sqrt{2 \log(4K)/d}$ the expectation above is smaller than $\frac{1}{2}$. This means that if we take the empirical estimator of the expectation above, using the approximation $r_n \approx a_n$, we should get 
\begin{align}
	\frac{1}{N} \sum_n \sharp \{ k: |\ip{a_n}{\atom_k}|^2 >  \theta_K^2 \cdot \|a_n\|^2_2\} \lesssim \frac{1}{2},
\end{align}
which rounds to zero indicating that we have the correct sparsity level.\\
Conversely, if we underestimate the correct sparsity level, $S_e = S - m$ for $m>0$, then thresholding can necessarily only recover part of the correct support, $I^t \subset I$. Denote the set of missing atoms by $I^m = I/I^t$. The residual has the shape 
\begin{align*}
	a = Q(\dico_{I^t})(\dico_I x_I + r) = Q(\dico_{I^t})(\dico_{I^m} x_{I^m}+ r )  \approx \dico_{I^m} x_{I^m} + r
\end{align*} 
For all missing atoms $i \in I^m$ the squared inner products are approximately 
\begin{align*}
	|\ip{a}{\atom_i}|^2  \approx (x_i + \ip{r}{\atom_i})^2.
\end{align*} 
Assuming well-balanced coefficients, where $|x_i| \approx 1/\sqrt{S}$ and therefore $\| \dico_{I^m} x_{I^m}\|_2^2 \approx m/S$, a sparsity level $S\lesssim \frac{d}{2 \log(4K)}$ and reasonable noiselevels, this means that with probability at least $\frac{1}{2}$ we have for all $i\in I^m$
\begin{align*}
	|\ip{a}{\atom_i}|^2  \gtrsim  |x_i|^2 \gtrsim \frac{1}{2m} (\| \dico_{I^m} x_{I^m}\|_2^2 + \|r\|_2^2) \gtrsim \theta_K^2 \|a\|_2^2,
\end{align*}
and in consequence
\begin{align}
	\frac{1}{N} \sum_n \sharp \{ k: |\ip{a_n}{\atom_k}|^2 >  \theta_K^2 \cdot \|a_n\|^2_2\} \gtrsim \frac{m}{2}.
\end{align}
This rounds to at least 1, indicating that we should increase the sparsity level.\\
\\
Based on the two estimates above and starting with sparsity level $S_e=1$ we should now be able to arrive at the correct sparsity level $S$. Unfortunately, the indicated update rule for the sparsity level is too simplistic in practice as it relies on thresholding always finding the correct support given the correct sparsity level. Assume that $S_e = S$ but thresholding fails to recover for instance one atom, $I^t = I_{i\leftrightarrow j}$. Then we still have $a = Q(\dico_{I^t})(x_i \atom_i + r) \approx x_i \atom_i + r$ and $|\ip{\atom_i}{a}|^2 \gtrsim \theta_K^2 \|a\|_2$. If thresholding constantly misses one atom in the support, for instance because the current dictionary estimate is quite coherent, $\mu \gg 1/\sqrt{d}$, or not yet very accurate, this will lead to an increase $S_e=S+1$. However, as we have discussed above, while increasing the sparsity level increases the chances for full recovery by thresholding, it also increases the atom estimation error which decreases the chances for full recovery. Depending on which effect dominates, this could lead to a vicious circle of increasing the sparsity level, which decreases the accuracy leading to more failure of thresholding and increasing the sparsity level. In order to avoid this risk we should take into account that thresholding might fail to recover the full support and be able to identify such failure. Further, we should be prepared to also decrease the sparsity level. \\
The key to these three goals is to also look at the coefficients of the signal approximation. Assume that we are given the correct sparsity level $S_e =S$ but recovered $I^t = I_{i\leftrightarrow j}$. Defining $I_{i\rightarrow }= I\setminus\{i\}$, the corresponding coefficients $\tilde x_{I^t} $ have the shape,
\begin{align} 
	\tilde x_{I^t} = \dico_{I^t}^\dagger( \dico_I x_I + r )& = \dico_{I^t}^\dagger( \dico_{I_{i\rightarrow }} x_{I_{i\rightarrow }} + \atom_i x_i + r) \notag \\
	& =  (x_{I_{i\rightarrow}}, 0) + (\dico_{I^t}^\star\dico_{I^t})^{-1} \dico_{I^t}^\star (\atom_i x_i + r),
\end{align}
meaning $|\tilde x_{I^t}(j)|^2 \leq( \mu^2 |x_i|^2 + |\ip{\atom_j}{r}|^2)/(1-\mu S)^2$ or even $|\tilde x_{I^t}(j)|^2 \lesssim \mu^2 |x_i|^2 + |\ip{\atom_j}{r}|^2$. Since the residual is again approximately $a \approx \atom_i x_i + r$, this means that for incoherent dictionaries the coefficient of the wrongly chosen atom is likely to be below the threshold $\theta_K^2 \|a\|_2$, while the one of the missing atom will be above the threshold, so we are likely to keep the sparsity level the same.\\
Similarly if we overestimate the sparsity level $S_e = S+1$ and recover an extra atom $I^t = I_{\leftarrow j}: = I \cup \{j\}$, we have $a = Q(\dico_{I^t})r \approx r$ while the coefficient of the extra atom will be of size $|\tilde x_{I^t}(j)|^2 \approx |\ip{\atom_j}{r}|^2 < \theta_K^2 \| a \|^2_2$. 
All in all our estimates suggest that we get a more stable estimate of the sparsity level by averaging 
the number of coefficients $\tilde x_{I^t} =  \dico_{I^t}^\dagger y$ and residual inner products $(\ip{\atom_i}{a})_{i\notin I^t}$ that have squared value larger than $\theta_K^2$ times the residual energy. 
However, the last detail we need to include in our considerations is the reason for thresholding failing to recover the full support given the correct sparsity level in first place. Assume for instance, that the signal does not contain noise, $y = \dico_I x_I$ but that the sparse coefficients vary quite a lot in size. We know (from Appendix~\ref{sec:app_tech} or \citep{bennett62}) that in case of i.i.d. random coefficient signs, $\P(\signop(x_i)=1)=1/2$, the inner products of the atoms inside resp. outside the support concentrate around,
\begin{align*}
	i \in I &\qquad |\ip{\atom_i}{\dico_I x_I}| \approx |x_i| \pm \big({\textstyle \sum_{k\neq i}} x_k^2 |\ip{\atom_i}{\atom_k}|^2\big)^{1/2} \approx |x_i| \pm \mu \| y\|_2  \\
	i\notin I &\qquad |\ip{\atom_i}{\dico_I x_I}| \approx \left({\textstyle \sum_k} x_k^2 |\ip{\atom_i}{\atom_k}|^2 \right)^{1/2} \approx \mu \| y\|_2.
\end{align*}
This means that thresholding will only recover the atoms corresponding to the $S_r$-largest coefficients for $S_r< S$, that is, $I_r =\{i \in I : |x_i| \gtrsim \mu \| y\|_2\}$. 
The good news is that these will capture most of the signal energy, $\| P(\dico_{I^t})y \|_2^2 \approx \|\dico_{I_r}x_{I_r} \|^2_2 \approx \| y\|^2_2$, meaning that in some sense the signal is only $S_r$ sparse. It also means that for $\mu^2\approx 1/d$, we can estimate the {\it recoverable} sparsity level of a given signal as the number of squared coefficients/residual inner products that are larger than
\begin{align}
	\frac{1}{d}  \| P(\dico_{I^t})y \|_2^2 + \frac{2 \log(4K)}{d}  \| Q(\dico_{I^t})y \|_2^2.
\end{align}
If $S_n$ is the estimated recoverable sparsity level of signal $y_n$, a good estimate of the overall sparsity level $S$ will be the rounded average sparsity level $\bar S = \lfloor \frac{1}{N} \sum_n S_n \rceil$. The corresponding update rule then is to increase $S_e$ by one if $\bar S > S_e$, keep it the same if $\bar S = S_e$ and decrease it by one if $\bar S < S_e$, formally
\begin{align}
	S_e^{new} = S_e + \signop(\bar S - S_e).
\end{align}
To avoid getting lost between numerical and explorative sections we will postpone an algorithmic summary to the appendix and testing of our adaptive sparsity selection to Subsection~\ref{sec:adap_synth}. Instead we next address the big question how to adaptively select the dictionary size.

\subsection{Adapting the dictionary size}\label{sec:adaptK}

The common denominator of all popular dictionary learning algorithms, from MOD to K-SVD, is that before actually running them one has to choose a dictionary size. This choice might be motivated by a budget, such as being able to store $K$ atoms and $S$ values per signal, or application specific, that is, the expected number of sources in sparse source separation. In applications such as image restoration $K$ (like $S$) is either chosen ad hoc or experimentally with an eye towards computational complexity, and one will usually find $d \leq K \leq 4d$, and $S = \sqrt{d}$. If algorithms include some sort of adaptivity of the dictionary size, this is usually in the form of not updating unused atoms, a rare occurence in noisy situations, and deleting them at the end.
Also this strategy can only help if $K$ was chosen too large but not if it was chosen too small.
\\
Underestimating the size of a dictionary obviously prevents recovery of the generating dictionary. For instance, if we provide $K-1$ instead of $K$ the best we can hope for is a dictionary containing $K-2$ generating atoms and a $1:1$ combination of the two missing atoms. The good news is that if we are using a replacement strategy one of the candidates will encode the $1:1$ complement, similar to the situation discussed in the last section, where we are given the correct dictionary size but had a double atom.\\
Overestimating the dictionary size does not prevent recovering the dictionary per se, but can decrease recovery precision, meaning that a bigger dictionary might not actually provide smaller approximation error. To get an intuition what happens in this case assume that we are given a budget of $K+1$ instead of $K$ atoms and the true sparsity level $S$. The most useful way to spend the extra budget is to add a $1:1$ combination of two atoms, which frequently occur together, meaning $\atom_0 \propto \atom_i + h \atom_j$ for $h = \signop(\ip{\atom_i}{\atom_j})$. The advantage of the augmented dictionary $\pdico = (\atom_0, \dico)$ is that some signals are now $S-1$ sparse. The disadvantage is that $\pdico$ is less stable since the extra atom $\atom_0$ will prevent $\atom_i$ or $\atom_j$ to be selected by thresholding whenever they are contained in the support in a $1:h$ ratio. This disturbs the averaging process and reduces the final accuracy of both $\atom_i$ and $\atom_j$. \\
The good news is that the extra atom $\atom_0$ is actually quite coherent with the dictionary $\absip{\atom_0}{\atom_{i(j)}} \geq 1/\sqrt{2}$, so if we have activated a replacement threshold of $\mu_{\max} \leq 1/\sqrt{2}$, the atom $\atom_0$ will be soon replaced, necessarily with another useless atom. \\
This suggests as strategy for adaptively choosing the dictionary size to decouple our replacement scheme into pruning and adding, which allows to both increase and decrease the dictionary size. We will first have a closer look at pruning.
\\

\noindent{\bf Pruning atoms.}\\
From the replacement strategy we can derive two easy rules for pruning: 1) if two atoms are too coherent, delete the less often used one or merge them, 2) if an atom is not used, delete it. Unfortunately, the second rule is too naive for real world signals, containing among other imperfections noise, which means also purely random atoms are likely to be used at least once by mistake. To see how we need to refine the second rule assume again that our sparse signals are affected by Gaussian noise (of a known level), that is, $y=\dico_I x_I + r$ with $\E(\|r\|^2_2) = \rho^2$ and that our current dictionary estimate has the form $\pdico = (\atom_0, \dico)$, where $\atom_{0}$ is some vector with admissible coherence to $\dico$. 
Whenever $\atom_0$ is selected this means that thresholding has failed. From the last subsection we also know that we have a good chance of identifying the failure of thresholding by looking at the coefficients $\dico_{I^t}^\dagger( \dico_I x_I + r )$. The squared coefficient corresponding to the incorrectly chosen atom $\atom_0$ is likely to be smaller than $\lesssim \| \dico_I x_I \|^2_2 /d + |\ip{\atom_0}{r}|^2$ while the squared coefficient of a correctly chosen atom $i\in I\cap I^t$ will be larger than $ |x_i|^2 +  |\ip{\patom_i}{r}|^2 \gtrsim \| \dico_I x_I \|^2_2/S + |\ip{\atom_i}{r}|^2$ at least half of the time. The size of the inner product of any atom with Gaussian noise can be estimated as
\begin{align}
	\P\left(|\ip{\atom_k}{r}| > \tau \|r\|_2 \right) \leq  2 \exp\left( - \frac{d\tau^2}{2}\right).
\end{align}
Taking again $\| P(\dico_{I^t}) y \|_2$ as estimate for $\| \dico_I x_I \|_2$ and $\|a\|_2 = \| Q(\dico_{I^t}) y\|_2$ as estimate for $\|r\|_2$ we  
can define the refined value function $\tilde v(k)$ as the number of times an atom $\atom_k$ has been selected and the corresponding
coefficient has squared value larger than $\| P(\dico_{I^t}) y \|^2_2/d+ \tau^2 \|a_n\|^2_2$. Based on the bound above we can then estimate that
for $N$ noisy signals the value function of the unnecessary or random atom $\atom_0$ is bounded by $\tilde v (0) \lesssim 2N \exp\left( - \frac{d\tau^2}{2}\right):=M$, leading to a natural criterion for deleting unused atoms. Setting for instance $\tau = \theta_K = \sqrt{2\log(4 K)/d}$ we get $M = N/(2d)$.
Alternatively, we can say that in order to accurately estimate an atom we need $M$ reliable observations and accordingly set the threshold
to $\tau = \sqrt{2\log(2N/M)/d}$. \\
The advantage of a relatively high threshold $\tau \approx \sqrt{2\log (4K)/d}$ is that in low noise scenarios, we can also find atoms that are rarely used. The disadvantage is that for high $\tau$ the quantities $\tilde v(\cdot)$ we have to estimate are relatively small and therefore susceptible to random fluctuations. In other words, the number of training signals $N$ needs to be large enough to have sufficient concentration such that for unnecessary atoms the value function $\tilde v(\cdot)$ is actually smaller than $M$. Another consideration is that at the beginning, when the dictionary estimate is not yet very accurate, also the approximate versions of frequently used atoms will not be above the threshold often enough. This risk is further increased if we also have to estimate the sparsity level. If $S_e$ is still small compared to the true level $S$ we will overestimate the noise, and even perfectly balanced coefficients $1/\sqrt{S}$ will not yet be above the threshold. Therefore, pruning of the dictionary should only start after an embargo period of several iterations to get a good estimate of the sparsity level and most dictionary atoms. \\
In the replacement section we have also seen that after replacing a double atom with the 1:1 complement $\atom_i - \atom_j$ of a 1:1 atom $\atom_i + \atom_j$, it takes a few iterations for the pair $(\atom_i \pm \atom_j)$ to rotate into the correct configuration $(\atom_i, \atom_j)$, where they are recovered most of the time. In the case of decoupled pruning and adding, we run the risk of deleting a missing atom or a $1:1$ complement one iteration after adding it simply because it has not been used often enough. Therefore, every freshly added atom should not be checked for its usefulness until after a similar embargo period of several iterations, which brings us right to the next question when to add an atom.
\\

\noindent{\bf Adding atoms.}\\
To see when we should add a candidate atom to the dictionary, we have a look back at the derivation of the replacement strategy.
There we have seen that the residuals are likely to be either 1-sparse in the missing atoms (or 1:1 complements of the atoms doing the job of
two generating atoms), meaning $a \approx |x_i|/2 (\atom_i - \atom_j)$ or in a more realistic situation $a \approx |x_i|/2 (\atom_i - \atom_j) + r$,
or zero, which again in the case of noise means $a \approx r$. To identify a good candidate atom we observe again that if the residual consists only of (Gaussian) noise we have for any vector/atom $\gamma_k$
\begin{align}
	\P\left(|\ip{\gamma_k}{r}| > \tau_\Gamma \|r\|_2 \right) \leq  2 \exp\left( - \frac{d\tau_\Gamma^2}{2}\right).
\end{align}
If on the other hand the residual consists of a missing complement, the corresponding candidate $\gamma_\ell \approx (\atom_i - \atom_j)/\sqrt{2}$ should have $\absip{a}{\gamma_\ell}\approx |x_i|/\sqrt{2}\gtrsim \tau_\Gamma \|a\|_2$. This means that we can use a similar strategy as for the dictionary atoms to distinguish between useful and useless candidates. In the last candidate iteration, using $N_\Gamma$ residuals, we count for each candidate atom $\gamma_k$ how often it is selected and satisfies $|\ip{\gamma_k}{a}| > \tau_\Gamma \|a\|_2 $. Following the dictionary update and pruning we then add all candidates to the dictionary whose value function is higher than $M_\Gamma = 2N_\Gamma \exp\left( - \frac{d\tau_\Gamma^2}{2}\right)$ and which are incoherent enough to atoms already in the dictionary.  \\
Now, having dealt with all aspects necessary for making ITKrM adaptive, it is time to test whether adaptive dictionary learning actually works.

\subsection{Experiments on synthetic data}\label{sec:adap_synth}

We first test our adaptive dictionary learning algorithm on synthetic data\footnote{Again we want to point all interested in reproducing the experiments to the matlab toolbox available at \toolboxlink}.
The basic setup is the same as in Subsection~\ref{sec:exp_replace}. However, one type of training signals will again consist of 4, 6 and 8-sparse signals in a 1:2:1 ratio with 5\% outliers, while the second type will consist of 8, 10 and 12-sparse signals in a 1:2:1 ratio with 5\% outliers. Additionally, we will consider the following settings.\\
The {\bf minimal number of reliable observations} $M$ for a dictionary atom is set to either $d$, $\round{d\log{d}}$ or $\round{2d\log{d}}$ with corresponding coefficient thresholds $\tau = \sqrt{2\log(2N/M)/d}$. For the candidate atoms the minimal number of reliable observations in the 4th (and last) candidate iteration is always set to $M_\Gamma = d$.\\
The {\bf sparsity level} is adapted after every iteration starting with iteration $m = \round{\log d}= 5$. The initial sparsity level is 1.\\
{\bf Promising candidate atoms} are added to the dictionary after every iteration, starting again in the $m$-th iteration. In the last $3m$ iterations no more candidate atoms are added to the dictionary.\\
{\bf Coherent dictionary atoms} are merged after every iteration, using the threshold $\mu_{\max} = 0.7$. As weights for the merging we use the value function of the atoms from the most recent iteration.\\
{\bf Unused dictionary atoms} are pruned after every iteration starting with iteration $2m$. An atom is considered unused if in the last $m$ iterations the number of reliable observations has always been smaller than $M$.  Candidate atoms, freshly added to the dictionary, can only be deleted because they are unused at least $m$ iterations later. In each iteration at most $\round{d/5}$ unused atoms are deleted, with an additional safeguard for very undercomplete dictionaries ($K_e < d/10$) that at most half of all atoms can be deleted.\\
The {\bf initial dictionary} is chosen to be either of size $K_e = d=128$, $K_e=4d = 512$ or the correct size $K_e=K$, with the atoms drawn i.i.d. from the unit sphere as before. Figure~\ref{fig:adap} shows the results averaged over 10 trials each using a different initial dictionary.\\
\begin{figure}[tbh]
	\centering
	\begin{tabular}{cc}
		$S \in \{4,6,8\}$ & $S \in \{8,10,12\}$\\
		\includegraphics[width=0.4\textwidth]{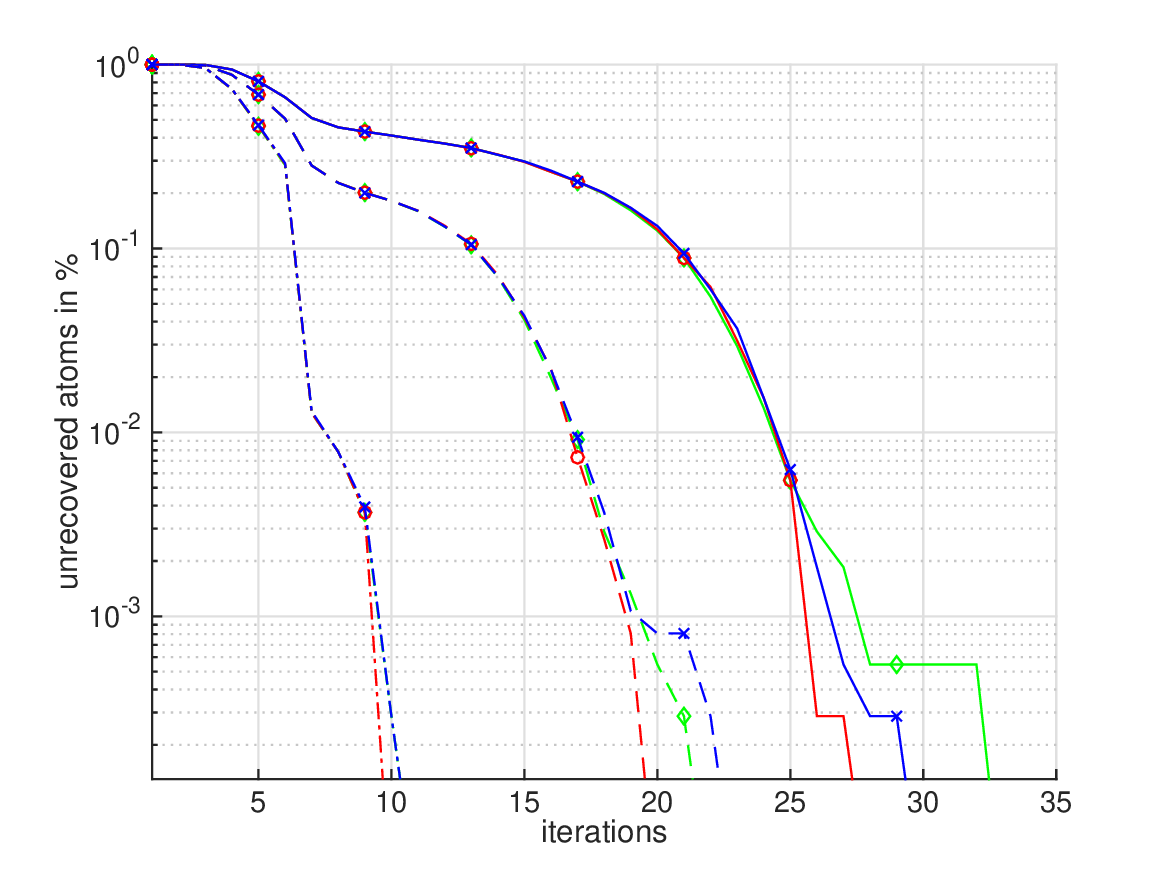} &\includegraphics[width=0.4\textwidth]{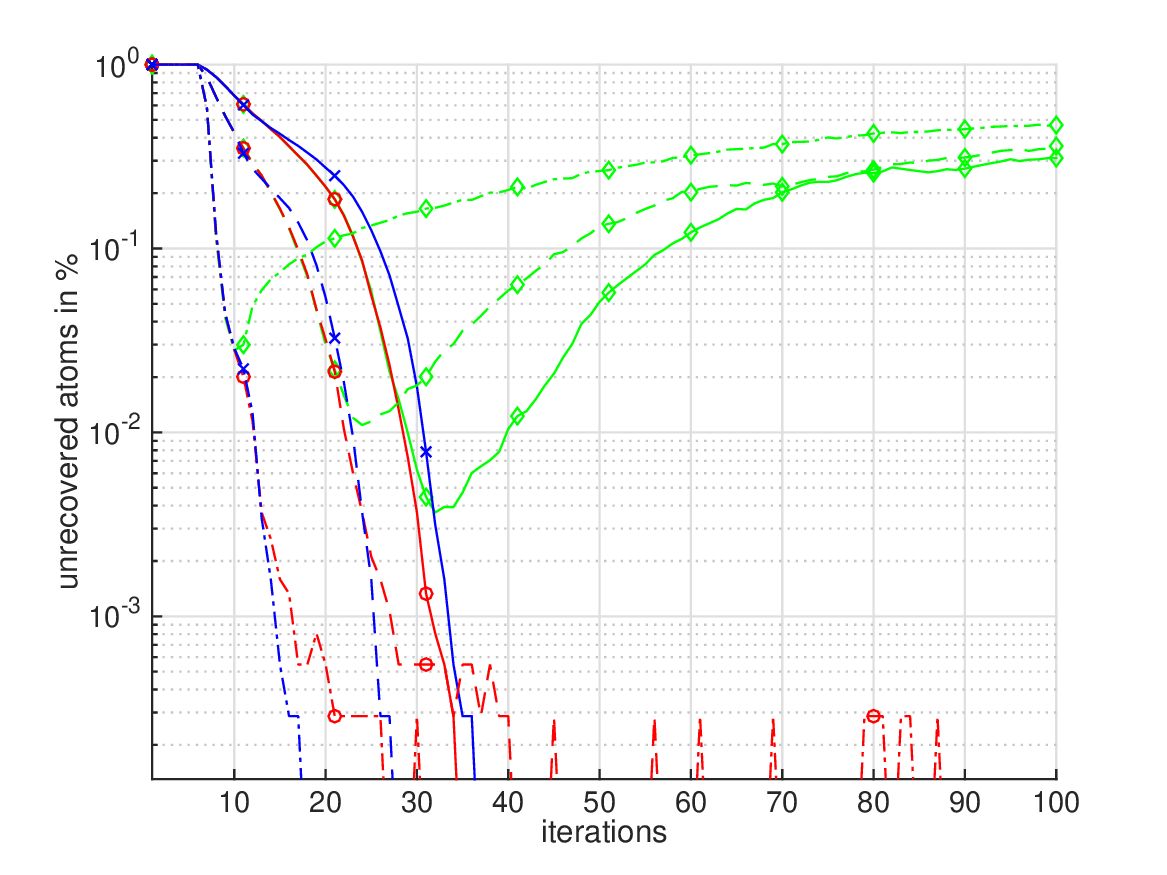}  \\ 
		\includegraphics[width=0.4\textwidth]{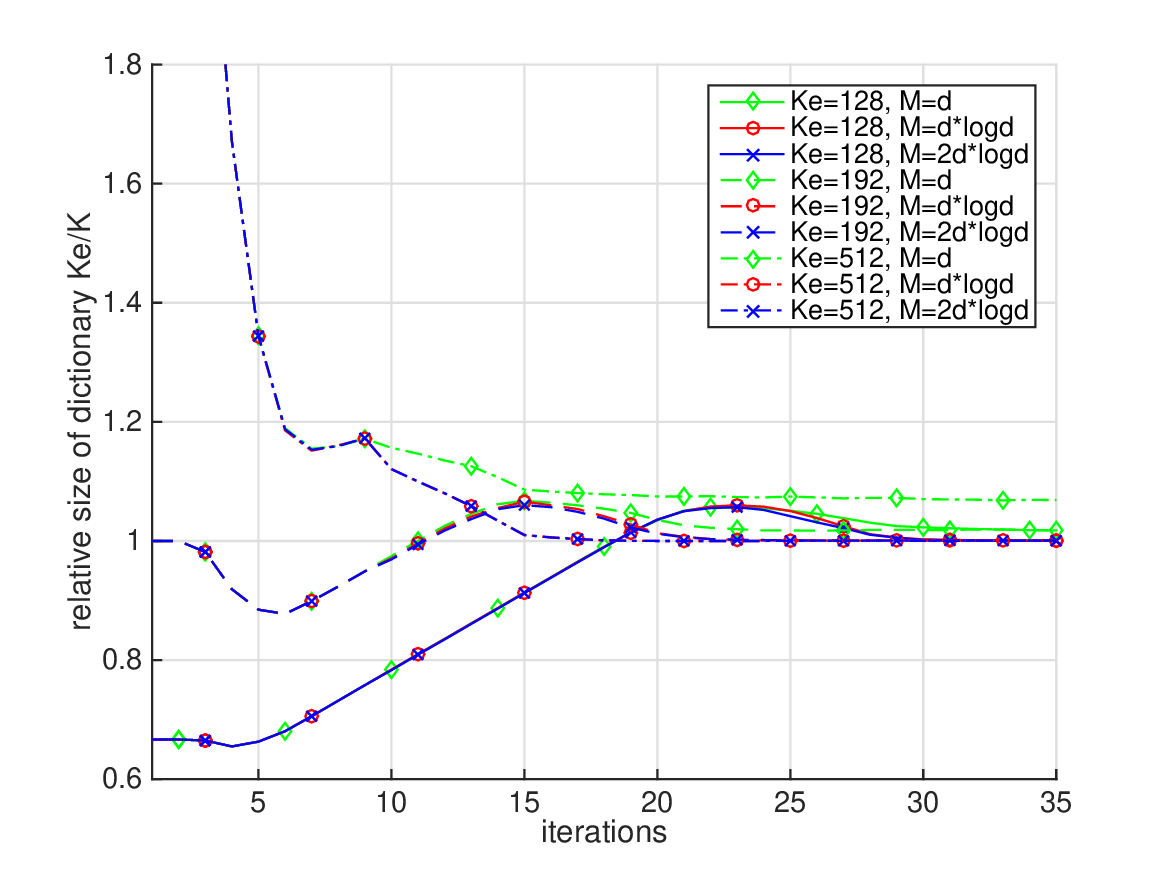}& \includegraphics[width=0.4\textwidth]{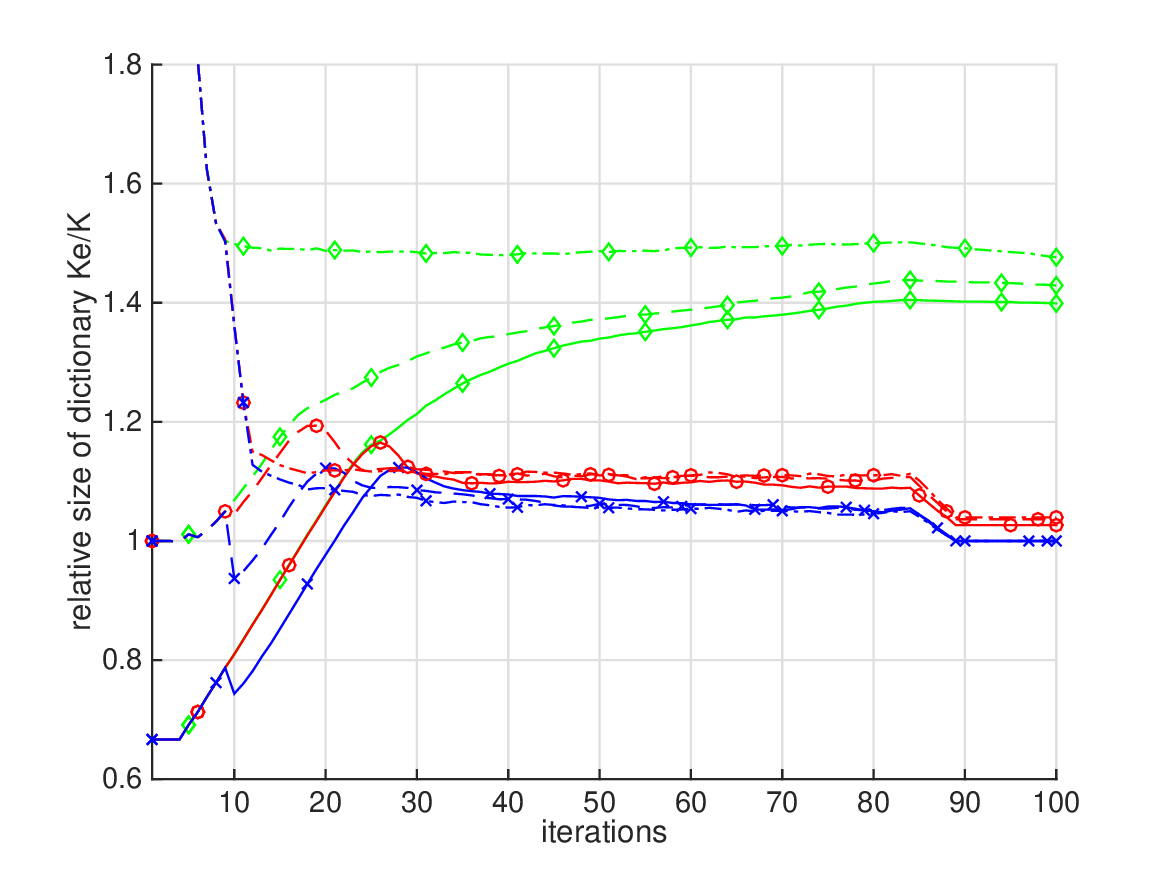}\\ 
	\end{tabular}
	\caption{Average recovery rates (top row) and dictionary sizes (bottom row) for adaptive dictionary learning based on ITKrM on signals with sparsity $S = 4,6,8$  (left column) resp. $S=8,10,12$ (right column) in a 1:2:1 ratio for various initial dictionary sizes $K_e$ and required number of observations per atom $M$.\label{fig:adap}}
\end{figure}
The first observation is that all our effort paid off and that adaptive dictionary learning works.
For the smaller average sparsity level $S=6$, adaptive ITKrM always recovers all atoms of the dictionary and only overshoots
and recovers more atoms for $M=d$. The main difference in recovery speed derives from the size of the initial dictionary, where a larger dictionary size leads to faster recovery. \\
For the more challenging signals with average sparsity level $S=10$, the situation is more diverse. So while the
initial dictionary size mainly influences recovery speed but less the final number of recovered atoms, the cut off threshold $M$ for the minimal number of reliable observations is critical for full recovery. So for $M=d$ adaptive ITKrM never recovers the full dictionary. We can also see that not recovering the full dictionary is strongly correlated with overestimating the dictionary size. Indeed, the higher the overestimation factor for the dictionary size is, the lower is the amount of recovered atoms. For example, for $M=d$, $K_e =512$ the dictionary size is overestimated by a factor $1.5$ and only about half of the dictionary atoms are recovered. To see that the situation is not as bad as 
it seems we have a look at the average sorted atom recovery error. That is, we sort the recovery errors $(d(\atom_k,\pdico))_k$ after 100 iterations in ascending order and average over the number of trials. The resulting curves are depicted in Figure~\ref{fig:sorterr}. 
\begin{figure}[tbh]
	\centering
	\includegraphics[width=0.4\textwidth]{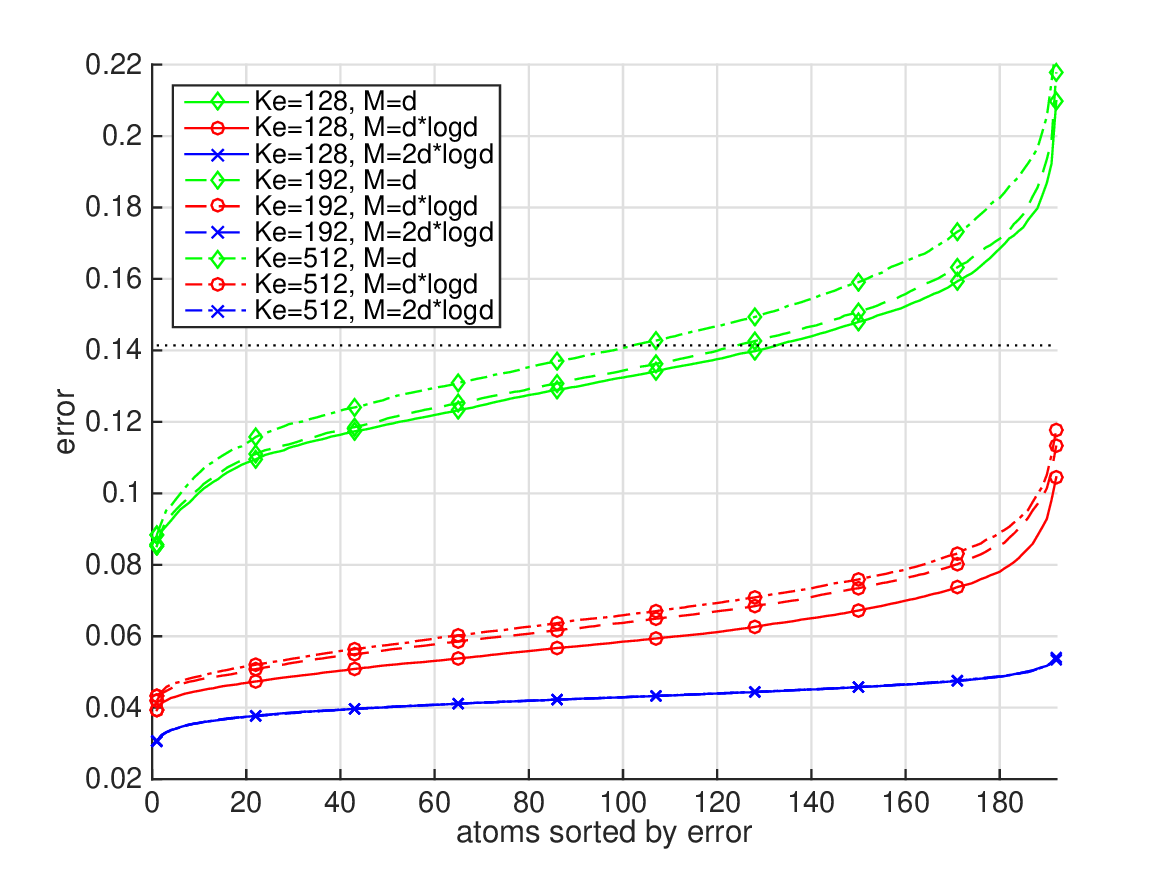}\\ 
	\caption{Average sorted recovery error $(d(\atom_k,\pdico))_k$ after 100 iterations of adaptive ITKrM on signals with sparsity $S=8,10,12$ in a 1:2:1 ratio for various initial dictionary sizes $K_e$ and required number of observations per atom $M$. \label{fig:sorterr}}
\end{figure}
As we can see, overestimating the dictionary size degrades the recovery in a gentle manner. For the unrecovered atoms in case $M=d$, $K_e =512$, the largest inner product with an atom in the recovered dictionary is below the cut-off threshold of $0.99$ which corresponds to an error of size $\approx 0.14$ but for almost all of them it is still above $0.98$ which corresponds to an error of $0.2$. Also the oscillating recovery behaviour for $M=\round{d\log(d)}$ before the final phase, where no more atoms are added, can be explained by the fact that the worst approximated atoms have average best inner product very close to $0.99$. So, depending on the batch of training signals in each iteration, their inner product is below or above $0.99$ and accordingly they count as recovered or not. In general, we can see that the more accurate the estimate of the dictionary size is, the better is the recovery precision of the learned dictionary. This is only to be expected. After all, whenever thresholding picks a superfluous atom instead of a correct atom, the number of observations for the missing atom is reduced and moreover, the residual error added to the correctly identified atoms is increased.\\
The relative stability of these spurious atoms can in turn be explained by the fact that $S=10$ is at the limit of admissible sparsity for a generating dictionary with $\mu(\dico)=0.32$, especially for sparse coefficients with a dynamic range of $0.9^{S-1}\approx 2.58$. In particular,
thresholding is not powerful enough to recover the full support, so the residuals still contain several generating atoms. This promotes candidate atoms that are a sparse pooling of all dictionary atoms. These poolings again have a good chance to be selected in the thresholding and to be above the reliability threshold, thus positively reinforcing the effect. A quick look at the estimated sparsity level as well as the average number of coefficients above the threshold, or in other words, the average number of (probably) correctly identified atoms in the support, denoted by $S_t$, also supports this theory. So for average sparsity level $S=6$ the estimated sparsity level is $S_e = 6 =\round{5.7} $ and the average number of correctly identified atoms is $S_t \approx 5$, regardless of the setting. This is quite close to the average number of correctly identifiable atoms given $S_e=6$, which is $0.95*(0.25 *4 + 0.75 * 6)=5.225$. For average generating sparsity level $S=10$ the table below lists $S_e : S_t$ for all settings. 
\begin{table}[h]
	\centering
	\begin{tabular}{c|ccc}
		&  $d$& $d\log(d)$&  $2d\log(d)$ \\ \hline
		$128$   &  8 : 5.5  &  9 : 7.2  &  9 : 7.2\\
		$192$   &  8 : 5.4  &  9 : 7.0  &  9 : 7.2\\
		$512$   &  7 : 4.8  &  9 : 6.3  &  9 : 7.2\\
	\end{tabular}
\end{table}
We can see that even for the settings where the full dictionary is recovered, the estimated sparsity level is below $10$ and the number of correctly identified atoms lags even more behind. For comparison, for $S_e =9$ the average number of correctly identifiable atoms is $0.95*(0.25 *8 + 0.75 * 9) = 8.3125$. \\
We also want to mention that for signals with average generating sparsity $S = 6$ and $S_e = 6$ we can at best observe each atom $5.225\cdot\frac{N}{K}\approx 3266$ times which is only about 5 times the threshold $d\log(d)$. For the signals with higher sparsity level and $S_e =9$ we can at best observe each atom $8.3125\cdot \frac{N}{K}\approx 5195$ times which is about 8 times the threshold $d\log(d)$, meaning that we are further from the critical limit where we would also remove an exactly recovered generating atom. In general, when choosing the minimal number of observations $M$, one needs to take into account that the number of recoverable atoms is limited by $K_{\max} \leq S_e*N/M$. On the other hand for larger $S/S_e$ the generating coefficients will be smaller, meaning that they will be less likely to be over the threshold $\tau = \sqrt{2\log(2N/M)/d}$ if $M$ is small. This suggests to also adapt $M,\tau$ in each iteration according to the current estimate of the sparsity level.
Another strategy to reduce overshooting effects is to replace thresholding by a different approximation algorithm in the last rounds. The advantage of thresholding over more involved sparse approximation algorithms is its stability with respect to perturbations of the dictionary, see \cite{pali21} for the case of OMP. The disadvantage is that it can only handle small dynamic coefficient ranges. However, we have seen that using thresholding, we can always get a reasonable estimate of the dictionary. Also in order to estimate $S_e$, we already have a good guess which atoms of the threshold support were correct and which atoms outside should have been included. This suggests to remove any atom from the support for which there is a more promising atom outside the support, or in other words, to update the support by thresholding $(\pdico_{I_t}^\dagger y, \pdico_{I_t^c} (\I_d - P(\pdico_I)y)$. Iterating this procedure until the support is stable is known as Hard Thresholding Pursuit (HTP), \cite{fo11}. Using 2 iterations of HTP would not overly increase the computational complexity of adaptive ITKrM but could help to weed out spurious atoms. Also by not keeping the $S_e$ best atoms but only those above the threshold $\tau$ one could deal with varying sparsity levels, which would increase the final precision of the recovered dictionary.\\
Such a strategy might also help in addressing the only case where we have found our adaptive dictionary learning algorithm to fail spectacularly. This is - at first glance surprisingly - the most simple case of exactly 1-sparse signals and an initial dictionary size smaller than the generating size. At second glance it is not that surprising anymore. In case of underestimating the dictionary size $K - K_e = K_m >0$ the best possible dictionary consists of $K-2K_m$ generating atoms and $K_m$ 1:1 combinations of 2 non-orthogonal atoms of the form $\atom_{ij} = (\atom_i + h \atom_j )/\alpha_{ij} $, where $h =\signop{\ip{\atom_i}{\atom_j}} $ and $\alpha_{ij} = \sqrt{2+2\absip{\atom_i}{\atom_j}}$. In such a situation the (non-zero) residuals are again 1-sparse in the 1:1 complements $\tilde \atom_{ij} =(\atom_i - h \atom_j )/ \tilde \alpha_{ij} $, where $\tilde \alpha_{ij} = \sqrt{2-2\absip{\atom_i}{\atom_j}}$, and so the replacement candidates will be the 1:1 complements. However, the problem is that $\tilde \atom_{ij}$ is never picked by thresholding since for both $y \approx \atom_i$ and $y \approx \atom_j$ the inner product with $\atom_{ij}$ is larger,
\begin{align}
	\absip{\atom_{ij}}{\atom_i} = \sqrt{\frac{1+\absip{\atom_i}{\atom_j}}{2}} > \sqrt{\frac{1-\absip{\atom_i}{\atom_j}}{2}} = |\ip{\tilde \atom_{ij}}{\atom_i}|.
\end{align}
Still the inner product of $\tilde \atom_{ij}$ with the residual has magnitude $\approx 1/2 > \tau$ and so would be included in the support in a second iteration of HTP, thus keeping the chance that the pair $(\atom_{ij}, \tilde\atom_{ij})$ rotates into the correct configuration $(\atom_i,\atom_j)$ alive. \\
We will postpone a more in-depth discussion of how to further stabilise and improve adaptive dictionary learning to the discussion in Section~\ref{sec:discussion}. Here we will first check whether adaptive dictionary learning is robust to reality by testing it on image data.

\subsection{Experiments on image data}\label{sec:adap_im}

In this subsection we will learn dictionaries for the images {\it Mandrill} and {\it Peppers}. For those interested in results on larger, more practically relevant datasets, we refer to \cite{pako20}, where our adaptive dictionary learning schemes are used for image reconstruction in accelerated 2D radial cine MRI.\\
The {\bf training signals} are created as follows. Given a $256\times 256$ image, we contaminate it with Gaussian noise of variance $\tilde \nsigma^2 =  \nsigma^2/255$ for $ \nsigma^2 \in \{0, 5,10,15,20\}$. From the noisy image we extract all $8\times 8$ patches (sub-images), vectorise them and remove their mean. In other words, we assume that the constant atom, $\atom_0 \equiv 1/8$, is always contained in the signal, remove its contribution and thus can only learn atoms that are orthogonal to it. \\
The {\bf set-up} for adaptive dictionary learning is the same as for the synthetic data, taking into account that for the number of candidates $L$ and the memory $m$, we have $L = m =\round{\log (d) } = 4$, since the signals have dimension $d=64$. Also based on the lesson learned on the more complicated data set with average sparsity level $S=10$, we only consider as minimal number of observations $M=\round{d\log(d)}$ and $M=2d\log(d)$. The initial dictionary size $K_e$ is either $8, 64$ or $256$ and in each iteration we use all available signals, $N=62001$. All results are averaged over $10$ trials, each using a different initial dictionary and - where applicable - a different noise-pattern.
\begin{figure}[tbh]
	\centering
	\begin{tabular}{ccc}
		\includegraphics[height =4.5cm]{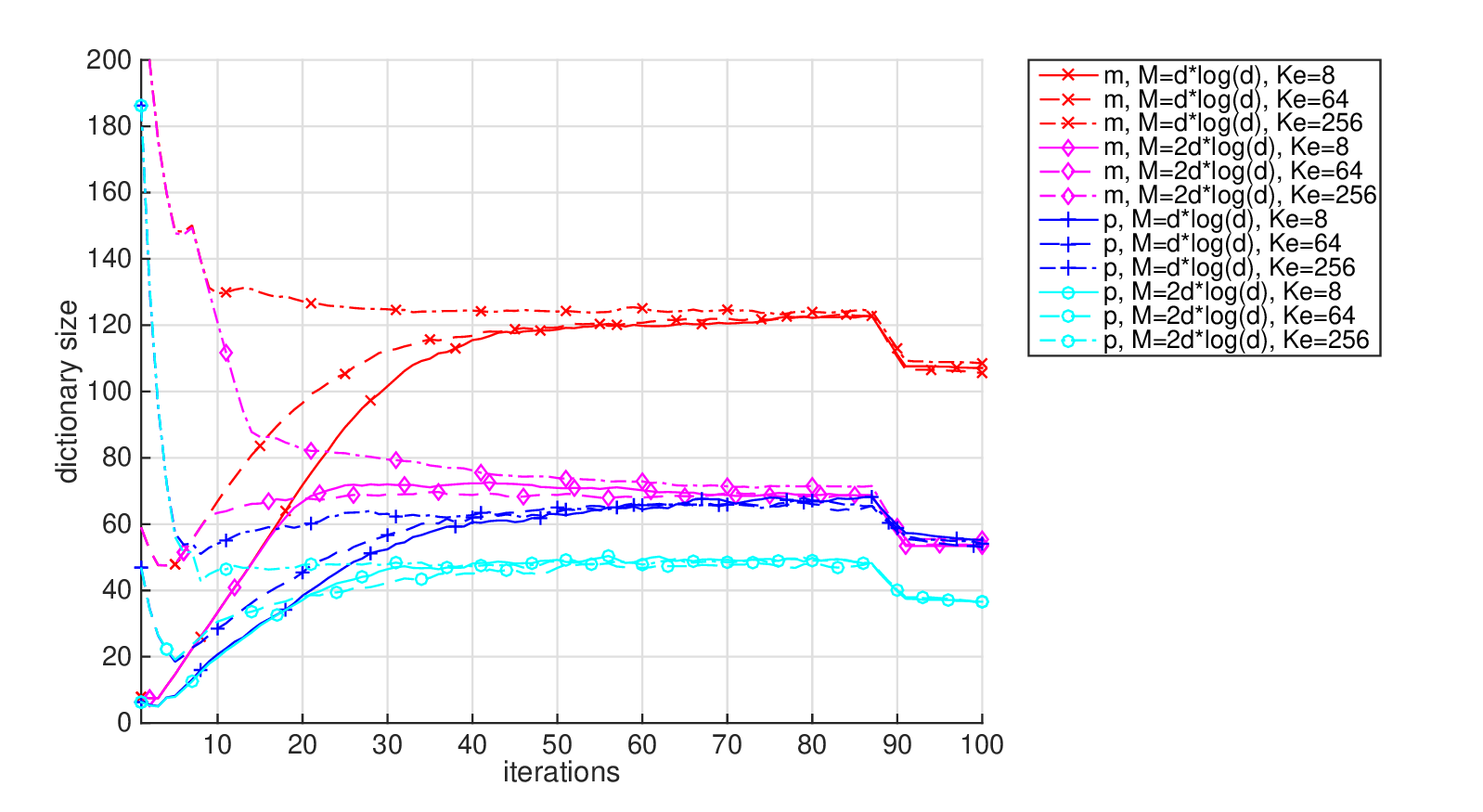} &\includegraphics[height=4.5cm]{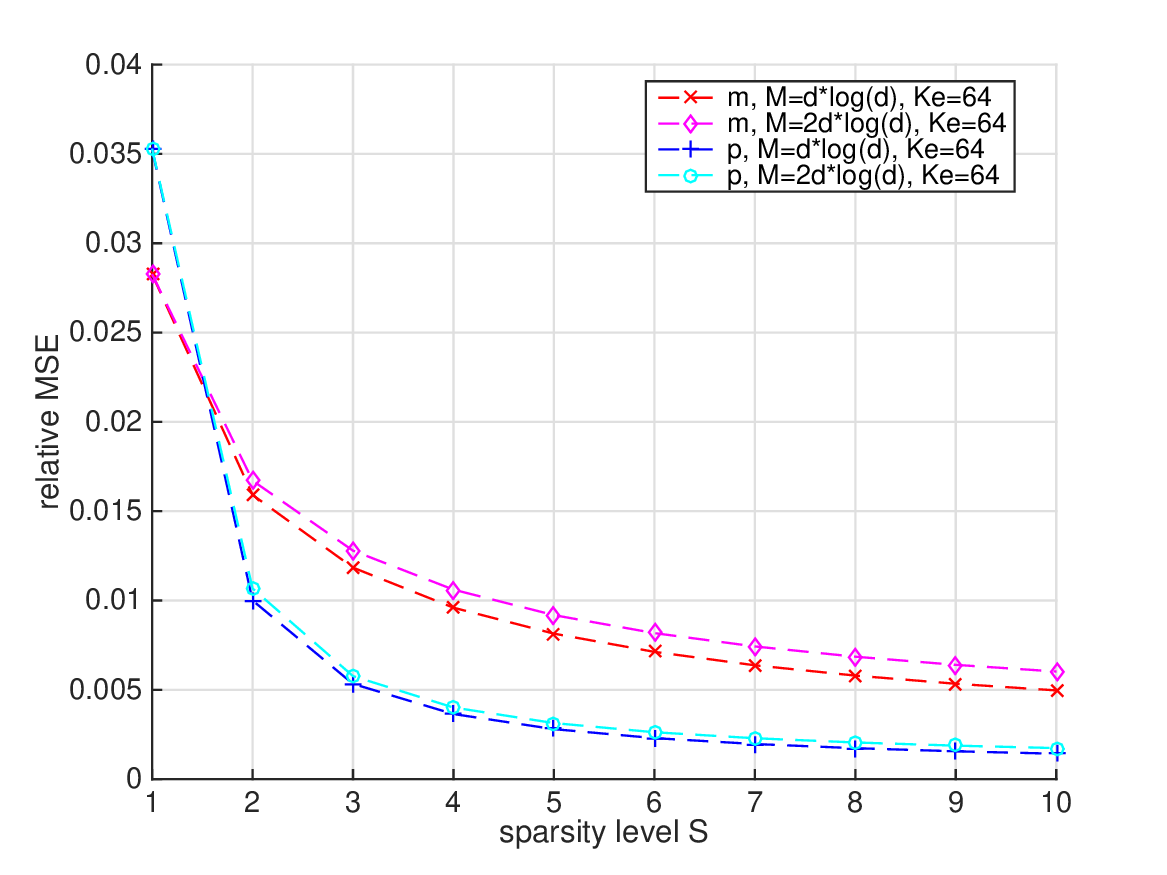} \\
	\end{tabular}
	\caption{Average dictionary sizes for adaptive dictionary learning based on ITKrM on all patches of {\it Mandrill}/{\it Peppers} for various initial dictionary sizes $K_e$ and required number of observations per atom $M$ (left). Average sparse approximation error of all patches of {\it Mandrill}/{\it Peppers} using OMP and the learned dictionaries with $K_e = 64$ and both choices of $M$ (right). \label{fig:adap_im}}
\end{figure}
In the first experiment we compare the sizes of the dictionaries learned on both clean images with various parameter settings as well as their approximation powers. The approximation power of a dictionary augmented by the flat atom $\atom_0$ for a given sparsity level $S$ is measured as $\| Y - \tilde Y\|_F^2/\|Y\|_F^2$, where $\tilde Y = (\tilde y_1, \ldots ,\tilde y_n)$ and $\tilde y_n$ is the S-sparse approximation to $y_n$ calculated by Orthogonal Matching Pursuit, \cite{parekr93}.\\
The results are shown in Figure~\ref{fig:adap_im}. We can see that as for synthetic data the final size of the learned dictionary does not depend much on the initial dictionary size, but does depend on the minimal number of observations. So for $M=\round{d\log(d)}$ the average dictionary size is about 106 atoms for {\it Mandrill} and 55 atoms for {\it Peppers}, while for $M=2d\log(d)$ we have about 54 atoms on {\it Mandrill} and 36 atoms on {\it Peppers}. The estimated sparsity level vs. average number of correctly identified atoms for {\it Mandrill} is $S_e = \round{2.1}=2$ vs. $S_t \approx1.5$ and for {\it Peppers} $S_e = \round{2.9}= 3$ vs. $S_t \approx 2.25$. Comparing the approximation power, we see that for both images the smaller (undercomplete) dictionaries barely lag behind the larger dictionaries. The probably most interesting aspect is that despite being smaller, the {\it Peppers}-dictionaries lead to smaller error than the {\it Mandrill}-dictionaries. This confirms the intuition that the smooth image {\it Peppers} has a lot more sparse structure than the textured image {\it Mandrill}. To better understand why for both images the larger dictionaries do not improve the approximation much, we have a look at the number of reliable observations for each atom in the last trial of $K_e=64$ in Figure~\ref{fig:adap_obs}. The corresponding dictionaries for {\it Mandrill}/{\it Peppers} can be found in Figures~\ref{fig:man_dico_obs}/\ref{fig:pep_dico_obs}.\\
\begin{figure}[h!]
	\centering
	\includegraphics[width=0.4\textwidth]{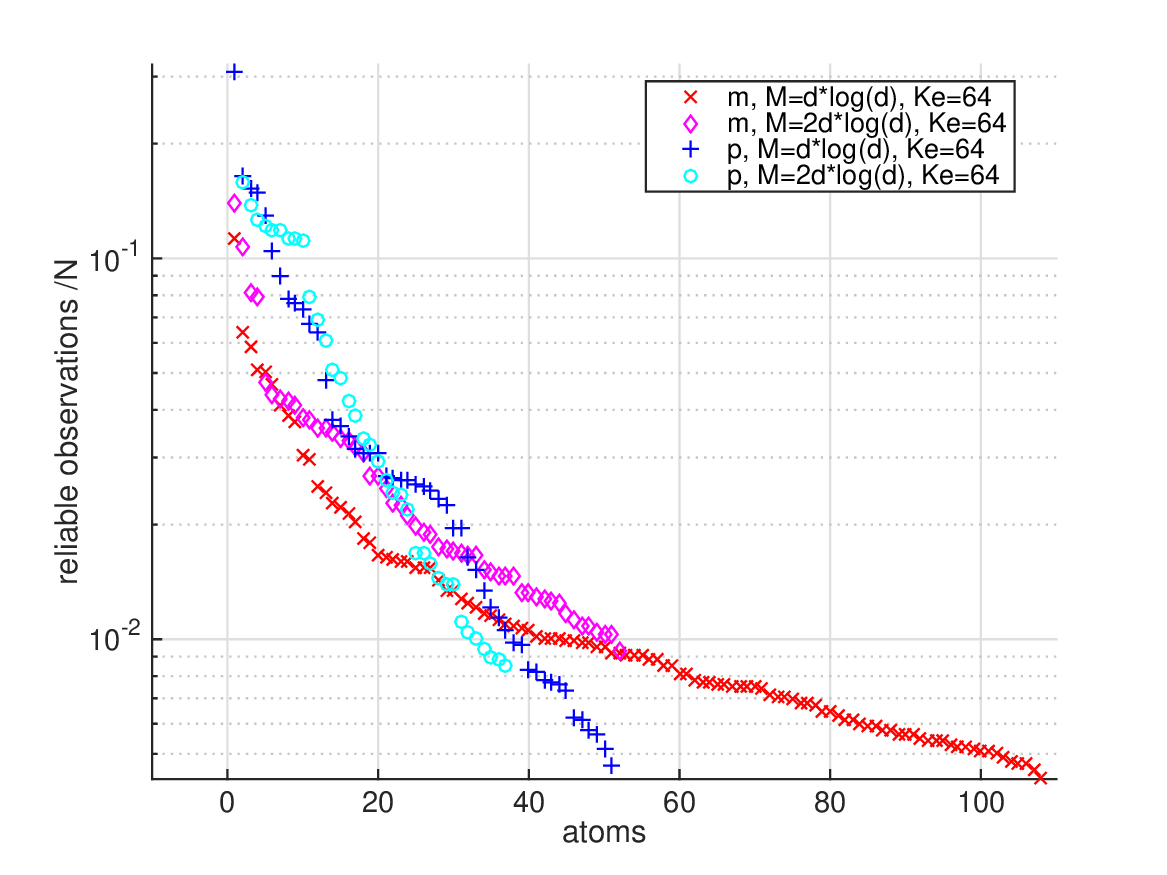} 
	\caption{Final number of reliable observations of the atoms in the dictionaries learned on {\it Mandrill}/{\it Peppers} with initial dictionary size
		$K_e=64$ in the last trial. \label{fig:adap_obs}}
\end{figure}
\begin{figure}[p]
	\centering
	\begin{tabular}{rr}
		\multicolumn{2}{c}{\includegraphics[height =3cm]{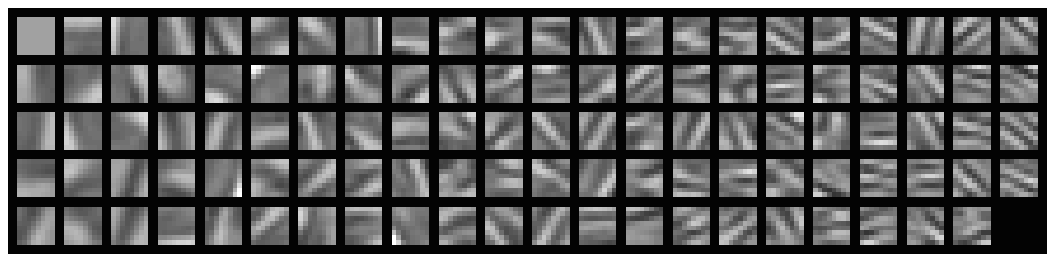}} \\
		\phantom{blablabla}\includegraphics[height=3cm]{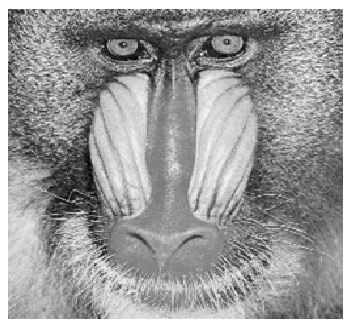}&\includegraphics[height=3cm]{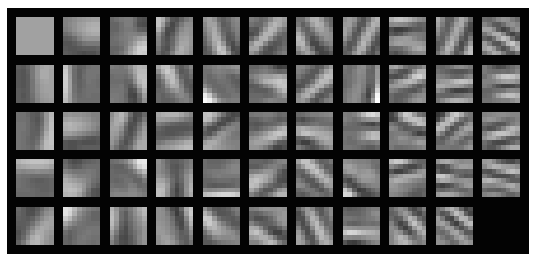}  \\ 
	\end{tabular}
	\caption{Dictionaries learned on {\it Mandrill} with initial dictionary size $K_e =64$ and required number of observations $M=\round{d\log(d)}$ (top) resp. $M=2d\log(d)$ (bottom). \label{fig:man_dico_obs}}
\end{figure}
\begin{figure}[p]
	\centering
	\begin{tabular}{lll}
		\includegraphics[height=3cm]{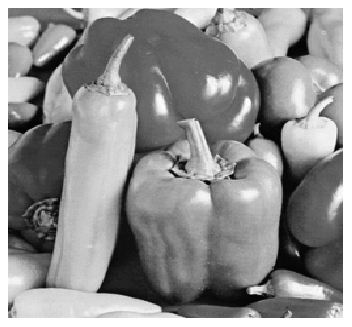} &\includegraphics[height =3cm]{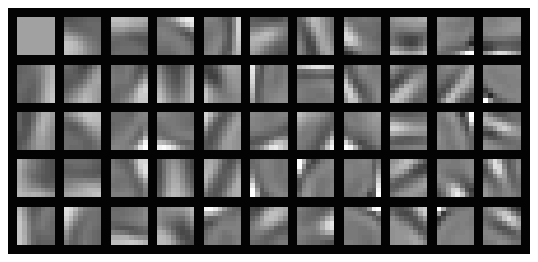} &
		\includegraphics[height=3cm]{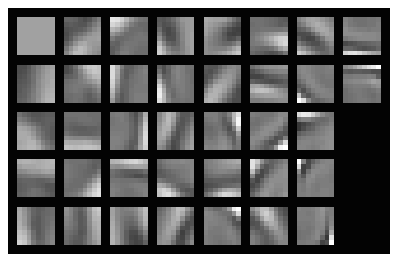}
	\end{tabular}
	\caption{Dictionaries learned on {\it Peppers} with initial dictionary size $K_e =64$ and required number of observations $M=\round{d\log(d)}$ (middle) resp. $M=2d\log(d)$ (right). \label{fig:pep_dico_obs}}
\end{figure}
\begin{figure}[p]
	\hspace{-1em}
	\begin{tabular}{llll}
		\includegraphics[height=3cm]{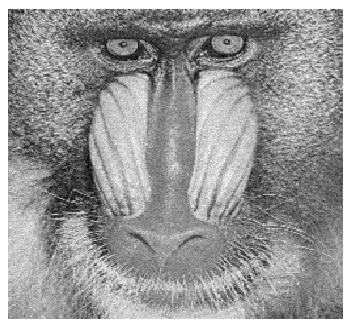} &\hspace{-1em} \includegraphics[height =3cm]{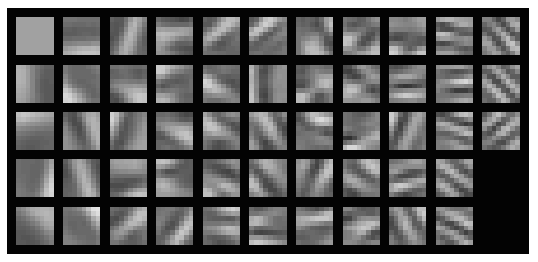} 
		\includegraphics[height=3cm]{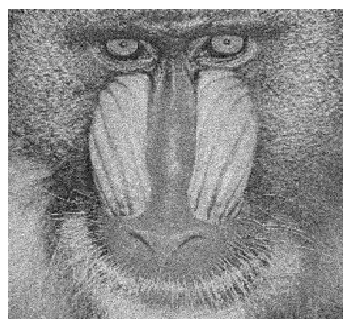} & \hspace{-1em} \includegraphics[height=3cm]{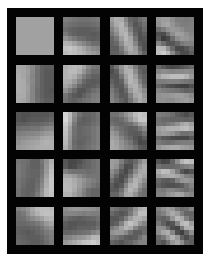}
	\end{tabular}
	\caption{Dictionaries learned on the {\it Mandrill} image contaminated with Gaussian noise of variance $\tilde \nsigma^2 = 10/255$  (left)
		and $\tilde \nsigma^2 =  20/255$ (right), initial dictionary size $K_e =64$ and required number of observations $M=\round{d\log(d)}$.  \label{fig:man_dico_noisy}}
\end{figure}
\noindent We can see that for both images the number of times each atom is observed strongly varies. If we interpret the relative number of observations as probability of an atom to be used, the first ten atoms are more than 10 times more likely to be used/observed than the last ten atoms. This accounts for the fact that by increasing the threshold for the minimal number of observations, we reduce the dictionary size without much affecting the approximation power.
\\
In our second experiment we learn adaptive dictionaries on {\it Mandrill} contaminated with Gaussian noise of variance $\tilde \nsigma^2 =  \nsigma^2/255$ for $ \sigma^2 \in \{5,10,15,20\}$, corresponding to average peak signal to noise ratios $\{34.15, 28.13, 24.61, 22.11\}$. We again compare their sizes and their approximation power for the clean image patches. The results are shown in Figure~\ref{fig:adap_noisy} and two example dictionaries are shown in Figure~\ref{fig:man_dico_noisy}.\\
\begin{figure}[t]
	\centering
	\begin{tabular}{cc}
		\includegraphics[width=0.4\textwidth]{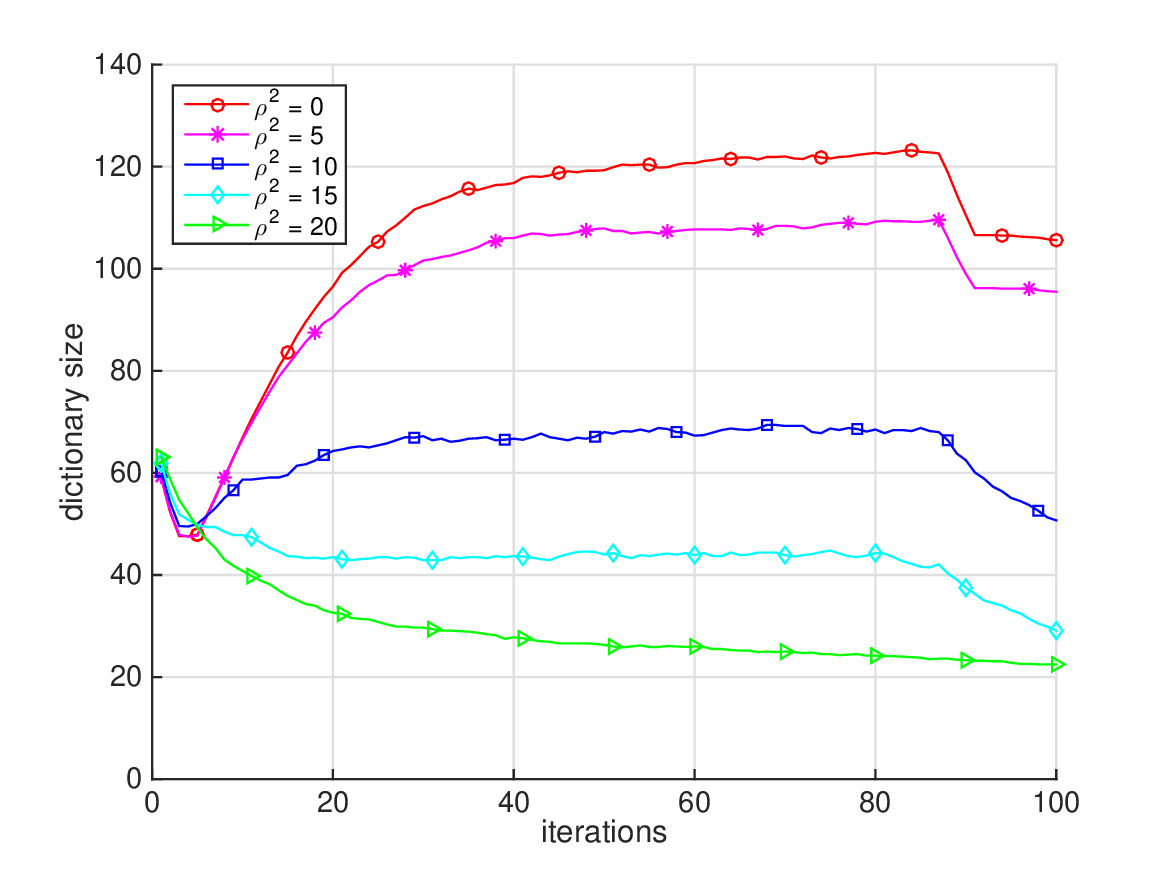}& \includegraphics[width=0.4\textwidth]{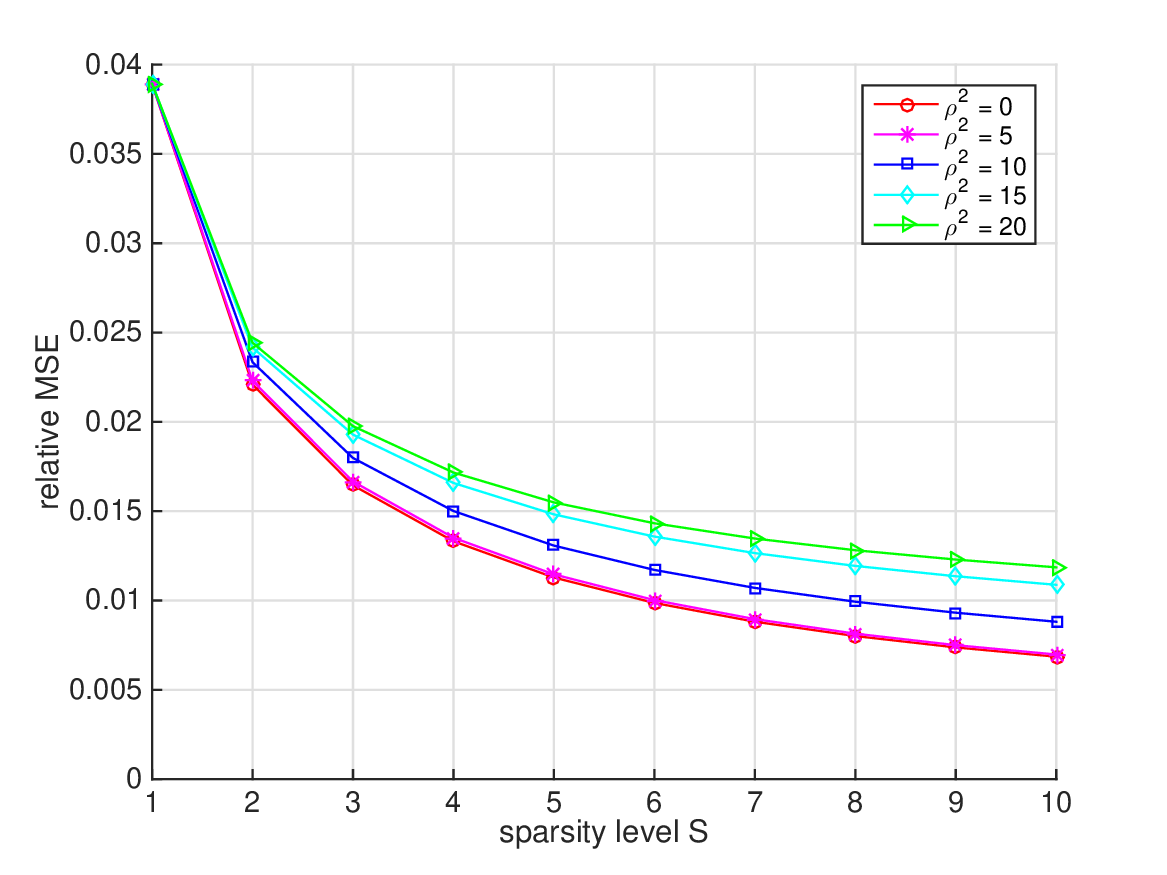}\\ 
	\end{tabular}
	\caption{Average dictionary sizes for adaptive dictionary learning based on ITKrM on all patches of the {\it Mandrill} image contaminated with Gaussian noise of variance $\tilde \nsigma^2 = \nsigma^2/255$, initial dictionary size $K_e =64$ and required number of observations $M=\round{d\log(d)}$ (left). Corresponding average sparse approximation error of all clean patches of {\it Mandrill} using OMP and the learned dictionaries. (right). \label{fig:adap_noisy}}
\end{figure}
We can see that the size of the dictionary decreases quite drastically with increasing noise, while the approximation power degrades only very gently. The sparsity level chosen by the algorithm is $S_e =  \round{1.80} = 2 $ for $\sigma_2 = 5$ and $S_e = \round{1.11} = \round{0.89} = 1$ for $\sigma_2 \in \{15,20\}$. For $\sigma^2 = 10$ the average recoverable sparsity level is $\approx 1.5$ so that for all trials the estimated sparsity level $S_e$ alternates between 1 and 2 in consecutive iterations. The fact that with increasing noise both the dictionary size and the sparsity level decrease but not the approximation power indicates that less often used atoms also tend to capture less energy per observation. This suggests an alternative value function, where each reliable observation is additionally weighted, for instance, by the squared coefficient or inner product. The corresponding cut-off threshold then is the minimal amount of energy that a reliable atom needs to capture from the training signals. Such an alternative value function could be useful in applications like dictionary based denoising, where every additional atom not only leads to better approximation of the signals but also of the noise.\\
Now that we have seen that adaptive dictionary learning produces sensible and noise robust results not only on synthetic but also on image data,
we will turn to a final discussion of our results.

\section{Discussion}\label{sec:discussion}

In this paper we have studied the global behaviour of the ITKrM (Iterative Thresholding and K residual means) algorithm for dictionary learning. We have proved that ITKrM contracts a dictionary estimate $\pdico$ towards the generating dictionary $\dico$ whenever the cross-Gram matrix $\pdico^\star \dico$ is diagonally dominant and $\pdico$ is incoherent and well-conditioned. Further, we have identified dictionaries, that are not equivalent to a generating dictionary but seem to be stable fixed points of ITKrM.
Using our insights that these fixed points always contain several atoms twice, meaning they are coherent, and that the residuals contain information about the missing atoms, we have developed a heuristic for finding good candidates which we can use
to replace one of two coherent atoms in a dictionary estimate. Simulations on synthetic data have shown that replacement using these candidates improved over random or no replacement, always leading to recovery of the full dictionary.
\\
Armed with replacement candidates, we have addressed one of the most challening problems in dictionary learning - how to automatically choose the sparsity level and dictionary size. We have developed a strategy for adapting the sparsity level from the initial guess $S_e=1$ and the dictionary size by decoupling replacement into pruning of coherent and unused atoms and adding of promising candidates. The resulting adaptive dictionary learning algorithm has been shown to perform very well in recovering a generating dictionary from random initialisations with various sizes on synthetic data with sparsity levels $S\geq 2$ and in learning meaningful dictionaries on image data.
\\
Note that our strategy for learning replacement candidates and adaptivity can be easily adapted to any other alternating minimisation algorithm for dictionary learning, such as MOD or K-SVD, \cite{enaahu99, ahelbr06}. Instead of using ITKsM on the residuals one simply has to use a small scale version with sparsity level $S=1$ of the respective algorithm. Preliminary experiments on synthetic data show that in the case of K-SVD and MOD this gives very similar results, \cite{rusc19}. Interestingly, however, it is much more difficult to stabilise the adaptive version of K-SVD on image data. The reason for this seems to be that the more sophisticated sparse approximation routine Orthogonal Matching Pursuit (OMP), \cite{parekr93}, promotes the convergence of candidate atoms to high energy subspaces. This leads to a concentration of many coherent atoms in these subspaces, without proportional increase in the approximation quality of the dictionary. These experimental findings are supported by theory, showing that while an excellent choice if the sparsifying dictionary is known, the performance of OMP rapidly degrades if the sparsifying dictionary is perturbed, \cite{pali21}.
Indeed on synthetic data without replacement procedures, K-SVD with OMP produces more double atoms than ITKrM. However, when using thresholding as sparse approximation procedures also for K-SVD and MOD, MOD produces the smallest number of double atoms followed by K-SVD and then ITKrM. With replacement MOD recovers the most accurate dictionary again followed by K-SVD and ITKrM.
\\
These observations suggest that the use of more sophisticated algorithms to bridge the gap between average sparsity level $\bar S$ and the average number of correctly identified atoms $S_t$ is only advisable in the last iterations when the number of atoms is not changed anymore.
\\
A promising direction to avoid the concentration of coherent atoms in high energy subspaces is to use a different value function for the dictionary atoms. The one proposed here is based on the assumption that all atoms are used equally often, which for image data was clearly not the case.
Alternatively, the current function could be scaled, that is, every reliable occurence of an atom is weighted with the squared coefficient which is related to the signal energy lost without this atom. Preliminary experiments with ITKrM using this weighted value function and an accordingly scaled cut-off indicate that it indeed helps to remove spurious atoms on synthetic data and leads to smaller dictionaries with the same approximation power on image data.\\
Another interesting question, which leads directly to our future theoretical research directions, is how to combine the value function of an atom with its coherence. The key to a solution lies in the analysis of dictionary learning from data where not all atoms are used equally often. A first step in this direction was laid in \cite{rusc20}, which provides results for the conditioning of random subdictionaries with non-homogeneous atom distributions together with an analysis for several sparse approximation algorithms. It also revealed the interplay between coherence structure of the dictionary and the probability of each atom to be used. Also to better model such a situation it will become necessary to look at a modification of thresholding, which uses as similarly noise inspired threshold $\tau$ as our value function, rather than a fixed sparsity level.\\
To put our replacement strategies on a more solid foundation, we are currently working on proofs showing the existence of stable spurious fixed points of ITKrM. The next steps are to show that all contractive areas are actually convergent as well as partial convergence of most atoms as long as the cross-Gram-matrix has the desired structure except for a few rows or columns. 
Finally, we want to transfer our characterisation of contractive areas to MOD and weighted ITKrM, where we replace the sign of the inner products in the update formula with the formula itself. This weighted version can also be seen as approximative K-SVD in the sense that only one power iteration is used to calculate the left singular vectors.

\acks{This work was supported by the Austrian Science Fund (FWF) under Grant no.~Y760.
The computational results presented have been achieved (in part) using the HPC infrastructure LEO of the University of Innsbruck.
We would like to thank Alexander Steinicke for his explanations concerning Freedman's inequality, quadratic variation and the Doob martingale
as well as Simon Ruetz for proof-reading the manuscript.
}

\newpage
\appendix

\section{Exact Statement and Proof of Theorem~\ref{maintheorem} }\label{appendix}

We only state and prove the exact version of the second part, since the first part consists literally of considering just one iteration of ITKrM and replacing in Theorem~4.2 of \citep{sc15} the assumption on the distance $d(\pdico,\dico)$ with the assumptions on the coherence and the operator norm of $\pdico$, ie. 
\begin{align}
	d(\pdico,\dico) \leq \frac{1}{32\sqrt{S}} \quad \rightsquigarrow \quad \mu(\pdico) \leq \frac{1}{20\log K} \quad \mbox{ and } \quad \|\pdico\|_{2,2}^2 \leq \frac{K}{134e^2 S \log K} -1.
\end{align}
Similarly the amendment to the proof consists in using Lemma~\ref{lemma_b8s} in Appendix~\ref{subsec:app_tech:oracle_res} instead of Lemma~B.8 of \citep{sc15} and potentially some tweaking of constants.

\begin{proposition}[Theorem~\ref{maintheorem}(b) exact]
	Assume that the signals $y_n$ follow model~\eqref{noisymodel2} for a dictionary $\dico $ with $\|\dico \|_{2,2}^2\leq\frac{K}{98S}$ and for coefficients with gap $c(S\!+\!1)/c(S) \leq \gap$, dynamic sparse range $c(1)/c(S) \leq \dynr$, noise to coefficient ratio $\rho/c(S)\leq \ncr$ and relative approximation error $\|c(\Sset^c)\|_2 / c(1)\leq \apperr \leq \frac{12}{7}\sqrt{\log K}$. Further, assume that the coherence and operator norm of the current dictionary estimate $\pdico$ satisfy,
	\begin{align}\label{pdico_mu_B_exact}
	\mu(\pdico) \leq \frac{1}{20\log K}  \quad \mbox{ and } \quad \|\pdico\|_{2,2}^2 \leq \frac{K}{134e^2 S \log K} -1.
	\end{align}
	If $d(\pdico,\dico) \geq \frac{1}{32\sqrt{S}}$ but the cross Gram matrix $\dico^\star \pdico$ is diagonally dominant in the sense that
	\begin{align}\label{diagdom_cond}
	\min_k \absip{\patom_k}{\atom_k}  \geq& \max\bigg\{8\, \gap \cdot \max_{k} \absip{\patom_k}{\atom_k} , \notag \\
	&\hspace{2cm} 40\, \ncr \cdot \sqrt{\log K} ,\notag\\
	&\hspace{3cm} 48\, \dynr  \cdot \log K \cdot\mu(\dico,\pdico), \notag\\
	&\hspace{4cm}  14\, \dynr \cdot\sqrt{\|\dico\|_{2,2}^2S\log K/(K\!-\!S)}\bigg\},
	\end{align}
	then one iteration of ITKrM using $N$ training signals will reduce the distance by at least a factor $\kappa \leq 0.95$, meaning $d(\ppdico,\dico) \leq 0.95\cdot  d(\pdico,\dico)$, except with probability 
	\begin{align*}
	3K\exp\left( -\frac{N  C^2_r\gamma^2_{1,S} \cdot  \eps}{768 K\max\{S,\|\dico\|_{2,2}^2\!+\!1\}^{\frac{3}{2}}}\right) + 4K\exp\left( -\frac{N  C^2_r\gamma_{1,S}^2 \cdot \eps^2}{512 K \max\{S,\|\dico\|_{2,2}^2\!+\!1\} \left(1+ d\nsigma^2\right)}\right).
	\end{align*} 
\end{proposition}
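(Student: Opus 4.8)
The argument follows the template of the local contraction proof in \citep{sc15}: treat the algorithm atom by atom, compute the expectation of the unnormalised updated atom, show the empirical average concentrates around it, and then control the normalisation. Fix an index $k$ and write the unnormalised update as $\ppatom_k=\tfrac1N\sum_{n:\,k\in I^t_n}\big[a_n+P(\patom_k)y_n\big]\signop(\ip{\patom_k}{y_n})$, where $a_n=y_n-P(\pdico_{I^t_n})y_n$ and $I^t_n$ is the thresholding support of $y_n$; denote by $F_k(y_n)$ the summand (set to $0$ when $k\notin I^t_n$). The target is to show $\ppatom_k=\lambda_k\atom_k+e_k$ with $\lambda_k>0$ of order $C_r\gamma_{1,S}/K$ and, writing $e_k=t_k\atom_k+f_k$ with $f_k\perp\atom_k$, with $\|f_k\|_2\le\kappa\,\eps_k(\lambda_k+t_k)$ for a constant $\kappa\le 0.96$; the elementary estimate $\|\ppatom_k/\|\ppatom_k\|_2-\atom_k\|_2\le\|f_k\|_2/(\lambda_k+t_k)\le\kappa\eps_k$ and the maximum over $k$ then give $d(\ppdico,\dico)\le\kappa\,d(\pdico,\dico)$. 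Since the distance is bounded atom by atom against the matched atom, no diagonal dominance of the new cross-Gram matrix is needed for this one-step claim (controlling it would be required to iterate, which the proposition does not assert).

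\textbf{Good signals and the role of the diagonal dominance.}
Introduce the event $\mathcal G_k$ of signals $y$ from \eqref{noisymodel2}, with support $I=p^{-1}(\Sset)$, on which $\pdico$ reads the support signs correctly, $\signop(\ip{\patom_i}{y})=\signop(x_i)$ for all $i\in I$, thresholding selects the atoms carrying the recoverable part of the support, in particular $k\in I^t$ whenever $k\in I$, and does not select $k$ when $k\notin I$. Using the subgaussian tail of $\ip{v}{\noise}$ together with the concentration of signed coefficient sums (Bennett-type, cf.\ Appendix~\ref{sec:app_tech} or \citep{bennett62}), one bounds $\P(\mathcal G_k^c)$; the four terms in \eqref{diagdom_cond} are calibrated exactly for this, with $24\gap\max_j\absip{\patom_j}{\atom_j}$ absorbing leakage from coefficients indexed outside $\Sset$, $68\ncr\sqrt{\log K}$ absorbing the noise, and the two $\dynr$-terms absorbing interference measured by $\mu(\dico)$, $\mu(\dico,\pdico)$ and $\|\dico\|_{2,2}\sqrt{\Spi/K}$; the extra $(\log K)^{3/2}$ resp.\ $\log K$ powers provide the margin for a union bound over all $N$ signals in the batch. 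On $\mathcal G_k$ one reduces the summand via the orthogonal decomposition $a_n+P(\patom_k)y_n=Q(\pdico_{I^t_n\setminus k})y_n-\big[P(\hat\patom_k)-P(\patom_k)\big]y_n$, where $\hat\patom_k$ is the normalised part of $\patom_k$ orthogonal to $\pdico_{I^t_n\setminus k}$; the bracket is controlled through the incoherence/operator-norm bounds \eqref{pdico_mu_B_exact}, and $Q(\pdico_{I^t_n\setminus k})y_n\approx|x_k|\,\atom_k$ when $k\in I$ (the deviations $Q(\dico_{I\setminus k})\atom_k-\atom_k$ and $Q(\pdico_{I^t_n\setminus k})-Q(\dico_{I\setminus k})$ being governed by $\mu(\dico),\mu(\dico,\pdico)$ and $\eps$), while $F_k(y_n)$ is a small residual term when $k\notin I$. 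Off $\mathcal G_k$ the crude bound $\|F_k(y_n)\|_2\lesssim\|y_n\|_2$ suffices.

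\textbf{Expectation, concentration and union bound.}
Integrating over $c\sim\nu_c$, the uniform permutation and signs, and the noise, the leading part of $\E F_k(y)$ comes from $\{k\in I\}\cap\{k\in I^t\}$, of probability $\approx S/K$, on which $F_k(y)\approx|x_k|\,\atom_k$; since $\E\!\big(|x_k|\cdot[k\in I]\big)=\gamma_{1,S}/K$ and the normalisation contributes $C_r$, this gives $\lambda_k\asymp C_r\gamma_{1,S}/K$ with $\lambda_k>0$. All remaining contributions are collected into $e_k$: terms with a missed or a wrongly selected support atom carry an independent uniform sign and hence vanish in expectation up to $\mu$-scaled bias; sign-flip events contribute at order $\omega_k\asymp\eps_k$; the noise contributes at order $\ncr$; the $\mathcal G_k^c$-part at order $\P(\mathcal G_k^c)$; and the subspace perturbation at an order governed by $\eps$ and \eqref{pdico_mu_B_exact}. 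Invoking \eqref{diagdom_cond} then bounds the part of $\E F_k(y)-\lambda_k\atom_k$ orthogonal to $\atom_k$ by $q\,\eps_k\lambda_k$ with $q$ a fixed constant below $0.96$. For the deviation $\|\tfrac1N\sum_n F_k(y_n)-\E F_k(y)\|_2$ we apply a vector Bernstein inequality whose range and variance proxy --- carrying the combinatorial factor $\max\{S,\Bpi\}$ through $\|\pdico\|_{2,2}^2\lesssim\Bpi$ and the noise energy $1+d\nsigma^2\asymp 1+\E\|\noise\|_2^2$ --- produce precisely the two exponential regimes of the statement: the range term yields $\exp\!\big(-NC_r^2\gamma_{1,S}^2\eps/(768K\max\{S,\Bpi\}^{3/2})\big)$ and the variance term yields $\exp\!\big(-NC_r^2\gamma_{1,S}^2\eps^2/(512K\max\{S,\Bpi\}(1+d\nsigma^2))\big)$. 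A union bound over the $K$ atoms, and over the two or three scalar events per atom used to control $\lambda_k$ and $\|f_k\|_2$, yields the stated failure probability $3K(\cdot)+2K(\cdot)$; on its complement $\ppatom_k=(\lambda_k+t_k)\atom_k+f_k$ with $\lambda_k+t_k>0$ and $\|f_k\|_2\le0.96\,\eps_k(\lambda_k+t_k)$, and the normalisation estimate together with the maximum over $k$ conclude.

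\textbf{Main obstacle.}
The delicate point is the good-signal analysis in the regime $d(\pdico,\dico)\gg1/\sqrt S$: one can no longer claim that thresholding recovers the full generating support for most signals, nor that $\omega_k\ip{z_k}{y}$ is negligible against $\alpha_k x_k$ inside $\ip{\patom_k}{y}$. Making $\P(\mathcal G_k^c)$ small, and --- more importantly --- showing that the residual contributions from atoms thresholding misses and from atoms it wrongly selects move $\ppatom_k$ away from the $\atom_k$ direction by no more than $0.96\,\eps_k\lambda_k$, is exactly where the quantitative form of \eqref{diagdom_cond} is forced: the $\dynr$ and $\gap$ factors and the $(\log K)^{3/2}$ powers. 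Verifying that all of these error contributions, once summed, stay within the $0.96\,\eps_k\lambda_k$ budget --- rather than being merely $o(\lambda_k)$, as in the small-distance regime --- is the bulk of the work, and it is why the admissible contraction factor is only a fixed $\kappa\le 0.96$ rather than something tending to $0$ with $\eps$.
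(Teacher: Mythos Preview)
Your template is the right one and matches the paper's: decompose $\ppatom_k$ around $s_k\atom_k$, control the deviation by concentration, and finish with a normalisation lemma. The paper also splits off the thresholding-failure event, exactly as your $\mathcal G_k$, and reduces the rest to the \emph{oracle} residual based on the true support $I_n$.

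The genuine gap is the mechanism for the two estimates you leave implicit. In the regime $\eps\gg 1/\sqrt S$ neither of your controls can be obtained by a pointwise perturbation argument. Concretely:
\begin{itemize}
\item For the oracle-residual bias you need $\big\|\E_I\,[P(\pdico_I)-P(\dico_I)]Q(\patom_k)\atom_k\big\|_2\lesssim\eps$, not merely $\lesssim \eps\sqrt S$. A bound on $\|Q(\pdico_{I\setminus k})-Q(\dico_{I\setminus k})\|$ that is ``governed by $\eps$'' gives the latter and is too weak. The paper gets the former by expanding $P(\pdico_I)=\pdico_I\pdico_I^\star+(\text{error})$ and invoking \emph{random subdictionary conditioning} (Chretien--Darses/Tropp) for $\pdico$: the hypotheses $\mu(\pdico)\le 1/(18\log K)$ and $\|\pdico\|_{2,2}^2\le K/(108e^2S\log K)-1$ are tuned so that $\delta(\pdico_I)\le 1/3$ except on a set of supports of negligible measure. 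This is where the constant $0.339$ (rather than something growing with $S$) comes from.
\item For the thresholding-failure probability you need, for each $i$, a high-probability bound on $\|\dico_{I\setminus\{i\}}^\star\patom_i\|_2$ over \emph{random} $I$. A worst-case bound via $\mu(\dico,\pdico)$ alone gives $\sqrt S\,\mu(\dico,\pdico)$, which under \eqref{diagdom_cond} is of order $\sqrt S\,\alpha_{\min}/(\log K)^{3/2}$ and need not be small. The paper instead embeds $\patom_i$ into the dictionary $\dico^{\{i\}}$ and applies the same random-subdictionary bound to obtain $\|\dico_{I\setminus\{i\}}^\star\patom_i\|_2\le\delta_0$ with high probability; this is what produces the fourth line of \eqref{diagdom_cond} with the factor $\sqrt{(S+1)(\|\dico\|_{2,2}^2+1)/K}$.
\end{itemize}
Without this tool your ``subspace perturbation at an order governed by $\eps$ and \eqref{pdico_mu_B_exact}'' and your absorption of the interference terms into \eqref{diagdom_cond} do not go through: both would carry an extra $\sqrt S$ and the $0.96\,\eps_k\lambda_k$ budget would be exceeded precisely in the regime $\eps\ge 1/(32\sqrt S)$ that the proposition targets. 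Your Main Obstacle paragraph correctly flags that this is where the work lies; what is missing is the identification of random subdictionary conditioning as the device that closes it.
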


\begin{proof}
	We follow the outline of the proof for Theorem~4.2 in \citep{sc15}. However, to extend the convergence radius we need to introduce new ideas, first for bounding the difference between the oracle residuals based on $\pdico$ and $\dico$, replacing Lemma~B.8 of \citep{sc15}, and second for bounding the probability of thresholding with $\pdico$ not recovering the generating support or preserving the generating sign, replacing Lemma~B.3/4 of \citep{sc15}.
	We denote the thresholding residual based on $\pdico$ by 
	\begin{align} \label{defRt}
	R^t(\pdico, y_n, k) := \big[y_n - P(\pdico_{I_{\pdico,n}^t}) y_n + P(\patom_k) y_n\big] \cdot \signop(\ip{\patom_k}{y_n}) \cdot  \chi(I_{\pdico,n}^t, k),
	\end{align}
	and the oracle residual based on the generating support $I_n=p_n^{-1}(\Sset)$, the generating signs $\sigma_n$ and $\pdico$, by
	\begin{align}\label{defRo}
	R^o(\pdico, y_n, k) := \big[y_n - P(\pdico_{I_n}) y_n + P(\patom_k) y_n\big] \cdot \sigma_n(k) \cdot  \chi(I_n,k).
	\end{align}
	Abbreviating $s_k=\frac{1}{N} \sum_n \ip{y_n}{\atom_k} \cdot \sigma_n(k) \cdot  \chi(I_n,k)$ and setting $B:=\|\dico\|_{2,2}^2$ as well as $\eps:= d(\pdico,\dico)$ for conciseness, we know from the proof of Theorem~4.2 in \citep{sc15} that
	\begin{align}
	\| \bar \patom_k - s_k \atom_k\|_2 &\leq \frac{1}{N} \Big\| \sum_n \left[R^t(\pdico, y_n, k) - R^o(\pdico, y_n, k)\right] \Big\|_2\notag\\
	&\hspace{2cm} + \frac{1}{N} \Big\| \sum_n \left[R^o(\pdico, y_n, k) - R^o(\dico, y_n, k)\right] \Big\|_2\notag\\
	&\hspace{4cm} +\frac{1}{N} \Big\| \sum_n \big[y_n - P(\dico_{I_n}) y_n\big] \cdot \sigma_n(k) \cdot  \chi(I_n,k) \Big\|_2 .
	\end{align}
	By Lemma~B.6 from \citep{sc15} we have
	\begin{align}
	\P&\left( \left| \frac{1}{N}\sum_n \chi(I_n,k)  \sigma_n(k) \ip{y_n}{\atom_k}\right| \leq (1-t_0) \frac{C_r \gamma_{1,S}}{K} \right) \notag \\
	&\hspace{4cm}\leq\exp\left( -\frac{ N C_r^2 \gamma_{1,S}^2 \cdot t_0^2}{2K(1+ \frac{SB}{K}+S\nsigma^2 + t_0 C_r \gamma_{1,S}\sqrt{B\!+\!1}/3)}\right).
	\end{align}
	By Lemma~\ref{lemma_b4s} in Appendix~\ref{subsec:app_tech:thresholding_res} (substituting Lemma B.3/4 of \citep{sc15}) we have
	\begin{align}
	&\P \left( \frac{1}{N}\left\|\sum_{n}\left[ R^t(\pdico ,y_n,k)-R^o(\pdico ,y_n,k)\right]\right\|_2> \frac{18(S\!+\!1)\sqrt{B\!+\!1}}{K^3} + \frac{C_r \gamma_{1,S}}{K}t_1\eps\right)\nonumber \\ 
	&\phantom{\frac{1}{N}\Big\|\sum_{n}\left[ R^t(\pdico ,y_n,k)-R^o(\pdico)\right]\Big\|_2}\leq 2\exp\left( -\frac{NC_r^2\gamma_{1,S}^2t_1^2\varepsilon^2}{\frac{108(S\!+\!1)(B\!+\!1)}{K}+3t_1\varepsilon C_r\gamma_{1,S}K\sqrt{B\!+\!1}}\right) .
	\end{align}
	By Lemma~\ref{lemma_b8s} in Appendix~\ref{subsec:app_tech:oracle_res} (substituting Lemma B.8 of \citep{sc15}) we have
	that for $0\leq t_2 \leq 1/8$ 
	\begin{align}
	\P&\left(\frac{1}{N} \left\| \sum_n \left[R^o(\pdico, y_n, k)-R^o(\dico, y_n, k) \right]\right\|_2 \geq \frac{C_r\gamma_{1,S}}{K}(0.308\eps+ t_2\eps) \right)\notag \\
	&\hspace{7cm}\leq \exp\left(- \frac{N  C^2_r\gamma^2_{1,S} \cdot t_2^2 \eps}{12K\max\{S,B\}^{\frac{3}{2}}} +\frac{1}{4}\right),
	\end{align} 
	and by Lemma~B.7 from \citep{sc15} we have
	\begin{align}
	&\P\left( \left\| \frac{1}{N}\sum_n \big[y_n - P(\dico_{I_n})y_n \big]\cdot \sigma_n(k) \cdot \chi(I_n,k) \right\|_2 \geq   \frac{C_r\gamma_{1,S}}{ K}\, t_3 \eps \right)\notag\\
	&\hspace{3cm}\leq \exp\left(- \frac{N  C^2_r\gamma_{1,S}^2 \cdot t_3\eps}{8 K \max\{S,B\!+\!1\}} \min\left\{\frac{t_3 \eps }{\left(1-\gamma_{2,S} + d\nsigma^2\right)},1\right\}+\frac{1}{4}\right).
	\end{align}
	Thus, with high probability we have $s_k \geq (1-t_0) \frac{C_r \gamma_{1,S}}{K}$ and
	\begin{align}
	\left\| \bar \patom_k - s_k \atom_k\right\|_2 &\leq \frac{C_r\gamma_{1,S}}{K} \left(\frac{ 18(S\!+\!1)\sqrt{B\!+\!1}}{K^2 C_r\gamma_{1,S}\eps} +t_1 +0.308+ t_2 + t_3 \right) \eps.
	\end{align}
	Note that we only need to take into account distances $\eps >\frac{1}{32\sqrt{S}}$, so we will use some crude bounds on $C_r\gamma_{1,S}$ to show that the fraction with $\eps$ in the denominator above is small. The requirement that $\|c(\Sset^c)\|_2 / c(1)\leq \apperr \leq \frac{12}{7}\sqrt{\log K}$ ensures that $\gamma_{1,S}\geq (1+ 3\log K)^{-1/2}$ and we trivially have $\gamma_{1,S}\geq S c(S)$. Combining this with the bound on $C_r$ in \eqref{Cr_bound} we get
	\begin{align}
	\frac{1}{C_r \gamma_{1,S}} \leq \frac{\sqrt{1+5d\nsigma^2}}{(1-e^{-d}) \gamma_{1,S}} \leq \frac{\sqrt{1+ 3\log K}}{(1-e^{-d})} + \frac{\nsigma}{c(S)}\frac{\sqrt{5d}}{S(1-e^{-d})}.
	\end{align} 
	The conditions in \eqref{diagdom_cond} imply that $K \geq 14^2 SB \log K$, which in turn means that $\log K >7$, as well as $\nsigma/c(S)\leq \ncr \leq 1/(40\sqrt{\log K})$. Assuming additionally that $K\geq \sqrt{d}$, meaning the dictionary is not too undercomplete, this leads to 
	\begin{align}
	\frac{18(S\!+\!1)\sqrt{B\!+\!1}}{K^2 C_r\gamma_{1,S}\eps} \leq 0.025,
	\end{align}
	for $B\geq 1$ and $S\geq 2$.
	Setting $t_0 = t_1 = 1/20$ and $t_2= t_3 = 1/8$ we get
	\begin{align}
	\max_k \left\| \bar \patom_k -s_k\atom_k\right\|_2 &\leq 0.633\cdot \frac{C_r\gamma_{1,S}}{K}\eps \quad \mbox{and} \quad \min_k s_k \geq 0.95 \cdot \frac{C_r \gamma_{1,S}}{K},
	\end{align}
	which by Lemma B.10 from \cite{sc15} implies that 
	\begin{align}
	d(\bar \pdico, \dico)^2=\max_k \left\| \frac{\bar \patom_k}{\|\bar \patom_k\|_2} - \atom_k\right\|^2_2 \leq 2\left(1-\sqrt{1 - \frac{0.633^2 \eps^2}{0.95^2}} \right) \leq \frac{2 \cdot 0.633^2 \eps^2}{0.95^2}\leq 0.89 \eps^2,
	\end{align}
	except with probability
	\begin{align*}
	&K\exp\left( -\frac{N C_r^2 \gamma_{1,S}^2 }{K(801 + 14 C_r \gamma_{1,S}\sqrt{B\!+\!1})}\right)
	+ 2K\exp\left( -\frac{N C_r^2 \gamma_{1,S}^2\cdot  \eps^2}{K(\frac{1}{10} +60\eps C_r\gamma_{1,S}\sqrt{B\!+\!1})}\right)\\
	& \hspace{1cm}+ e^{\frac{1}{4}} K\exp\left( -\frac{N  C^2_r\gamma^2_{1,S} \cdot  \eps}{768 K\max\{S,B\}^{\frac{3}{2}}}\right) + e^{\frac{1}{4}} K\exp\left( -\frac{N  C^2_r\gamma_{1,S}^2 \cdot \eps^2}{512 K \max\{S,B\!+\!1\} \left(1+ d\nsigma^2\right)}\right).
	\end{align*}
	The final probability bound follows from the observations that $C_r\gamma_{1,S} \leq \sqrt{S}$, $B\!+\!1\geq 2$ and $\eps \leq \sqrt{2}$.
\end{proof}

\section{Technical Lemmata}\label{sec:app_tech}

Here we state the proofs of the two lemmata characterising the difference between the thresholding and the oracle residual resp. the difference between the oracle residuals based on the generating dictionary and a perturbation.

\subsection{Difference between thresholding and oracle residual}\label{subsec:app_tech:thresholding_res}

To prove Lemma~\ref{lemma_b4s} we will make use of the scalar version of Bernstein's inequality \citep{bennett62} and Hoeffding's inequality \citep{hoeffding}. We will also need Proposition~\ref{inner_product_freedman}, which is based on a version of Freedman's inequality, \citep{fr75}, to deal with sums of dependent random variables.

\begin{theorem}[Scalar Bernstein, \citep{bennett62}] \label{scalarbernstein}
	Let $v_n\in \R$, $n=1\ldots N$, be a finite sequence of independent random variables with zero mean. If 
	$\E(v^2_n)\leq m $ and $ \E(|v_n|^k )\leq \frac{1}{2} k!\,m M^{k-2}$ for all $k > 2$, then for all $t>0$ we have
	\begin{align}
	\P\left( \sum_n v_n \geq t\right)\leq \exp\left(- \frac{t^2}{2(N m + Mt) } \right).\notag
	\end{align}
\end{theorem}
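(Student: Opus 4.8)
The plan is to run the classical exponential-moment (Chernoff) argument of Bernstein and Bennett. First I would fix a parameter $\lambda>0$, apply Markov's inequality to the nonnegative random variable $e^{\lambda \sum_n v_n}$, and use independence of the $v_n$ to factor the moment generating function:
\begin{align}
\P\left( \sum_n v_n \geq t\right) = \P\left( e^{\lambda \sum_n v_n} \geq e^{\lambda t}\right) \leq e^{-\lambda t}\prod_n \E\left(e^{\lambda v_n}\right).\notag
\end{align}
So everything reduces to a good bound on the single-variable Laplace transform $\E(e^{\lambda v_n})$, followed by an optimisation over $\lambda$.

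Second, I would control each factor by expanding the exponential as a power series and invoking the moment hypotheses. Since $\E(v_n)=0$, the constant and linear terms contribute exactly $1$, and because $\lambda>0$ and $\E(v_n^k)\leq \E(|v_n|^k)\leq \tfrac12 k!\,m M^{k-2}$, the remaining tail collapses to a geometric series: for every $\lambda$ with $0<\lambda M<1$,
\begin{align}
\E\left(e^{\lambda v_n}\right) = 1 + \sum_{k\geq 2}\frac{\lambda^k\,\E(v_n^k)}{k!} \leq 1 + \sum_{k\geq 2}\frac{\lambda^k}{k!}\cdot\frac{k!\,m M^{k-2}}{2} = 1 + \frac{\lambda^2 m}{2(1-\lambda M)} \leq \exp\left(\frac{\lambda^2 m}{2(1-\lambda M)}\right),\notag
\end{align}
where the last step uses $1+x\leq e^x$.

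Third, combining the two displays gives, for all admissible $\lambda\in(0,1/M)$,
\begin{align}
\P\left(\sum_n v_n \geq t\right)\leq \exp\left(-\lambda t + \frac{N\lambda^2 m}{2(1-\lambda M)}\right).\notag
\end{align}
I would then plug in the choice $\lambda = \dfrac{t}{Nm+Mt}$, which clearly lies in $(0,1/M)$, and verify by direct substitution that $1-\lambda M = \tfrac{Nm}{Nm+Mt}$ and hence the exponent simplifies exactly to $-\dfrac{t^2}{2(Nm+Mt)}$, yielding the claimed bound.

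The whole argument is elementary and there is no genuine obstacle; the only places that warrant a moment's care are (i) justifying the termwise summation of the power series for $\E(e^{\lambda v_n})$ and noting that the moment bound is applied with absolute values so that the odd-order terms are handled correctly, and (ii) bookkeeping the admissible range $\lambda M<1$ so that the final choice of $\lambda$ is legitimate. This is precisely the standard proof of the Bernstein--Bennett inequality.
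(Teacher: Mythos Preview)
Your argument is correct and is exactly the classical Bernstein--Bennett proof. Note, however, that the paper does not give its own proof of this theorem: it is stated as a cited result (Bennett, 1962) and used as a tool in the subsequent lemmata, so there is nothing to compare against beyond saying that your proposal reproduces the standard textbook derivation.
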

To prove Proposition~\ref{inner_product_freedman} we need the following simplified version of Freedman's inequality.
\begin{theorem}[Freedman, \citep{fr75}]\label{freedman}
	Let $X_0,\dots ,X_S$ be a martingale sequence with bounded differences, that is $|X_k - X_{k-1}|\leq c$ almost surely for each $k$. Moreover, let the predictable quadratic variation $\langle X \rangle_S=\sum_{k=1}^S\E\left[\left(X_k-X_{k-1}\right)^2\middle|\mathcal{F}_{k-1}\right]$ be bounded by $b$. Then for all $t>0$
	\begin{equation*}
	\P\left( X_S - X_0 \geq t \right)\leq \exp\left( -\frac{t^2}{2(ct+b)} \right).
	\end{equation*}
\end{theorem}
\begin{proposition}\label{inner_product_freedman}
	Let $v \in \R^K$ be a vector, $I=(i_1,\ldots ,i_S)$ be a sequence of length $S$ obtained by sampling from $\mathbb{K}=\{ 1,\dots ,K \}$ without replacement, $\varepsilon$ with values in $\{-1,1\}^S$ a Rademacher vector independent from $I$ and $c \in \R^S$ a scaling vector. Then for any $t\geq 0$,
	\begin{align}
		\P\left(|\sum_{k = 1}^S c_k \varepsilon_k v_{i_k} | \geq t \right)\leq 2 \exp \left( \frac{-t^2}{2(\|c\|_\infty \|v\|_\infty t+ \|c\|_2^2\|v\|_2^2/(K-S))} \right) .
	\end{align}
\end{proposition}

\begin{proof} 
	We will use Theorem~\ref{freedman} on an appropriately constructed martingale. Let
	$I=(i_1,\dots ,i_S)$, be the random vector obtained by sampling from $\mathbb{K}=\{ 1,\dots ,K \}$ without replacement, that is, $I$ is drawn uniformly at random from the set
	\begin{equation*}
	\Omega :=\{ \omega \in \mathbb{K}^S:\omega_i\neq\omega_j\text{ for }i\neq j \} .
	\end{equation*}
	We equip $\Omega$ with the $\sigma$-algebra $\mathcal{F} :=\mathcal{P}(\Omega )$ and the point measure $\P(\{\omega\}) := |\Omega|^{-1}$, to get the probability space $(\Omega, \mathcal{F}, \P)$. We also set $\Delta := \{-1,1\}^S$, equip it with the $\sigma$-algebra $\mathcal{A} :=\mathcal{P}(\Delta)$, the point measure $\mathbb{Q}(\{\delta\})=2^{-S}$ and define the
	product space $(\Omega \times \Delta, \mathcal{F}\otimes \mathcal{A}, \P\otimes \mathbb{Q})$.
	On $\Omega$ we define the filtration $\{ \emptyset, \Omega \}=\mathcal{F}_0\subseteq\mathcal{F}_1\subseteq\dots\subseteq\mathcal{F}_S = \mathcal{F}$, where $\mathcal{F}_k$ is the $\sigma$-algebra induced by $\omega_1,\dots ,\omega_k$. To be exact, we define the random variables
	\begin{align*}
	i_j:\Omega \longrightarrow &\mathbb{K} \subseteq\R \quad \mbox{with} \quad i_j(\omega):=\omega_j, 
	\end{align*}
	and set $\mathcal{F}_k:=\sigma (i_1,\dots ,i_k)$ for all $1\leq k \leq S$. \\
	Since we also want to condition on the signs $\delta \in \Delta$, we define the random variables
	\begin{align*}
	\eps_j:\Delta \longrightarrow &\{-1,1\} \subseteq\R \quad \mbox{with} \quad \eps_j(\delta):=\delta_j, 
	\end{align*}
	and the corresponding filtration $\{ \emptyset,\Delta \}=\mathcal{A}_0\subseteq\mathcal{A}_1\subseteq\dots\subseteq\mathcal{A}_S = \mathcal{A}$, by setting $\mathcal{A}_k =\sigma (\eps_1,\dots ,\eps_k)$.
	On the product space $\Omega \times \Delta$ we then get the filtration $\mathcal{F}_k \otimes \mathcal{A}_k$.\\
	Next we define the bounded random variables 
	\begin{align*}
	X_k:\Omega \times \Delta \longrightarrow \R,
	\quad \mbox{with} \quad X_k(\omega, \delta)=\sum_{j=1}^{k} c_j \eps_j(\delta) v_{i_j(\omega)}.
	\end{align*}
	The random variables $X_k$ form a martingale sequence with resp. to the filtration $\mathcal{F}_k \otimes \mathcal{A}_k$, since by independence of $\eps_k$ to $\mathcal{F}\otimes \mathcal{A}_{k-1}$ we have
	\begin{align*}
	\E[X_k - X_{k-1} | \mathcal{F}_{k-1}\otimes \mathcal{A}_{k-1}] &= \E\big[\E[c_k \eps_k v_{i_k} | \mathcal{F}\otimes \mathcal{A}_{k-1}] \big| \mathcal{F}_{k-1}\otimes \mathcal{A}_{k-1}\big] \\
	&= c_k \E\left[v_{i_k}  \E[\eps_k | \mathcal{F}\otimes \mathcal{A}_{k-1}] \middle| \mathcal{F}_{k-1}\otimes \mathcal{A}_{k-1}\right]  = 0.
	\end{align*}
	The sum we want to estimate is $X_S$ with expectation $\E(X_S)= 0 = X_0$. Further we have $|X_k - X_{k-1}| = |c_k \eps_k v_{i_k}| \leq \|c\|_\infty\|v\|_\infty$ as well as $(X_k - X_{k-1})^2 = c^2_k v_{i_k}^2$, so we can bound the predictable quadratic variation as
	\begin{align*}
	\langle X\rangle_S&=\sum_{k=1}^S\E\left[(X_k - X_{k-1})^2 \middle|\mathcal{F}_{k-1}\otimes \mathcal{A}_{k-1}\right] \\
	&=\sum_{k=1}^S\E\left[c_k^2 v_{i_k}^2\middle|\mathcal{F}_{k-1}\otimes \mathcal{A}_{k-1}\right] = \sum_{k=1}^S c_k^2 \sum_{\ell \notin \{i_1,\ldots,i_{k-1}\}} v^2_\ell \frac{1}{K-k+1} \leq \|c\|_2^2\frac{\|v\|^2_2}{K-S} 
	\end{align*}
	The final result follows using the symmetry of $X_S$.
\end{proof}

Now we are ready to prove the lemma estimating the error originating from thresholding failing to recover the generating supports and signs.

\begin{lemma}\label{lemma_b4s}
	Assume that the signals $y_n$ follow model~\eqref{noisymodel2} for coefficients with gap $c(S\!+\!1)/c(S) \leq \gap$, dynamic sparse range $c(1)/c(S) \leq \dynr$, noise to coefficient ratio $\rho/c(S)\leq \ncr$ and relative approximation error $\|c(\Sset^c)\|_2 / c(1)\leq \apperr \leq \frac{12}{7}\sqrt{\log K}$. If the cross Gram matrix $\dico^\star \pdico$ is diagonally dominant in the sense that
	\begin{align}
	\min_k \absip{\patom_k}{\atom_k}  \geq &\max\bigg\{8\, \gap \cdot \max_{k} \absip{\patom_k}{\atom_k} , \notag \\
	&\hspace{2cm} 40\, \ncr \cdot \sqrt{\log K} ,\notag\\
	&\hspace{3cm} 48\, \dynr  \cdot\log K\cdot\mu(\dico,\pdico), \notag\\
	&\hspace{4cm}  14\, \dynr \cdot\sqrt{\|\dico\|_{2,2}^2S\log K/(K\!-\!S)}\bigg\} ,
	\end{align}
	then
	\begin{align}
	&\P \left( \frac{1}{N}\left\|\sum_{n}\left[ R^t(\pdico ,y_n,k)-R^o(\pdico ,y_n,k)\right]\right\|_2> \frac{18(S\!+\!1)\sqrt{\|\dico\|_{2,2}^2\!+\!1}}{K^3} + \frac{C_r \gamma_{1,S}}{K}t\eps\right)\nonumber \\ 
	&\hspace{4.3cm}\leq 2\exp\left( -\frac{NC_r^2\gamma_{1,S}^2t^2\varepsilon^2}{\frac{108(S\!+\!1)(\|\dico\|_{2,2}^2\!+\!1)}{K}+3t\varepsilon C_r\gamma_{1,S}K\sqrt{\|\dico\|_{2,2}^2\!+\!1}}\right) .
	\end{align}
\end{lemma}

\newcommand{\gset}{F}
\newcommand{\gsetm}{\mathcal{F}}
\newcommand{\rset}{G}
\newcommand{\rsetm}{\mathcal{G}}

\begin{proof}
	Throughout the proof we will use the abbreviations $B=\|\dico\|_{2,2}^2$ and $\hat{\mu}=\mu (\dico ,\pdico )$.
	To estimate the difference between the oracle and the thresholding residuals, we have to distinguish between four different cases, based on whether $k$ is in the oracle support or not and whether thresholding recovers the oracle support and sign, so we set 
	\begin{align*}
	\gsetm &= \{ n: k\in I_n \wedge \left( I^t_n \neq I_n \vee \signop(\ip{\patom_k}{y_n})\neq\sigma_n(k) \right) \} ,\\
	\rsetm &= \{ n: k\notin I_n \wedge k \in I_n^t\} .
	\end{align*} 
	Whenever a signal is not in one of the sets above, the residuals coincide, yielding 
	\begin{align}
	\Delta &=\Big\|\sum_{n}\left[ R^t(\pdico,y_n,k)-R^o(\pdico,y_n,k)\right]\Big\|_2= \Big\| \sum_{n\in \gsetm \cup  \rsetm } \left[ R^t(\pdico ,y_n,k)-R^o(\pdico ,y_n,k)\right] \Big\|_2 .
	\end{align}
	Further observing that operators of the form $\mathbb{I}_d-P(\pdico_J)+P(\patom_k)$ with $k\in J$ are orthogonal projections, and that our signals are bounded, $\|y_n\|_2 \leq \sqrt{B+1}$, as well as $R^o(\pdico ,y_n,k) = 0$ for $n \in \rsetm$ leads to
	\begin{align}\label{norm_thresholding_oracle_residual}
	\Delta &\leq  \sum_{n\in \gsetm \cup \rsetm} \left(\| R^t(\pdico ,y_n,k)\|_2 + \|R^o(\pdico ,y_n,k)\|\right)  \leq (2|\gsetm|  +  |\rsetm|)\sqrt{B+1} .
	\end{align}
	To upper bound the size of the set $\gsetm$, we apply Bernstein's inequality to the sum of $N$ i.i.d copies of the centered random variable $\mathbf{1}_F -\P(\gset)$, where 
	\begin{align}
	\gset = \left\{ y : k\in I \wedge \left( I^t \neq I \vee \signop(\ip{\patom_k}{y})\neq\sigma(k) \right) \right\},
	\end{align}
	which leads to 
	\begin{align}\label{probG}
	\P(|\gsetm|\geq N \P(\gset) +Nt )\leq \exp\left( -\frac{t^2N}{2\P(\gset)+t}\right).
	\end{align}
	Similarly defining $\rset = \{ y: k\notin I \wedge k \in I^t\}$, we get
	\begin{align}\label{probB}
	\P(|\rsetm|\geq N \P(\rset) +Nt )\leq \exp\left( -\frac{t^2N}{2\P(\rset)+t}\right).
	\end{align}
	So what remains to calculate is the probability of the events $\gset$ and $\rset$, that is of thresholding failing to recover the oracle support and sign when $k$ is in the support and of accidentally recovering $k$ when it is not in the support.

	\subsubsection*{Step~1 - Failure probability of the recovery of $I$ or the correct sign $\sigma(k)$}\label{subsec:failureprob_Ig}
	
	Here we will show that with high probability for a signal $y$ following the model in~\eqref{noisymodel2} with $k\in I$, we have $I^t = I = p^{-1}(\Sset)$ and $\signop(\ip{\patom_k}{y})= \sigma(k)$.
	\\
	To ensure $I^t=I$, this means the recovery of all $i\in I$, we need to have
	\begin{align}\label{condition_th_recovery_Ig}
	\min_{i\in I}|\ip{\patom_i}{y}| >\max_{i\notin I}
	|\ip{\patom_i}{y}| .
	\end{align}
	Expanding the inner product of a rescaled signal $y$ with an atom $\patom_i$ of the perturbed dictionary $\pdico$ yields
	\begin{align*}
	|\ip{\patom_i}{\dico x_{c,p,\sigma}+r}| &=|\sum_j\sigma(j)c(p(j))\ip{\patom_i}{\atom_j} +\ip{\patom_i}{r}| \\
	&=| c(p(i))\ip{\patom_i}{\atom_i} +\sigma(i)\sum_{j\neq i}\sigma(j)c(p(j))\ip{\patom_i}{\atom_j} +\sigma(i)\ip{\patom_i}{r}| .
	\end{align*}
	Depending on the index $i$ under consideration, we obtain the following bounds from below resp. above (remember that $\alpha_{\min}\leq\absip{\patom_i}{\atom_i}\leq\alpha_{\max}$),
	\begin{align*}
	i\in I :\; \; |\ip{\patom_i}{\dico x_{c,p,\sigma}+r}| &\geq c(S)\alpha_{\min} - \big|\sum_{j\neq i}\sigma(j)c(p(j))\ip{\patom_i}{\atom_j}\big| - |\ip{\patom_i}{r}| ,\\
	i\notin I:\; \; |\ip{\patom_i}{\dico x_{c,p,\sigma}+r}| &\leq c(S+1)\alpha_{\max}+\big|\sum_{j\neq i}\sigma(j)c(p(j))\ip{\patom_i}{\atom_j}\big| +|\ip{\patom_i}{r}|. 
	\end{align*}
	This means that a sufficient condition for the recovery of $I$ is that for all $i$
	\begin{equation}\label{condition_recovery_Ig2}
	\big|\sum_{j\neq i}\sigma(j)c(p(j))\ip{\patom_i}{\atom_j}\big| <\theta_1\cdot c(S)\alpha_{\min}\hspace{15pt}\text{and}\hspace{15pt}|\ip{\patom_i}{r}| <\theta_2\cdot c(S)\alpha_{\min},
	\end{equation}
	where $\theta_1$ and $\theta_2$ ensure that
	\begin{align}\label{condition_recovery_Ig}
	c(S)\alpha_{\min} - &\theta_1c(S)\alpha_{\min} - \theta_2c(S)\alpha_{\min} \overset{!}{\geq}c(S+1)\alpha_{\max} + \theta_1c(S)\alpha_{\min} + \theta_2c(S)\alpha_{\min} .
	\end{align}
	Since the conditions above also guarantee the recovery of the correct sign $\sigma(i)$ for all $i \in I$, so in particular the recovery of $\sigma(k)$, we can bound the probability of the event that thresholding fails while $k$ is in the generating support $I$ as 
	\begin{align*}
	&\P\left( \left[ I^t \neq I \vee \signop(\ip{\patom_k}{y})\neq\sigma(k) \right] \wedge k\in I \right) \\
	& \hspace{2cm}\leq  \sum_i \P\Big( \big|\sum_{j\neq i}\sigma (j)c(p(j))\ip{\patom_i}{\atom_j}\big| \geq \theta_1c(S)\alpha_{\min}\wedge k\in I \Big)\nonumber\\
	&\hspace{5cm} +\sum_i \P \big(|\ip{\patom_i}{r}| \geq\theta_2c(S)\alpha_{\min}\wedge k\in I \big) \\
	& \hspace{2cm}\leq  \sum_i \sum_{\ell \in \Sset}\P\Big( \big|\sum_{j\neq i}\sigma (j)c(p(j))\ip{\patom_i}{\atom_j}\big| \geq \theta_1c(S)\alpha_{\min}\big| p(k) = \ell\Big)\cdot \P(p(k) =\ell) \nonumber\\
	&\hspace{5cm} +\sum_i \sum_{\ell \in \Sset} \P \big(|\ip{\patom_i}{r}| \geq\theta_2 c(S)\alpha_{\min}\big| p(k) = \ell\big)\cdot \P(p(k) =\ell).
	\end{align*}
	Since every permutation is equally likely, each index is equally likely to be mapped to $\ell$, meaning $\P(p(k) =\ell)=1/K$. Using the independence of the noise from the remaining signal parameters and its sub-Gaussian property further leads to
	\begin{align}
	&\P\left( \left[ I^t \neq I \vee \signop(\ip{\patom_k}{y})\neq\sigma(k) \right] \wedge k\in I \right) \notag\\
	& \hspace{2cm}\leq  \frac{1}{K} \sum_i \sum_{\ell \in \Sset}\P\Big( \big|\sum_{j\neq i}\sigma (j)c(p(j))\ip{\patom_i}{\atom_j}\big| \geq \theta_1 c(S)\alpha_{\min}\big| p(k) = \ell\Big) \nonumber\\
	&\hspace{6cm} + \frac{S}{K} \sum_i \P \big(|\ip{\patom_i}{r}| \geq\theta_2 c(S)\alpha_{\min}\big)\notag \\
	& \hspace{2cm}\leq  \frac{1}{K} \sum_i \sum_{\ell \in \Sset}\P\Big( \big|\sum_{j\neq i}\sigma (j)c(p(j))\ip{\patom_i}{\atom_j}\big| \geq \theta_1c(S)\alpha_{\min}\big| p(k) = \ell\Big)\notag\\
	&\hspace{6cm} +  2S\exp\left(\frac{-(\theta_2 c(S)\alpha_{\min})^2}{2\rho^2}\right).
	\label{failureprobability_recovery_Iell}
	\end{align}
	To estimate the terms $\P\big( \big|\sum_{j\neq i}\sigma (j)c(p(j))\ip{\patom_i}{\atom_j}\big| \geq \theta_1c(S)\alpha_{\min}\big| p(k) = \ell\big)$,
	we split the sum into two parts; one over $j\in I\setminus \{i, k\}$, that captures most of the energy, and the other over $j\in (I^c\cup \{k\})\setminus\{ i \}$. For $m_1 \in (0,1)$ and $m_2 = 1-m_1$ we have
	\begin{align*}
	&\P\Big( \big|\sum_{j\neq i}\sigma (j)c(p(j))\ip{\patom_i}{\atom_j}\big| \geq \theta_1c(S)\alpha_{\min}\big| p(k) = \ell\Big)\\
	&\qquad\qquad\leq\P\Big(\big|\sum_{j\in I \setminus\{i, k\} }\sigma (j)c(p(j))\ip{\patom_i}{\atom_j}\big| \geq m_1\theta_1c(S)\alpha_{\min}\big| p(k) = \ell\Big) \\
	&\qquad\qquad\qquad\qquad+\P\Big( \big|\sum_{j\in (I^c\cup \{k\})\setminus
		\{ i \}}\sigma (j)c(p(j))\ip{\patom_i}{\atom_j}\big| \geq m_2 \theta_1 c(S)\alpha_{\min}\big| p(k)=\ell \Big) .
	\end{align*}
	The first term we estimate using Proposition~\ref{inner_product_freedman} and the second term using Hoeffding's inequality. With some small simplifications we get for all $i$, including $k$,
	\begin{align*}
	&\P\Big( \big|\sum_{j\neq i}\sigma (j)c(p(j))\ip{\patom_i}{\atom_j}\big| \geq \theta_1c(S)\alpha_{\min}\big| p(k) = \ell\Big)\\
	&\qquad\leq 2 \exp \left( \frac{-( m_1 \theta_1c(S)\alpha_{\min})^2}{2(c(1) \hat \mu \cdot m_1 \theta_1c(S)\alpha_{\min}+ \|c(\Sset)\|_2^2 \frac{B}{K-S})} \right) +2 \exp \left(\frac{-(m_2 \theta_1c(S)\alpha_{\min})^2}{2 \hat \mu^2 ( c(\ell)^2 + \|c(\Sset^c)\|_2^2)}\right) \\
	&\qquad\leq 2 \exp \left( -\frac{1}{4} \min\left\{ \frac{c(S) m_1 \theta_1\alpha_{\min}}{c(1) \hat \mu}, \frac{(K-S)( m_1 \theta_1c(S)\alpha_{\min})^2}{ B \|c(\Sset)\|_2^2} \right\} \right) \\
	&\hspace{5cm}+2 \exp \left( -\frac{1}{4}\min\left\{ \frac{(c(S) m_2 \theta_1\alpha_{\min})^2}{c(1)^2 \hat \mu^2 },  \frac{(c(S) m_2 \theta_1\alpha_{\min})^2}{ \hat{\mu}^2\|c(\Sset^c)\|_2^2}  \right\} \right) .
	\end{align*}
	Substituting the expression above into \eqref{failureprobability_recovery_Iell} we get 
	\begin{align*}
	&\P\left( \left[ I^t \neq I \vee \signop(\ip{\patom_k}{y})\neq\sigma(k) \right] \wedge k\in I \right) \notag\\
	&\hspace{2cm}\leq 2S \exp \left( -\frac{1}{4} \min\left\{ \frac{c(S) m_1 \theta_1\alpha_{\min}}{c(1) \hat \mu}, \frac{(K-S)( m_1 \theta_1c(S)\alpha_{\min})^2}{ B  \|c(\Sset)\|_2^2} \right\} \right) \\
	&\hspace{4cm}+2S \exp \left( -\frac{1}{4}\min\left\{ \frac{(c(S) m_2 \theta_1\alpha_{\min})^2}{c(1)^2 \hat \mu^2 },  \frac{(c(S) m_2 \theta_1\alpha_{\min})^2}{ \hat{\mu}^2\|c(\Sset^c)\|_2^2}  \right\} \right) \\
	&\hspace{6cm}+  2S\exp\left( \frac{-(\theta_2 c(S)\alpha_{\min})^2}{2\rho^2}\right),
	\end{align*}
	where $\theta_1$ and $\theta_2$ have to ensure (\ref{condition_recovery_Ig}) and $m_1 + m_2 =1$.
	From this, whenever 
	\begin{align*}
	\alpha_{\min}\geq\max\Bigg\{ & \frac{1}{1-2\theta_1-2\theta_2}\frac{c(S+1)}{c(S)}\alpha_{\max} , \;  \frac{4n}{m_1\theta_1}\frac{c(1)}{c(S)}\hat{\mu}\log K, \;  \frac{2\sqrt{n}}{m_1\theta_1}\frac{ \|c(\Sset)\|_2}{c(S)}\sqrt{\frac{B\log K}{K-S}}, \\ & \frac{2\sqrt{n}}{(1-m_1)\theta_1}\frac{c(1)}{c(S)}\hat{\mu}\sqrt{\log K}, \;  \frac{2\sqrt{n}}{(1-m_1)\theta_1}\frac{\|c(\Sset^c)\|_2}{c(S)}\hat{\mu}\sqrt{\log K}, \;  \frac{\sqrt{2n}}{\theta_2}\frac{\rho}{c(S)}\sqrt{\log K} \Bigg\} ,
	\end{align*}
	we get that
	\begin{align*}
	\P\left( \left[ I^t \neq I \vee \signop(\ip{\patom_k}{y})\neq\sigma(k) \right] \wedge k\in I \right) \leq 6S\cdot K^{-n} .
	\end{align*}
	Setting $\theta_1=\tfrac{6}{16}$, $\theta_2=\tfrac{1}{16}$, $m_1=\frac{2}{3}$, $n=3$, 
	the probability that thresholding fails to recover $I$ and/or the corresponding signs, restricted to the signals for which we have $k\in I$, is bounded by $6S\cdot K^{-3}$, whenever
	\begin{align*}
	\alpha_{\min}\geq\max\Bigg\{ 8\frac{c(S+1)}{c(S)}\alpha_{\max} ,\; & 48\frac{c(1)}{c(S)}\hat{\mu}\log K,\;  14\frac{c(1)}{c(S)}\sqrt{\frac{SB \log K}{K-S}},\; 40\frac{\rho}{c(S)}\sqrt{\log K} \Bigg\} ,
	\end{align*}
	and $\frac{\|c(\Sset^c)\|_2}{c(1)}\leq\frac{12}{7}\sqrt{\log K}$, where we have used that $\|c(\Sset)\|_2\leq\sqrt{S} c(1)$.

	\subsubsection*{Step~2 - Probability of wrongly recovering $k$ for $k\notin I$ - $\P(k\in I^t|k\notin I)$}
	
	As a second step we will bound the probability of wrongly recovering an atom $\patom_k$ when it is not in the generating support, meaning $k\notin I$. A sufficient condition for not recovering $k$ is that
	\begin{align}
	\min_{i\in I}|\ip{\patom_i}{y}| > |\ip{\patom_k}{y}|.
	\end{align}
	Using the bounds from step~1,
	\begin{align*}
	i\in I :\; \; |\ip{\patom_i}{\dico x_{c,p,\sigma}+r}| &\geq c(S)\alpha_{\min} - \big|\sum_{j\neq i}\sigma(j)c(p(j))\ip{\patom_i}{\atom_j}\big| -|\ip{\patom_i}{r}| , \\
	k \notin I:\; \; |\ip{\patom_k}{\dico x_{c,p,\sigma}+r}| &\leq c(S+1)\alpha_k + \big|\sum_{j\neq k}\sigma(j)c(p(j))\ip{\patom_k}{\atom_j}\big| +|\ip{\patom_k}{r}| ,
	\end{align*}
	we get as sufficient condition for not recovering $k$, that for all $i\in I\cup\{ k \}$
	\begin{equation*}
	\big|\sum_{j\neq i}\sigma(j)c(p(j))\ip{\patom_i}{\atom_j}\big| <\theta_1\cdot c(S)\alpha_{\min}\hspace{15pt}\text{and}\hspace{15pt}|\ip{\patom_i}{r}| <\theta_2\cdot c(S)\alpha_{\min},
	\end{equation*}
	where $\theta_1$ and $\theta_2$ again ensure that
	\begin{align*}
	c(S)\alpha_{\min}-\theta_1c(S)\alpha_{\min}-\theta_2 c(S)\alpha_{\min} 
	\overset{!}{\geq} c(S+1)\alpha_k+\theta_1 c(S)\alpha_{\min}+\theta_2 c(S)\alpha_{\min}.
	\end{align*}
	
	We now bound the probability of thresholding recovering $k$ when it is not in the generating support $I$ as
	\begin{align*}
	\P(k \in I^t \wedge k\notin I ) &= \sum_{\ell > S}\P(k \in I^t \big| p(k)=\ell)\cdot \P(p(k)=\ell)\\
	&\leq \frac{1}{K} \sum_{\ell > S} \sum_{i \in I \cup \{k\}} \P\Big(\big|\sum_{j\neq i}\sigma (j)c(p(j))\ip{\patom_i}{\atom_j}\big| \geq \theta_1\cdot c(S)\alpha_{\min}\big| p(k)=\ell\Big)  \\
	&\qquad +\frac{1}{K} \sum_{\ell > S}\sum_{i \in I \cup \{k\}} \P\big(|\ip{\patom_i}{r}| \geq\theta_2\cdot c(S)\alpha_{\min}\big| p(k)=\ell\big) .
	\end{align*}
	Using the same splitting technique as in step~1, and the sub-Gaussian property of $r$, we get
	\begin{align*}
	\P(k \in I^t \wedge k\notin I )& \leq 2(S+1)\exp \left( -\frac{1}{4} \min\left\{ \frac{c(S) m_1 \theta_1\alpha_{\min}}{c(1) \hat \mu}, \frac{(K-S)( m_1 \theta_1c(S)\alpha_{\min})^2}{ B\|c(\Sset)\|_2^2} \right\} \right) \\
	&+2(S+1)\exp \left(\frac{-(m_2 \theta_1c(S)\alpha_{\min})^2}{2 \hat \mu^2 \|c(\Sset^c)\|_2^2}\right) + 2(S+1) \exp\left( \frac{-(\theta_2 c(S)\alpha_{\min})^2}{2\rho^2}\right).\\
	\end{align*}
	To have this probability sufficiently small, we need to have
	\begin{align*}
	\alpha_{\min}\geq\max\Bigg\{ \frac{1}{1-2\theta_1-2\theta_2}\frac{c(S+1)}{c(S)}\alpha_k , & \; \frac{4n}{m_1\theta_1}\frac{c(1)}{c(S)}\hat{\mu}\log K, \;  \frac{2\sqrt{n}}{m_1\theta_1}\frac{\|c(\Sset)\|_2}{c(S)}\sqrt{\frac{B\log K}{K-S}}, \\ &  \frac{\sqrt{2n}}{(1-m_1)\theta_1}\frac{\|c(\Sset^c)\|_2}{c(S)}\hat{\mu}\sqrt{\log K}, \;  \frac{\sqrt{2n}}{\theta_2}\frac{\rho}{c(S)}\sqrt{\log K} \Bigg\} .
	\end{align*}
	Choosing the same values as before, $\theta_1=\tfrac{6}{16}$, $\theta_2=\tfrac{1}{16}$, $m_1=\frac{2}{3}$, $n=3$, we arrive at the bound
	\begin{align*}
	\P( k\in I^t\wedge k\notin I )\leq 6(S+1)\cdot K^{-3} ,
	\end{align*}
	whenever $\frac{\|c(\Sset^c)\|_2}{c(1)}\leq \frac{12}{5}\sqrt{\log K}$ and
	\begin{align*}
	\alpha_{\min}\geq\max\Bigg\{ 8\frac{c(S+1)}{c(S)}\alpha_k ,\; & 48\frac{c(1)}{c(S)}\hat{\mu}\log K ,\; 14\frac{c(1)}{c(S)}\sqrt{\frac{SB \log K}{K-S}},\; 40\frac{\rho}{c(S)}\sqrt{\log K} \Bigg\} .
	\end{align*}
	Using all these estimates, we are now ready to bound the error originating from the difference between the thresholding and the oracle residual.

	\subsubsection*{Step 3 - Putting it all together}
	
	Using all previous estimates, what remains to do is to estimate the size of $\gsetm$ and $\rsetm$ and finally put all pieces together.
	Inserting our probability estimates into (\ref{probG}) and (\ref{probB}), we get
	\begin{align*}
	\P\left(|\gsetm|\geq N\left( \frac{6S}{K^3}+\frac{C_r\gamma_{1,S}}{3K\sqrt{B+1}}t\varepsilon \right) \right)\leq \exp\left( -\frac{NC_r^2\gamma_{1,S}^2t^2\varepsilon^2}{\frac{108S(B+1)}{K}+3t\varepsilon C_r\gamma_{1,S}K\sqrt{B+1}}\right) 
	\end{align*}
	and
	\begin{align*}
	\P\left(|\rsetm|\geq N\left( \frac{6(S+1)}{K^3}+\frac{C_r\gamma_{1,S}}{3K\sqrt{B+1}}t\varepsilon \right) \right)\leq \exp\left( -\frac{NC_r^2\gamma_{1,S}^2t^2\varepsilon^2}{\frac{108(S+1)(B+1)}{K}+3t\varepsilon C_r\gamma_{1,S}K\sqrt{B+1}}\right) ,
	\end{align*}
	respectively. As we have
	\begin{equation*}
	\Big\|\sum_{n}\left[ R^t(\pdico,y_n,k)-R^o(\pdico,y_n,k)\right]\Big\|_2 \leq (2|\gsetm|  +  |\rsetm|)\sqrt{B+1},
	\end{equation*}
	in summary, we get
	\begin{align*}
	&\P\left(\frac{1}{N}\Big\|\sum_{n}\left[ R^t(\pdico ,y_n,k)-R^o(\pdico ,y_n,k)\right]\Big\|_2 > \frac{18(S+1)\sqrt{B+1}}{K^3} + \frac{C_r \gamma_{1,S}}{K}t\eps\right)\nonumber \\
	&\phantom{\frac{1}{N}\Big\|\sum_{n}\left[ R^t(\pdico ,y_n,k)-R^o(\pdico ,y_n,k)\right]\Big\|_2} \leq 2\exp\left( -\frac{NC_r^2\gamma_{1,S}^2t^2\varepsilon^2}{\frac{108(S+1)(B+1)}{K}+3t\varepsilon C_r\gamma_{1,S}K\sqrt{B+1}}\right) .
	\end{align*}
\end{proof}

\noindent Next we will prove the lemma yielding a bound for the error originating from the difference between the oracle residuals based on the generating dictionary and a perturbation of it.

\subsection{Difference between oracle residuals}\label{subsec:app_tech:oracle_res}

For the proof of Lemma~\ref{lemma_b8s} we will use the vector version of Bernstein's inequality.

\begin{theorem}[Vector Bernstein, \citep{kugr14, gr11, leta91}]\label{vectorbernstein}
	Let $(v_n)_n \in \R^d$ be a finite sequence of independent random vectors. If 
	$\|v_n\|_2 \leq M$ almost surely, $\|\E(v_n)\|_2\leq m_1$ and $\sum_n \E(\|v_n\|_2^2)\leq m_2$, then for all $0\leq t \leq m_2/(M+m_1)$, we have
	\begin{align}
	\P\left(\left\| \sum_n v_n - \sum_n \E(v_n) \right\|_2 \geq t\right)\leq \exp\left(- \frac{t^2}{8m_2}+\frac{1}{4}\right),
	\end{align}
	and, in general,
	\begin{align}
	\P\left(\left\| \sum_n v_n - \sum_n \E(v_n) \right\|_2 \geq t\right)\leq \exp\left(- \frac{t}{8}\cdot \min\left\{\frac{t}{m_2}, \frac{1}{M+m_1}\right\}+\frac{1}{4}\right).
	\end{align}
\end{theorem}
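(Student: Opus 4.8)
The plan is to reduce the claim to a one‑dimensional Bernstein‑type tail bound for the single real random variable $Z:=\bigl\|\sum_n v_n-\sum_n\E(v_n)\bigr\|_2$, while exploiting that $\R^d$ is a Hilbert space so that no factor growing with $d$ enters. First I would \emph{center}: put $w_n:=v_n-\E(v_n)$, so that $\E(w_n)=0$, $\|w_n\|_2\le M+m_1=:U$ almost surely, and, since $\E\|w_n\|_2^2=\E\|v_n\|_2^2-\|\E(v_n)\|_2^2\le\E\|v_n\|_2^2$, also $\sum_n\E\|w_n\|_2^2\le m_2$; as $Z=\|\sum_nw_n\|_2$ it suffices to bound $\P(Z\ge t)$ for the centred family. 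Next I would \emph{bound the mean}: independence together with $\E(w_n)=0$ makes all cross terms $\E\langle w_n,w_m\rangle$ vanish, so $\E(Z^2)=\sum_n\E\|w_n\|_2^2\le m_2$ and hence $\E Z\le\sqrt{m_2}$ by Jensen.

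The technical core is the Hilbert‑space analogue of the classical Bennett/Bernstein moment generating function estimate: for $0\le\lambda<3/U$,
\begin{align}
\E\bigl[e^{\lambda(Z-\E Z)}\bigr]\le\exp\Bigl(\frac{\lambda^2 m_2}{2(1-\lambda U/3)}\Bigr).\notag
\end{align}
I would prove this by processing the summands along the filtration $\mathcal F_k=\sigma(w_1,\dots,w_k)$: with $Z_k=\|\sum_{n\le k}w_n\|_2$ the triangle inequality gives $|Z_k-Z_{k-1}|\le\|w_k\|_2\le U$, and the \emph{exact} expansion $\|a+b\|_2^2=\|a\|_2^2+2\langle a,b\rangle+\|b\|_2^2$ available in a Hilbert space lets one estimate $\E\bigl[e^{\lambda(Z_k-Z_{k-1})}\mid\mathcal F_{k-1}\bigr]$ with conditional variance proxy $\E(\|w_k\|_2^2\mid\mathcal F_{k-1})$ and a Bennett remainder controlled by $U$; multiplying the per‑step estimates and taking expectations yields the displayed bound with no dependence on $d$ (the same estimate can also be quoted directly from the literature, e.g.\ \citep{gr11, leta91}). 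The standard Chernoff optimisation then turns it into the sub‑gamma tail $\P(Z-\E Z\ge s)\le\exp\bigl(-\frac{s^2}{2(m_2+Us/3)}\bigr)$ for all $s\ge0$.

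It remains to assemble the tail. If $t\le\sqrt{2m_2}$ the asserted right‑hand side is $\ge1$ and there is nothing to prove, so assume $t>\sqrt{2m_2}\ge\sqrt{m_2}\ge\E Z$; then $t-\E Z\ge t-\sqrt{m_2}\ge0$ and $\P(Z\ge t)=\P(Z-\E Z\ge t-\E Z)\le\P(Z-\E Z\ge t-\sqrt{m_2})$, whence by Step~3
\begin{align}
\P(Z\ge t)\le\exp\Bigl(-\frac{(t-\sqrt{m_2})^2}{2\bigl(m_2+U(t-\sqrt{m_2})/3\bigr)}\Bigr).\notag
\end{align}
Under the hypothesis $t\le m_2/U$ one has $U(t-\sqrt{m_2})/3\le m_2/3$, so the denominator is at most $\tfrac{8}{3}m_2$ and $\P(Z\ge t)\le\exp\bigl(-\tfrac{3(t-\sqrt{m_2})^2}{8m_2}\bigr)$. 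Since the quadratic $2t^2-6t\sqrt{m_2}+5m_2$ has negative discriminant it is strictly positive, i.e.\ $3(t-\sqrt{m_2})^2\ge t^2-2m_2$, which gives exactly $\P(Z\ge t)\le\exp\bigl(-\tfrac{t^2}{8m_2}+\tfrac14\bigr)$ — the additive $\tfrac14$ being precisely the $-2m_2\cdot\tfrac1{8m_2}$ lost when replacing $(t-\sqrt{m_2})^2$ by $t^2$. For the second, sub‑exponential bound one keeps the denominator intact: when $t>m_2/U$ it is at most $\tfrac{8}{3}Ut$, and combining $3(t-\sqrt{m_2})^2\ge t^2-2m_2>t^2-2Ut$ with the same manipulation yields $\exp\bigl(-\tfrac{t}{8U}+\tfrac14\bigr)$; together with the range $t\le m_2/U$ already handled this is the claimed bound with $\min\{t/m_2,1/U\}$.

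The delicate point is the dimension‑free moment generating function estimate of Step~3. The tempting shortcuts — covering the unit sphere by a net and union‑bounding scalar Bernstein, or dilating each $w_n$ to a symmetric matrix and invoking matrix Bernstein — both inject a spurious factor growing with $d$, which the statement does not contain; avoiding it forces genuine use of the $2$‑smoothness (Hilbert geometry) of $\R^d$ in the per‑step increment bound. Everything else — the centering, the mean estimate, the Chernoff step, and the elementary quadratic algebra producing the $\tfrac14$ — is routine.
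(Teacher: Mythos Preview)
Your argument is essentially correct; the centering, the bound $\E Z\le\sqrt{m_2}$, the Chernoff step, and the quadratic algebra producing the $+\tfrac14$ all check out, and your two‑regime treatment for the general bound is sound. One caveat: the martingale sketch in Step~3 is a bit loose as written, since $Z_k-Z_{k-1}$ is not a martingale difference and its conditional variance is not literally $\E(\|w_k\|_2^2\mid\mathcal F_{k-1})$; the dimension‑free MGF bound in Hilbert space needs a more careful smoothness argument (\`a la Pinelis or Ledoux--Talagrand), but you already note that this step can be quoted from the cited references, which is exactly what the paper does.

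The paper's treatment is different in emphasis. It does not prove the first display at all but cites it wholesale from \citep{kugr14, gr11, leta91}; the only thing it proves is the second display, and that by a one‑line observation: for $t\ge m_2/(M+m_1)$, inflate $m_2$ to the larger value $t(M+m_1)$ (the hypothesis $\sum_n\E\|v_n\|_2^2\le m_2$ is still satisfied with room to spare, and now $t$ meets the constraint of the first display with equality), then read off $\exp\bigl(-t/(8(M+m_1))+\tfrac14\bigr)$. Your route---keeping the sub‑gamma tail and splitting the denominator according to whether $t\le m_2/U$ or $t>m_2/U$---reaches the same endpoint with a little more algebra; the paper's inflation trick is shorter but less transparent about where the sub‑exponential regime comes from.
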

Note that the general statement is simply a consequence of the first part, since for $t\geq m_2/(M+m_1)$ we can choose $m_2=t (M+m_1)$.
\\\\
We next prove that, assuming incoherence and good conditioning of the perturbed dictionary, the oracle residuals based on the perturbed dictionary  $\pdico$ and the generating dictionary $\dico$ are close to each other.

\begin{lemma}\label{lemma_b8s}
	Assume that the signals $y_n$ follow the random model in~\eqref{noisymodel2}. 
	Further, assume that $ S\leq \min \Big\{\frac{K}{98\|\dico\|_{2,2}^2}, \frac{1}{98\nsigma^2}\Big\}$ and that the current estimate of the dictionary $\pdico$ has distance $d(\dico,\pdico)=\eps \geq \frac{1}{32\sqrt{S}}$ but is incoherent and well conditioned, meaning its coherence $\mu(\pdico)$ and its operator norm $\|\pdico\|_{2,2}$ satisfy
	\begin{align}
	\mu(\pdico) \leq \frac{1}{20\log K}  \quad \mbox{and} \quad \|\pdico\|_{2,2}^2 \leq \frac{K}{134e^2 S \log K} -1.
	\end{align}
	Then for all $0\leq t \leq 1/8$ we have
	\begin{align}
	\P&\left(\frac{1}{N} \left\| \sum_n \left[R^o(\pdico, y_n, k)-R^o(\dico, y_n, k) \right]\right\|_2 \geq \frac{C_r\gamma_{1,S}}{K}(0.308\eps+ t\eps) \right) \notag\\&\hspace{7cm}\leq \exp\left(- \frac{NC^2_r\gamma^2_{1,S}t^2 \eps}{12K\max\{S,\|\dico\|_{2,2}^2\}^{\frac{3}{2}}} +\frac{1}{4}\right).\notag
	\end{align} 
\end{lemma}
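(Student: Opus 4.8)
Put $v_n := R^o(\pdico, y_n, k) - R^o(\dico, y_n, k)$ and $B := \|\dico\|_{2,2}^2$. Since $k \in I_n = p_n^{-1}(\Sset)$, the definition~\eqref{defRo} gives
\begin{align*}
v_n = \chi(I_n,k)\,\sigma_n(k)\cdot\Big[\big(P(\dico_{I_n}) - P(\pdico_{I_n})\big)y_n + \big(P(\patom_k) - P(\atom_k)\big)y_n\Big],
\end{align*}
so the $v_n$ are i.i.d.\ (each depends only on the data $(c_n,p_n,\sigma_n,r_n)$ generating $y_n$), and the assertion is a concentration bound for $\tfrac1N\sum_n v_n$ around $\E v_n$. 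I would apply the first form of the vector Bernstein inequality, Theorem~\ref{vectorbernstein}: it suffices to exhibit $m_1 \geq \|\E v_n\|_2$, an almost sure bound $M \geq \|v_n\|_2$, and $m_2 \geq \sum_n \E\|v_n\|_2^2 = N\,\E\|v_1\|_2^2$, and then to check that the deviation to be controlled, $t' := N t\eps\, C_r\gamma_{1,S}/K$, lies in the admissible range $t' \leq m_2/(M+m_1)$ --- which is where the hypotheses $\eps \geq \tfrac1{32\sqrt S}$, $t \leq \tfrac18$ and $S \leq \min\{K/(98B),1/(98\nsigma^2)\}$ enter. Feeding $m_1 = 0.339\,\eps\, C_r\gamma_{1,S}/K$ and the two moment bounds into Theorem~\ref{vectorbernstein} then produces exactly the stated $\exp(-(t')^2/(8m_2)+\tfrac14)$.

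\noindent\textbf{The mean.} Conditioning on $(I_n, p_n, c_n, r_n)$, the signal $y_n$ depends on the inner signs only through the linear term $\sum_{j\in I_n}\sigma_n(j)c_n(p_n(j))\atom_j$, so averaging over $\sigma_n$ annihilates every contribution except the one carrying $\sigma_n(k)^2$; using $P(\dico_{I_n})\atom_k = \atom_k = P(\atom_k)\atom_k$ this leaves
\begin{align*}
\E\big(v_n \,\big|\, I_n, p_n, c_n, r_n\big) = \frac{c_n(p_n(k))}{\sqrt{1+\|r_n\|_2^2}}\,\chi(I_n,k)\cdot\big(P(\patom_k) - P(\pdico_{I_n})\big)\atom_k .
\end{align*}
Averaging over $r_n$ brings out the factor $C_r$ and over $(p_n,c_n)$ the factor $\gamma_{1,S}/K = \E[\,c(p(k))\chi(I,k)\,]$, so the task reduces to bounding $\big\|\E_I\big[\,c(p(k))\chi(I,k)\,(P(\patom_k) - P(\pdico_I))\atom_k\big]\big\|_2$ by $0.339\,\eps\,\gamma_{1,S}/K$. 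Since $\operatorname{range}(P(\patom_k)) \subseteq \operatorname{range}(P(\pdico_I))$ for $k\in I$, the difference $P(\pdico_I) - P(\patom_k)$ is the orthogonal projection onto the $(\leq S-1)$--dimensional subspace $W_I$ of $\operatorname{span}(\pdico_I)$ orthogonal to $\patom_k$, and with $\patom_k = \alpha_k\atom_k + \omega_k z_k$ from~\eqref{atomdecomp} one obtains $(P(\patom_k)-P(\pdico_I))\atom_k = -(\omega_k/\alpha_k)P(W_I)z_k$. This is where the hypotheses~\eqref{pdico_mu_B_exact} and the diagonal dominance~\eqref{diagdom_cond} are essential: because $\langle\patom_j,z_k\rangle = (\langle\patom_j,\patom_k\rangle - \alpha_k\langle\patom_j,\atom_k\rangle)/\omega_k$ is at most $(\mu(\pdico)+\mu(\dico,\pdico))/\omega_k$ for $j\neq k$, the dangerous factor $\omega_k/\alpha_k$ cancels against this $1/\omega_k$, and averaging over the random support $I$ produces an extra factor of order $S/K$ (the expected projection onto a random $(S-1)$--subdictionary), leaving a bound governed by $\mu(\pdico)$, $\mu(\dico,\pdico)$, $\|\pdico\|_{2,2}$ and $\min_k\absip{\patom_k}{\atom_k}$; a careful accounting then yields the constant $0.339$, with the assumption $\apperr \leq \log K$ keeping $\gamma_{1,S}$ (hence every relative error) from being too small.

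\noindent\textbf{The two moments.} On the bulk of supports --- those for which a matrix--Chernoff estimate for the random subdictionary $\pdico_I$, calibrated by $\mu(\pdico)\leq\tfrac1{18\log K}$ and $\|\pdico\|_{2,2}^2\leq\tfrac{K}{108e^2S\log K}-1$, gives $\delta_I(\pdico)\leq\tfrac12$ --- the perturbation decomposition also makes $v_n$ itself small: $(P(\dico_I)-P(\pdico_I))\dico_I x_I = -\sum_{j\in I}(x_j\omega_j/\alpha_j)\,Q(\pdico_I)z_j$, and the incoherence of the $z_j$ among heavily perturbed atoms controls $\|\sum_j(\cdots)z_j\|_2$, the remaining out--of--support and noise contributions being tame; on the remaining polynomially small set of supports the trivial bound $\|v_n\|_2 \leq 4\|y_n\|_2 \lesssim \sqrt{B+1}$ contributes negligibly. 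Collecting the estimates gives $M \lesssim \sqrt{B+1}$ and $\sum_n\E\|v_n\|_2^2 \leq \tfrac32 N\eps\max\{S,B\}^{3/2}/K =: m_2$; together with $m_1 = 0.339\,\eps\, C_r\gamma_{1,S}/K$ and the range check $t'\leq m_2/(M+m_1)$ (using $t\leq\tfrac18$ and $S\leq\min\{K/(98B),1/(98\nsigma^2)\}$), Theorem~\ref{vectorbernstein} yields the claimed probability. The overall skeleton follows the proof of Lemma~B.8 of \citep{sc15}, the new content being the treatment of $P(\pdico_I)$ for $\eps$ of order $1$ rather than $\eps\lesssim1/\sqrt S$.

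\noindent\textbf{Main obstacle.} The delicate point is the sharp mean bound $0.339\,\eps$: it must combine a non--asymptotic control of the conditioning of the \emph{random} subdictionary $\pdico_I$ (whose Chernoff bound has to be tuned so that the constants $18$ and $108e^2$ of~\eqref{pdico_mu_B_exact} come out right) with the quantitative cancellation of the potentially large ratio $\omega_k/\alpha_k$ against the cross--coherence bound $|\langle\patom_j,z_k\rangle| \leq (\mu(\pdico)+\mu(\dico,\pdico))/\omega_k$ and the averaging over $I$ --- extracting a clean numerical constant, rather than a generic $O(\eps)$, being exactly what is needed for the contraction factor of Theorem~\ref{maintheorem} to stay below $1$. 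Everything else is a routine, if lengthy, adaptation of \citep{sc15}.
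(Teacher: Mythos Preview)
Your skeleton is right --- vector Bernstein applied to $v_n = R^o(\pdico,y_n,k)-R^o(\dico,y_n,k)$, with separate control of $\|\E v\|_2$, $\E\|v\|_2^2$ and $\|v\|_\infty$ --- but the way you propose to bound the mean does not prove the lemma as stated.

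\textbf{The mean.} You write $(P(\patom_k)-P(\pdico_I))\atom_k$ in terms of $P(W_I)z_k$ and then control $\langle\patom_j,z_k\rangle$ by $(\mu(\pdico)+\mu(\dico,\pdico))/\omega_k$, invoking the diagonal dominance~\eqref{diagdom_cond}. But neither $\mu(\dico,\pdico)$ nor~\eqref{diagdom_cond} is a hypothesis of this lemma --- only $\mu(\pdico)$ and $\|\pdico\|_{2,2}$ are. The paper avoids the cross-Gram entirely: from $\atom_k = P(\patom_k)\atom_k + Q(\patom_k)\atom_k$ one gets
\[
\big(P(\patom_k)-P(\pdico_I)\big)\atom_k \;=\; -\,P(\pdico_I)\,Q(\patom_k)\atom_k,
\]
and $\|Q(\patom_k)\atom_k\|_2=\omega_k\leq\eps$ already delivers the factor $\eps$. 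The remaining work is to average $P(\pdico_I)$ over $I\ni k$: approximate $P(\pdico_I)$ by $\pdico_I\pdico_I^\star$ on the (high-probability) set $\{\delta(\pdico_I)\leq\delta_0\}$, use Theorem~3.1 of \cite{chda12} (tuned to $\mu(\pdico)\leq\tfrac{1}{18\log K}$ and $\|\pdico\|_{2,2}^2\leq\tfrac{K}{108e^2S\log K}-1$) to bound the bad set, and observe that the average of $\pdico_{I\setminus k}\pdico_{I\setminus k}^\star$ is $\tfrac{S-1}{K-1}(\pdico\pdico^\star-\patom_k\patom_k^\star)$. With $\delta_0=\tfrac13$ the three pieces sum to $0.339$. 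No cross-coherence, no diagonal dominance.

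\textbf{The second moment and Bernstein.} Your claimed $m_2=\tfrac32 N\eps\max\{S,B\}^{3/2}/K$ is not what comes out; the paper simply bounds $\|T(I,k)\|_F^2\leq 2(S-1)$ (difference of two rank-$(S-1)$ projections) and gets $\E\|v\|_2^2\leq \tfrac{S}{K}(4\eps^2+\tfrac{1}{24})$, the assumption $S\leq\min\{K/(98B),1/(98\nsigma^2)\}$ absorbing the approximation-error and noise terms. More importantly, you cannot use only the first form of Theorem~\ref{vectorbernstein}: with $M=2\sqrt{B+1}$ and $m_2$ as above, the range condition $t'\leq m_2/(M+m_1)$ fails near $\eps=\tfrac{1}{32\sqrt S}$ unless $B$ is tiny. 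The paper applies the general form with the $\min$, and the factor $\max\{S,B\}^{3/2}$ in the final exponent arises precisely from merging the two branches of that $\min$ --- it is not the second moment itself.
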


\begin{proof}
	Throughout the proof we will use the abbreviations $B=\|\dico\|_{2,2}^2$ and $\bar B=\|\pdico\|_{2,2}^2$. We apply Theorem~\ref{vectorbernstein} to $v_n=R^o(\pdico, y_n, k)-R^o(\dico, y_n, k)$ and drop the index $n$ for conciseness.
	From Lemma B.8 in \citep{sc15} we know that $v = T(I,k) y \cdot \sigma(k) \cdot  \chi(I,k)$, where $T(I,k):=P(\dico_{I})-P(\pdico_{I})- P(\atom_k) + P(\patom_k)$, and that 
	\begin{align}
	\E(v) &= \frac{C_r \gamma_{1,S}}{K}\: {K\!-\!1 \choose S\!-\!1}^{-1} \sum_{|I|=S, k\in I} \big[P(\patom_k) -P(\pdico_{I})\big]\atom_k.
	\end{align}
	Using the orthogonal decomposition $\atom_k = [P(\patom_k) + Q(\patom_k)]\atom_k$, where $P(\patom_k)Q(\patom_k)=0$, we get
	\begin{align}
	\E(v) &= \frac{C_r \gamma_{1,S}}{K}\: {K\!-\!1 \choose S\!-\!1}^{-1} \sum_{|I|=S, k\in I} - P(\pdico_{I})Q(\patom_k)\atom_k. 
	\end{align}
	Since the perturbed dictionary $\pdico$ is well-conditioned and incoherent, for most $I$ the subdictionary $\pdico_I$ will be a quasi isometry and
	$P(\pdico_{I}) \approx \pdico_I\pdico_I^\star$. We therefore expand the expectation above, using the abbreviation $p_{K,S}= {K - 1 \choose S-1}^{-1}$, as
	\begin{align*}
	\frac{K}{C_r \gamma_{1,S}}\:\E(v) &=  p_{K,S} \left( \sum_{|I|=S, k\in I} \big[ \pdico_I\pdico_I^\star - P(\pdico_{I})]Q(\patom_k)\atom_k -
	\sum_{|I|=S, k\in I} \pdico_{I\backslash k }\pdico_{I\backslash k }^\star Q(\patom_k)\atom_k\right) \notag \\
	&= p_{K,S} \left( \sum_{|I|=S, k\in I}\big[ \pdico_I\pdico_I^\star - P(\pdico_{I})]Q(\patom_k)\atom_k -
	{K\!-\!2 \choose S\!-\!2} \sum_{j\neq k} \patom_j \patom_j^\star Q(\patom_k)\atom_k\right) \notag \\
	&= p_{K,S} \sum_{|I|=S, k\in I} \big[ \pdico_I\pdico_I^\star - P(\pdico_{I})]Q(\patom_k)\atom_k -
	\frac{S\!-\!1}{K\!-\!1}(\pdico\pdico^\star - \patom_k \patom_k^\star )Q(\patom_k)\atom_k\notag \\
	&= p_{K,S}\sum_{|I|=S, k\in I \atop
		\delta(\pdico_I)\leq \delta_0} \big[ \pdico_I\pdico_I^\star - P(\pdico_{I})]Q(\patom_k)\atom_k   \notag \\
	&\hspace{2cm}+ p_{K,S} \sum_{|I|=S, k\in I \atop
		\delta(\pdico_I)> \delta_0} \big[ \pdico_I\pdico_I^\star - P(\pdico_{I})]Q(\patom_k)\atom_k  - \frac{S\!-\!1}{K\!-\!1}\pdico\pdico^\star  Q(\patom_k)\atom_k.
	\end{align*}
	Since for $\patom_k = \alpha_k\atom_k+\omega_k z_k$ we have $\|Q(\patom_k)\atom_k\|_2 = \omega_k \leq \eps$, we can bound the norm of the expectation above as 
	\begin{align}\label{Ev1}
	\|\E(v)\|_2 &\leq \frac{C_r \gamma_{1,S}}{K}\: \left[ \delta_0 + \P\big(\delta(\pdico_I)> \delta_0 \big| |I|=S, k\in I\big) \cdot (\bar B\!+\!1) + \frac{(S\!-\!1)\bar B }{K\!-\!1}\right] \eps.
	\end{align}
	To estimate the probability of a subdictionary being ill-conditioned we use Chretien and Darses's results on the conditioning of random subdictionaries, which are slightly cleaner and thus easier to handle than the original results by Tropp, \citep{tr08}. Theorem~3.1 of \citep{chda12} reformulated for our purposes and applied to $\pdico$ states that 
	\begin{align}
	\P\big(\delta(\pdico_I)> \delta_0 \big| |I|=S\big) \leq 216 K \exp\left( - \min\left\{ \frac{\delta_0}{2 \mu(\pdico)}, \frac{\delta_0^2 K}{4e^2 S \bar B}\right\}\right).
	\end{align}
	Together with the union bound,
	\begin{align}
	\P\big(\delta(\pdico_I)> \delta_0 \big| |I|=S, k\in I\big)\leq \frac{K}{S} \cdot \P\big(\delta(\pdico_I)> \delta_0 \big| |I|=S\big),
	\end{align}
	this leads to 
	\begin{align}\label{Ev2}
	\|\E(v)\|_2 &\leq \frac{C_r \gamma_{1,S}}{K}\: \left[ \delta_0 +  \frac{216 K^2 (\bar B\!+\!1)}{S} \exp\left( - \min\left\{ \frac{\delta_0}{2 \mu(\pdico)}, \frac{\delta_0^2 K}{4e^2 S \bar B}\right\}\right) + \frac{S \bar B}{K}\right] \eps.
	\end{align}
	Choosing $\delta_0=3/10$, as long as $\bar B \leq \frac{K}{134e^2 S \log K} -1$ and $\mu(\pdico) \leq \frac{1}{20\log K} $ we have 
	\begin{align}
	\|\E(v)\|_2 &\leq 0.308  \cdot \frac{C_r \gamma_{1,S}}{K} \cdot\eps ,
	\end{align}
	where we used that $S\geq 2$ and $\log K\geq 7$.
	The second quantity we need to bound is the expected energy of $v= T(I,k) y \cdot \sigma(k) \cdot  \chi(I,k)$. Combining Eqs.~(115-118) from Lemma B.8 in \citep{sc15} we get that
	\begin{align}
	\E(\|v\|_2^2) &\leq\E_{p}\left(\chi(I,k)\left[4 \gamma_{2,S}\eps^2 + \left( \frac{B(1-\gamma_{2,S})}{K\!-\!S}+\nsigma^2\right)  \|T(I,k)\|^2_F \right]\right) .
	\end{align}
	Since we are only interested in the regime $\eps>O(1/\sqrt{S})$ we will accept an additional factor $S$ in the final sample complexity in return for a crude but painless estimate. Concretely, we use that $T(I,k)$ is the difference of two orthogonal projections onto subspaces of dimension $S\!-\!1$, namely $P(\dico_I)-P(\atom_k)$ and $P(\pdico_I)-P(\patom_k)$. This leads to the bound $\|T(I,k)\|^2_F \leq 2(S\!-\!1)\leq 2S$ and we get
	\begin{align}
	\E(\|v\|_2^2) &\leq \frac{S}{K}\left(4 \gamma_{2,S}\eps^2 + \frac{2BS}{K\!-\!S}(1-\gamma_{2,S})+2S\nsigma^2 \right)
	\leq \frac{S}{K} (4\eps^2 + 1/24),
	\end{align}
	where for the second inequality we have used the assumption $ S\leq \min \{\frac{K}{98B}, \frac{1}{98\nsigma^2}\}$.\\ 
	Combining the estimates for $\| \E(v)\|_2$ and $\E(\|v\|_2^2)$ with the norm bound $\|v\|_2\leq 2\sqrt{B\!+\!1}$,
	we get that for $\eps\geq\frac{1}{32\sqrt{S}}$ and 
	$0\leq t \leq 1/8$ 
	\begin{align}
	&\P\left(\frac{1}{N} \left\| \sum_n \left[R^o(\pdico, y_n, k)-R^o(\dico, y_n, k) \right]\right\|_2 \geq \frac{C_r\gamma_{1,S}}{K}(0.308\eps+ t\eps) \right)\notag\\
	&\hspace{3cm} \leq \exp\left(- \frac{NC_r\gamma_{1,S}t\eps}{8K} \min\left\{ \frac{C_r\gamma_{1,S}t\eps}{S(4 \eps^2 + 1/24 )} ,\frac{1}{\eps +2\sqrt{B\!+\!1}}\right\} +\frac{1}{4}\right)\notag\\
	&\hspace{3cm} \leq \exp\left(- \frac{NC^2_r\gamma^2_{1,S}t^2\eps}{8K} \min\left\{\frac{1 }{S(4\eps +(24\eps)^{-1})} ,\frac{1}{3t \gamma_{1,S}\sqrt{B\!+\!1}}\right\} +\frac{1}{4}\right)\notag\\
	&\hspace{3cm} \leq \exp\left(- \frac{NC^2_r\gamma^2_{1,S}t^2\eps}{8K\max\{S,B\}} \min\left\{ \frac{ 1}{4 \eps + (24\eps)^{-1}} ,\frac{1}{3t\sqrt{2}}\right\} +\frac{1}{4}\right)\notag\\
	&\hspace{3cm} \leq \exp\left(- \frac{NC^2_r\gamma^2_{1,S}t^2 \eps}{12K\max\{S,B\}^{\frac{3}{2}} } +\frac{1}{4}\right).\notag
	\end{align} 
\end{proof}

\newpage
\section{Pseudocode }\label{sec:app_code}

\begin{algorithm}[h]
	\small
	\BlankLine
	\caption{ITKrM augmented for {\bf r}eplacement/{\bf a}daptivity - one iteration} \label{algo:itkrmplus}
	\BlankLine
	\BlankLine
	\nlset{\bf a/r}\KwIn{$\pdico, Y, S, \Gamma, M $ \tcp*{dictionary, signals, sparsity, candidates, \\minimal observations (only for adaptive)}} 
	
	Set: $m =\round{\log d}$, $N_\Gamma = \floor{N/m}$\;
	
	Initialise: $\ppdico = 0$, $\bar{\Gamma} = 0$, $\bar S = 0$\;
	\BlankLine
	\ForEach{$ n $}{
		\BlankLine
		\tcp{basic ITKrM steps}
		$I_{n}^t= \arg\max_{I: | I |=S} \| \pdico_I^\star y_n\|_1$ \tcp*{thresholding}
		$ x_n = \pdico_{I_n^t}^\dagger y_n$ \tcp*{sparse coefficients}
		$ a_n = y_n - \pdico_{I_n^t}x_n$ \tcp*{residual}
		\BlankLine
		\nlset{\bf r}$\tau =0$ \tcp*{simple counter for replacement}
		\nlset{\bf a}$\tau = \left(2\log\big(\frac{2N}{M}\big) \|a_n\|_2^2 +\|\pdico_{I_n^t}x_n\|_2^2\right)/d$ \tcp*{advanced counter for adaptivity}
		\BlankLine
		\ForEach{$k \in I_{n}^t$}{
			$\ppatom_{k} \leftarrow \ppatom_{k} + \big[a_n + P(\patom_k) y_n\big] \cdot \signop(\ip{\patom_k}{y_n})$ \tcp*{atom update}
			\BlankLine
			\If{$|x_n(k)|^2 \geq  \tau$}{
				$v(k)\leftarrow v(k)+1$ \tcp*{atom value update}
			}
		}
		\BlankLine
		\tcp{steps for replacement candidates}
		$i_n=\arg\max_\ell |\ip{\ratom_\ell}{a_n}|$ \tcp*{residual thresholding}
		$\bar \ratom_{i_n}  \leftarrow \bar \ratom_{i_n} + a_n \cdot \signop(\ip{\ratom_{i_n}}{a_n})$ \tcp*{candidate update}
		\BlankLine
		\nlset{\bf r}$\tau_\Gamma = 2\log(2K)/d$ \tcp*{simple counter for replacement}
		\nlset{\bf a}$\tau_\Gamma = 2\log(\frac{2N_\Gamma}{d})/d$ \tcp*{advanced counter for adaptivity}
		\BlankLine
		\If{$|\ip{\ratom_{i_n}}{a_n}|^2\geq \tau_\Gamma \,\| a_n \|_2^2 $}{
			$v_\Gamma(i_n)\leftarrow v_\Gamma(i_n)+1$ \tcp*{candidate value update} 
		}
		\BlankLine
		\If{$n\pmod {N_\Gamma} == 0 \wedge  n < mN_\Gamma$}{
			$\Gamma \leftarrow \left( \bar\ratom_{1}/\| \bar\ratom_{1} \|_2, \dots, \bar\ratom_{L}/\| \bar\ratom_{L} \|_2 \right)$ \tcp*{candidate normalisation}
			$\bar \Gamma = 0$ \tcp*{cand.~iteration restart}
			\nlset{\bf a}$v_\Gamma = 0$ \tcp*{skip for replacement}
		}
		\BlankLine
		\nlset{\bf a}\tcp{steps for estimating sparsity level, skip for replacement}
		\nlset{\bf a}$ \theta = \left(2\log(4K)\|a_n\|_2^2 + \|\pdico_{I_n^t}x_n\|_2^2\right)/d$\;
		\nlset{\bf a}$\bar S \leftarrow \bar S + \sharp\{k : |x_n(k)|^2 \geq  \theta \}$ \tcp*{correct atoms}
		\nlset{\bf a}$\bar S \leftarrow \bar S + \sharp\{k : |\ip{\patom_k}{a_n}|^2 \geq  \theta\}$ \tcp*{missed atoms}
	}
	\BlankLine
	$\pdico \leftarrow \left( \ppatom_{1}/\| \ppatom_{1} \|_2, \dots, \ppatom_{\natoms}/\| \ppatom_{\natoms} \|_2 \right)$ \tcp*{atom normalisation}
	\nlset{\bf a}$\bar S \leftarrow \round{\bar S/N}$ \tcp*{average sparsity level, skip for replacement}
	\BlankLine
	\nlset{\bf a/r}\KwOut{$\pdico, v, \Gamma, v_\Gamma,\bar{S}$\tcp*{estimated sparsity only for adaptivity} }
\end{algorithm}

\begin{algorithm}[p]
	\small
	\BlankLine
	\caption{Replacing coherent atoms ({\bf d}elete, {\bf m}erge, {\bf a}dd)} \label{algo:replacement}
	\BlankLine
	\KwIn{$\pdico, v, \Gamma, v_\Gamma, \mu_{\max}$\tcp*{dict., score, cand., cand.score, threshold}}
	\BlankLine
	Reorder $\Gamma, v_\Gamma$ s.t. $ v_\Gamma(1)\geq v_\Gamma(2) \geq \ldots \geq v_\Gamma(L)$\;
	Find: $(k,k')=\arg\max_{i < j}|\ip{\patom_i}{\patom_{j}}| $\tcp*{most coherent atom pair}
	\BlankLine
	\While{$|\ip{\patom_{k}}{\patom_{k'}}| > \mu_{\max} \wedge \Gamma \neq []$}{
		
		$h = \signop(\ip{\patom_k}{\patom_{k'}})$\;
		\nlset{\bf d} $\patom_{\bar k} = \round{\frac{v(k')}{v(k')+v(k)}} \patom_{k'} + h \round{\frac{v(k)}{v(k')+v(k)}} \patom_{k}$ \tcp*{delete}
		\nlset{\bf m} $\patom_{\bar k} = v(k') \patom_{k'} + h v(k) \patom_{k} $ \tcp*{or merge}
		\nlset{\bf a} $\patom_{\bar k} = \patom_{k'} + h \patom_{k'} $ \tcp*{or add}
		\BlankLine
		
		$\patom_{\bar k} \leftarrow \patom_{\bar k}/\|\patom_{\bar k}\|_2$\;
		Find $\Lambda = \{\ell: \mu_\ell := \max_{i\neq k,k'}\absip{\gamma_\ell}{\patom_i}>|\ip{\patom_{k}}{\patom_{k'}}|\}$\;
		$\Gamma_\Lambda \leftarrow [],  v_\Gamma(\Lambda)\leftarrow []$\tcp*{discard coherent candidates}
		\BlankLine
		\If{$\Gamma \neq []$}{
			$\patom_k \leftarrow \patom_{\bar k}$\tcp*{replace with merged atom}
			$v(k) \leftarrow v(k)+v(k')$ \tcp*{update score of merged atom}
			$\patom_{k'} \leftarrow \gamma_1$\tcp*{replace with most useful candidate}
			\eIf{$\mu_1<\mu_{max}$}{
				$v(k') = v_\Gamma(1)$\tcp*{update with candidate score}} 
			{$v(k') = 0$\tcp*{preferably replaced again}}
			$\gamma_1\leftarrow [],  v_\Gamma(1) \leftarrow []$\tcp*{discard used candidate}
		} 
		Find: $(k,k')=\arg\max_{i < j}|\ip{\patom_i}{\patom_{j}}|$ \tcp*{update most coherent atom pair}
	}
	\BlankLine
	\KwOut{$\pdico, v, \Gamma, v_\Gamma$}
\end{algorithm}

\begin{algorithm}[p]
	\small
	
	\BlankLine
	\caption{Pruning coherent atoms ({\bf m}erge)} \label{algo:prune_coherent}
	\BlankLine
	\KwIn{$\pdico, V=(v_1,\ldots,v_K),\mu_{\max}$\tcp*{dictionary, last $m$ scores, threshold}} 
	\BlankLine
	$\Delta = \emptyset$ \tcp*{initialise set of atoms to delete}
	$H =  \pdico^\star \pdico - I_K $ \tcp*{hollow Gram matrix in absolute}
	
	Find: $(k,k') = \arg \max_{i<j}|H(i,j)|$\tcp*{most coherent atom pair}
	\BlankLine
	\While{$|H(k,k')| > \mu_{\max}$}{
		\BlankLine
		\nlset{\bf m}$\patom_{k} \leftarrow v_{k'}(1) \patom_{k'} + \signop(\ip{\patom_k}{\patom_{k'}}) v_k(1) \patom_{k}$  \tcp*{merge according to most recent score}
		$\patom_{k} \leftarrow \patom_k/\|\patom_k\|_2$\;
		$v_k(1) \leftarrow v_{k'}(1)+ v_k(1)$ \tcp*{update most recent score}
		$\Delta \leftarrow \Delta \cup \{k'\}$\; 
		$H(k,\cdot)\leftarrow 0, H(k',\cdot)\leftarrow 0, H(\cdot,k)\leftarrow 0, H(\cdot,k')\leftarrow 0$ \tcp*{update hollow Gram matrix}
		\BlankLine
		Find: $(k,k') = \arg \max_{j<i}|H(i,j)|$ \tcp*{update most coherent atom pair}
	}
	$\pdico_\Delta \leftarrow [], V_\Delta \leftarrow []$\tcp*{delete atoms}
	\BlankLine
	\KwOut{$\pdico, V$}
\end{algorithm}

\begin{algorithm}[h]
	\small
	\BlankLine
	\caption{Pruning unused atoms \label{algo:prune_unused}}
	\BlankLine
	\KwIn{$\pdico, V=(v_1,\ldots,v_K), M , \delta$\tcp*{dictionary, last $m$ scores, threshold,\\
			maximally pruned atoms}} 
	\ForEach{$k$}{
		$\hat{v}(k) = \max_{i} v_{k} (i)$ \tcp*{maximum of last $m$ scores}
	}
	$\Delta = \{k: \hat{v}(k) < M\}$  \tcp*{atoms with max.scores below threshold}
	\BlankLine
	\If{$|\Delta| > \delta $}{
		Sort: $\hat{v}(i_1)\leq \hat{v}(i_2)\leq \dots \leq \hat{v}(i_K)$ \tcp*{sort max.score}
		$\Delta = \{i_1\ldots i_\delta\})$ \tcp*{$\delta$ atoms with smallest max.scores}
	}
	
	$\pdico_\Delta \leftarrow [], V_\Delta \leftarrow []$\tcp*{delete atoms}
	\BlankLine
	\KwOut{$\pdico, V$}
\end{algorithm}

\begin{algorithm}[h]
	\small
	\BlankLine
	\caption{Adding atoms \label{algo:add}}
	\BlankLine
	\KwIn{$\pdico, V, \Gamma, v_\Gamma, \mu_{\max}, M$}
	\BlankLine
	Sort: $v_\Gamma(i_1)\geq v_\Gamma(i_2)\geq \ldots\geq  v_\Gamma(i_L)$\tcp*{sort according to score}
	$\bar{L} = |\{\ell : v_\Gamma(\ell)\geq d\}|$\tcp*{score above $d$}
	\BlankLine
	\For{$\ell = 1\ldots \bar L$ }{
		\If{$\max_k \absip{\patom_k}{\gamma_{i_\ell}}\leq\mu_{\max}$}{ 
			$\pdico \leftarrow (\pdico, \gamma_{i_\ell})$ \tcp*{in order of score add if incoherent}
			$V\leftarrow (V, M \cdot \mathbf{1})$\tcp*{set last $m$ scores of added atoms to $M$}
		}
	}
	\BlankLine
	\KwOut{$\pdico, V$}
\end{algorithm}


\end{document}